%% file: main.tex
\documentclass{article}

\usepackage{microtype}
\usepackage{graphicx}
\usepackage{subcaption}
\usepackage{algorithm}
\usepackage{algorithmic}
\usepackage{booktabs} 

\usepackage{hyperref}
\usepackage[accepted]{icml2026}

\usepackage{amsmath}
\usepackage{amssymb}
\usepackage{mathtools}
\usepackage{amsthm}
\usepackage{thmtools}
\usepackage{xspace}
\usepackage{tikz}

\newcommand{\hyperregion}[3]{%
  \hyperref[#1]{\makebox[#2][l]{\rule[-0.5\dp\strutbox]{0pt}{#3}}}%
}

\usepackage[capitalize,noabbrev]{cleveref}
\usepackage{etoc}

\usepackage{xcolor}
\usepackage[framemethod=default]{mdframed}

\mdfdefinestyle{theoremshade}{
  linewidth=0pt,
  backgroundcolor=black!5,
  innertopmargin=4pt,
  innerbottommargin=4pt,
  innerleftmargin=4pt,
  innerrightmargin=4pt,
  skipabove=5pt,
  skipbelow=5pt,
}

\theoremstyle{plain}
\newtheorem{theorem}{Theorem}[section]
\newtheorem{proposition}[theorem]{Proposition}
\newtheorem{lemma}[theorem]{Lemma}

\theoremstyle{definition}
\newtheorem{definition}[theorem]{Definition}
\newtheorem{assumption}[theorem]{Assumption}

\theoremstyle{remark}
\newtheorem{remark}[theorem]{Remark}

\surroundwithmdframed[style=theoremshade]{assumption}
\surroundwithmdframed[style=theoremshade]{definition}
\surroundwithmdframed[style=theoremshade]{proposition}
\surroundwithmdframed[style=theoremshade]{theorem}
\surroundwithmdframed[style=theoremshade]{lemma}
\surroundwithmdframed[style=theoremshade]{corollary}

\usepackage[disable,textsize=tiny]{todonotes}

\icmltitlerunning{On the Learnability of Test-Time Adaptation: A Recovery Complexity Perspective}

\begin{document}

\twocolumn[
  \icmltitle{On the Learnability of Test-Time Adaptation: \\A Recovery Complexity Perspective}
  \begin{icmlauthorlist}
    \icmlauthor{Zhi Zhou}{nju}
    \icmlauthor{Ming Yang}{nju,nju-ai}
    \icmlauthor{Shi-Yu Tian}{nju,nju-ai}
    \icmlauthor{Kun-Yang Yu}{nju,nju-ai}
    \icmlauthor{Lan-Zhe Guo}{nju,nju-ist}
    \icmlauthor{Yu-Feng Li}{nju,nju-ai}
  \end{icmlauthorlist}
  \icmlaffiliation{nju}{State Key Laboratory of Novel Software Technology, Nanjing University, China}
  \icmlaffiliation{nju-ai}{School of Artificial Intelligence, Nanjing University, China}
  \icmlaffiliation{nju-ist}{School of Intelligence Science and Technology, Nanjing University, China}
  \icmlcorrespondingauthor{Yu-Feng Li}{liyf@nju.edu.cn}
  \icmlkeywords{Test-Time Adaptation, Learnability, Theory}
  \vskip 0.3in
]

\printAffiliationsAndNotice{}  

\begin{abstract}
Test-time adaptation (TTA) aims to adapt models to maintain reliable performance on non-stationary test streams without requiring labeled data. Despite its empirical success, the learnability of TTA under non-stationary streams remains unexplored. A key challenge is the lack of a principled theoretical framework that simultaneously aligns with the TTA objective and captures both continuously evolving distribution shifts and intrinsic information constraints. To address this gap, we propose the first theoretical framework for studying the learnability of TTA and introduce $(\epsilon,\delta)$-Recovery Complexity and $(\epsilon,\rho)$-TTA Learnability. Recovery complexity measures the post-shift time needed to maintain excess risk below a target level with high probability, and is further extended to TTA learnability, which measures the long-term reliability of TTA. Within this framework, we introduce a novel discrete surrogate for non-stationary test streams, enabling a unified and tractable analysis of both gradual and abrupt shifts. We derive order-wise matching lower and upper bounds on recovery complexity, revealing fundamental limits of TTA and an intrinsic adaptivity-information trade-off. These results provide unified learnability guarantees for TTA that complement regret-based analyses.
\end{abstract}

\etocdepthtag.toc{main}
\input{Introduction}

\input{Formula}

\input{Complexity}

\input{Learnability}

\input{Experiments}

\section{Related Work}

\subsection{Test-Time Adaptation}

Test-time adaptation~\cite{liang2025survey,wang25survey,xiao24survey} aims to adapt a source model to distribution shifts in test data without accessing source data. 
Classical TTA studies optimize prediction entropy~\cite{iclr21tent}, and following approaches~\cite{niu22eata,yuan2023robust,wang22cotta} improve adaptation stability. 
Another line of methods studies TTA in the open world, such as LAME~\cite{boudiaf2022parameter}, NOTE~\cite{nips22note}, ODS~\cite{zhou23ods}, SAR~\cite{niu2023towards}, etc.
Existing studies also extend TTA to MLLMs~\cite{jia2026vtbench,you2026pianist} and LLMs~\cite{shao2026lifting, zhou25rpc}, such as AdaPrompt~\cite{zhang24adaprompt}, TPT~\cite{shu22tpt}, TLM~\cite{tta25llm}, etc. 
Theoretical analyses on TTA include gradient-correlation and task-relation guarantees~\cite{sun2020ttt,liu21ttt}, 
error bounds on memory banks~\cite{zhang2023adanpc}, 
active sample selection and label queries~\cite{gui2024atta},
dynamic regret in non-stationary environments~\cite{zhang2024test}, 
provable collapse on Gaussian mixtures~\cite{hoang24PeTTA}, 
and anytime-valid shift-detection guarantees~\cite{bar24poem}.
However, these studies cannot provide a global learnability guarantee that jointly handles unlabeled supervision, non-stationary streams, and temporal correlation, making them less applicable in the real world. 

\subsection{Online Learning}

Online learning in non-stationary environments controls dynamic regret against time-varying comparators~\cite{zinkevich2003online, zhang2025non}, with subsequent work extending to path-length–adaptive~\cite{zhang2018adaptive}, strongly adaptive and parameter-free~\cite{hazan2009efficient, foster2017parameter}, and problem-dependent bounds~\cite{zhao2020dynamic,
zhao2024adaptivity} under convex losses. However, dynamic regret controls only average performance, not the per-step satisfaction targeted by our recovery complexity and learnability.

\section{Conclusion}

We introduced the first learnability framework for test-time adaptation under unlabeled non-stationary test streams, modeling global shifts via a quantized
Wasserstein approximation and local dependence via a $\phi$-mixing process. 
Within this framework, we defined $(\epsilon,\delta)$-recovery complexity and derived nearly matching minimax lower and upper bounds, revealing how proxy alignment, target
accuracy, and batch size govern recovery. 
Building on this, we introduced $(\epsilon,\rho)$-TTA learnability and connected it to dynamic regret, showing that persistent shifts force linear regret and that unlabeled
supervision incurs an intrinsic cumulative cost.  
Empirical results on synthetic and real-world benchmarks confirm our predictions.
We hope our framework and primary results will provide a principled foundation for future theoretical and algorithmic developments in test-time adaptation. 

\vspace{-1em}

\paragraph{Future work and Limitations.}
This work focuses on establishing a general learnability framework for test-time adaptation under unlabeled non-stationary test streams. 
Our analysis relies on several structural assumptions, which may not fully capture all real-world scenarios. 
While \cref{sec:experiments} validates our main predictions, a comprehensive empirical study and large-scale benchmarks are left for future work.

\section*{Acknowledgements}

This research was supported by the Jiangsu Science Foundation (BK20243012, BG2024036, BK20232003), the National Natural Science Foundation of China (Grant No.624B2068, 62576162),  the Fundamental Research Funds for the Central Universities (022114380023), and the ``111 Center'' (No.B26023).

\section*{Impact Statement}

This paper develops a theoretical framework for analyzing the learnability of test-time adaptation. 
Our results clarify when test-time adaptation is feasible and quantify the intrinsic limitations imposed by distribution shifts, temporal dependence, and imperfect proxy supervision. 
By introducing the notions of recovery complexity and learnability, this work provides conceptual guidance for the principled design and evaluation of future test-time adaptation algorithms. 
As the contributions are primarily theoretical and do not introduce new algorithms or systems, we do not anticipate direct negative societal impacts.

\bibliography{ref}
\bibliographystyle{icml2026}

\newpage
\appendix
\onecolumn
\makeatletter
\def\addcontentsline#1#2#3{%
  \addtocontents{#1}{\protect\contentsline{#2}{#3}{\thepage}{\@currentHref}}}
\makeatother
\etocdepthtag.toc{appendix}

\begingroup
\etocsettagdepth{main}{none}
\etocsettagdepth{appendix}{section}
\etocsettocstyle{%
  {\centering\large\bfseries\scshape Appendix Contents \par}%
  \medskip\noindent\rule{\linewidth}{0.8pt}\par\smallskip
}{%
  \par\noindent\rule{\linewidth}{0.4pt}\par\bigskip
}
\etocsetstyle{section}
  {}{}
  {%
    \noindent
    \textbf{Appendix~\etocnumber.}\enspace
    \etocname\nobreak\dotfill\nobreak\textbf{\etocpage}\par
    \smallskip
  }
  {}
\tableofcontents
\endgroup

\input{proof/Lemma-B-eff.tex}

\input{proof/Theorem-Minimax-LB.tex}
\input{proof/Theorem-Upper-Bound.tex}

\input{proof/Theorem-Learnability.tex}

\input{proof/Theorem-Regret.tex}
\input{proof/Appendix-Experiments.tex}


\end{document}

%% file: Introduction.tex
\section{Introduction}
\label{sec:introduction}

Deep learning models~\cite{he2016deep} often suffer significant performance degradation when encountering distribution shifts~\cite{gulrajani2021in}.
As a promising remedy, test-time adaptation (TTA) has emerged as a novel paradigm that aims to adapt models to maintain reliable performance using only unlabeled test data, and has demonstrated empirical success across a wide range of domains, including computer vision~\cite{iclr21tent,wang22cotta}, tabular modeling~\cite{zhou25ftta, he2026priorfree, ren2024tablog}, and natural language processing~\cite{liu2025testtime}. 
Despite this progress, recent studies~\cite{zhao23pitfalls, niu2023towards, zhou23ods} have shown that TTA methods can fail under complex distribution shifts, motivating a theoretical understanding of when TTA can remain reliable over non-stationary test streams.

Although there exist theoretical studies for learning with streams~\cite{shai12online, zhang2023adanpc}, they are not fully compatible with TTA. 
Online learning~\cite{bottou99online,shai12online,zhang18dol,zhao2024adaptivity} typically evaluates algorithms through regret against fixed or time-varying comparators, emphasizing cumulative performance rather than the post-shift recovery and instantaneous reliability required by TTA. 
Several TTA studies also provide theoretical analyses.
AdaNPC~\cite{zhang2023adanpc} derives performance bounds for memory-based non-parametric classifier, while ATTA~\cite{gui2024atta} studies adaptation in settings where labels can be actively acquired. 
Their assumptions on the algorithms or label availability limit their applicability to real-world TTA problems.

These limitations highlight a fundamental gap between existing theoretical results and the requirements of TTA under unlabeled non-stationary test streams.
Specifically, (i) TTA emphasizes instantaneous reliability, focusing on how quickly a model can recover and maintain satisfactory performance after a distribution shift; and (ii) the information constraints imposed by unlabeled data, potentially misaligned surrogate supervision, and complex non-stationarity are not adequately captured by existing learning frameworks. 
Together, these challenges call for a novel framework that introduces TTA-specific measures and captures non-stationarity. 

In this work, we propose the first principled theoretical framework for studying the learnability of test-time adaptation.
Specifically, we develop a unified model for complex non-stationary test streams by combining a Wasserstein-quantized surrogate for distribution shifts in \cref{ass:distribution-shift} with a $\phi$-mixing model for temporal dependence in \cref{ass:phi-mixing}.
This stream model captures both global distributional evolution and local temporal dependence, enabling information-theoretic and optimization-based analyses of TTA.
Building on this framework, we introduce the notion of \emph{$(\epsilon,\delta)$-Recovery Complexity}, which quantifies the post-shift time needed for a TTA algorithm to maintain excess risk below a target level with high probability.
Then, we establish a formal transfer from \emph{$(\epsilon,\delta)$-Recovery Complexity} to \emph{$(\epsilon,\rho)$-TTA Learnability}, allowing us to study the long-term reliability of TTA algorithms over entire non-stationary data streams. 
The resulting order-wise matching problem-level lower bound and algorithm-level upper bound on recovery complexity reveal intrinsic information limits of TTA and a fundamental trade-off between recovery speed and target excess risk. 
Finally, we connect our \emph{$(\epsilon,\rho)$-TTA Learnability} to dynamic regret, clarifying both the relationship and the fundamental differences between our framework and regret-based analyses in non-stationary online learning. The overall illustration of this paper is shown in \cref{fig:overview}. 
To summarize, the contribution of this paper is as follows:

\begin{enumerate}
\setlength{\itemsep}{0pt}
\setlength{\parsep}{0pt}
\setlength{\topsep}{0pt}
\setlength{\partopsep}{0pt}
    \item We propose the first principled theoretical framework for studying the learnability of test-time adaptation, introducing \emph{$(\epsilon,\delta)$-Recovery Complexity} and \emph{$(\epsilon,\rho)$-TTA Learnability} to measure TTA from local post-shift and global stream-level perspectives.
    \item We develop a unified model for complex non-stationary test streams by combining a Wasserstein-quantized surrogate for distribution shifts with a $\phi$-mixing model for temporal dependence, enabling tractable analyses. 
    \item We derive an order-wise matching problem-level minimax lower bound and algorithm-level upper bound on recovery complexity, revealing intrinsic limits and the fundamental trade-off of TTA. 
    \item We establish a formal connection between our \emph{$(\epsilon,\rho)$-TTA Learnability} and dynamic regret, elucidating both their relationship and differences between our framework and existing regret-based analyses.
\end{enumerate}

\begin{figure*}[t]
    \vskip 0.2in
    \begin{center}
        \begin{tikzpicture}
            \node[anchor=south west, inner sep=0] (img) at (0,0)
                {\includegraphics[width=0.9\linewidth]{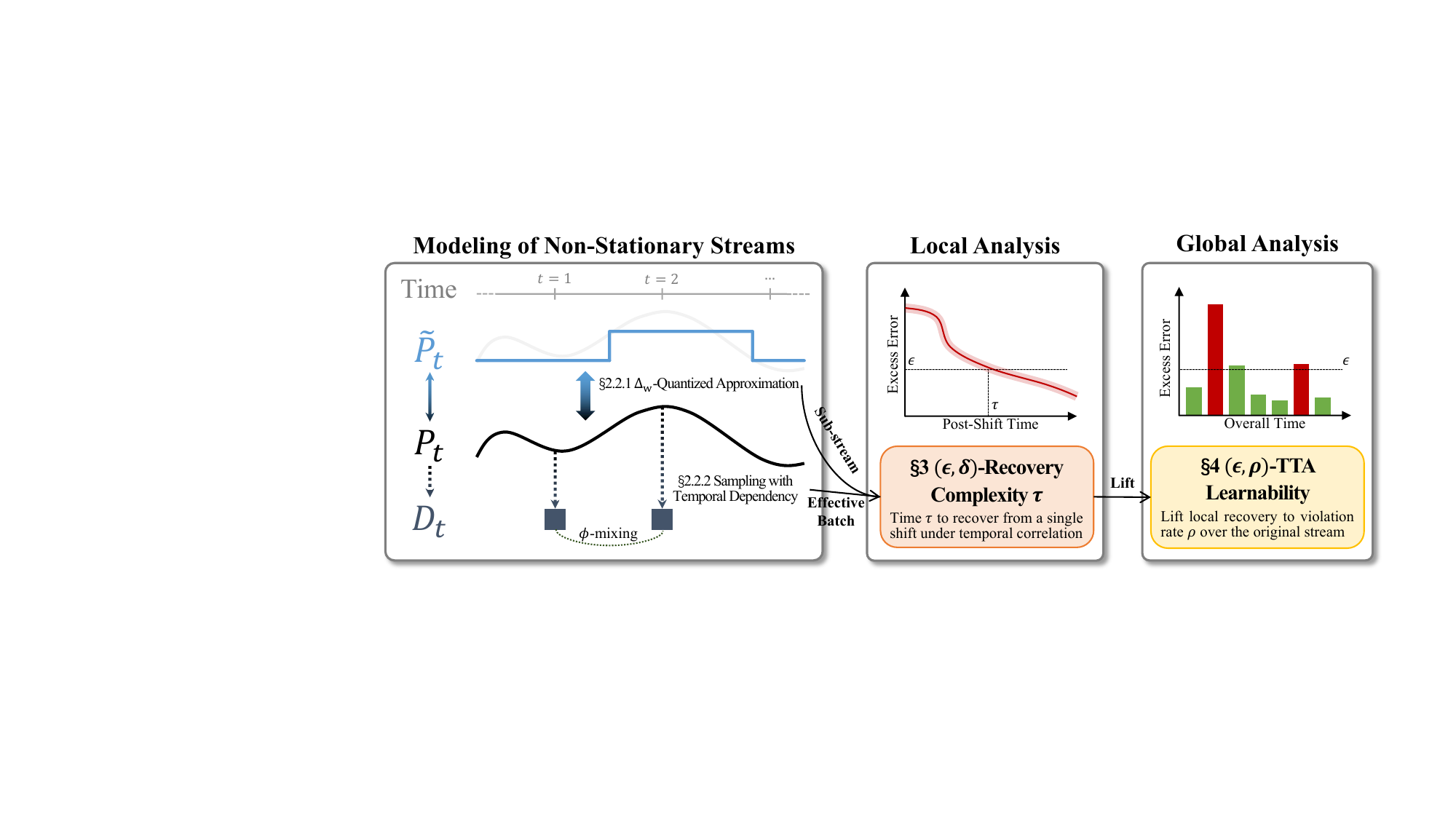}};
            \begin{scope}[x={(img.south east)}, y={(img.north west)}]
                \node[anchor=south west, inner sep=0] at (0.2, 0.53)
                    {\hyperregion{sec:ds}{0.17\linewidth}{0.25cm}};
                \node[anchor=south west, inner sep=0] at (0.272, 0.18)
                    {\hyperregion{sec:tc}{0.10\linewidth}{0.4cm}};
                \node[anchor=south west, inner sep=0] at (0.54, 0.21)
                    {\hyperregion{sec:complexity}{0.135\linewidth}{0.6cm}};
                \node[anchor=south west, inner sep=0] at (0.82, 0.21)
                    {\hyperregion{sec:learnability}{0.1125\linewidth}{0.6cm}};
            \end{scope}
            \end{tikzpicture}
        \caption{Overview of the theoretical modeling and analysis in this paper.
        A unified model of non-stationary test streams underpins a local analysis via $(\epsilon,\delta)$-Recovery Complexity, which is lifted to a global $(\epsilon,\rho)$-TTA Learnability.}
        \label{fig:overview}
    \end{center}
    \vskip -0.2in
\end{figure*}

%% file: Formula.tex
\section{Test-Time Adaptation Formulation}
\label{sec:problem-formula}

In this section, we provide the formulation of test-time adaptation problem, including assumptions on the proxy and task losses, the model of the non-stationary test stream, and a competitive objective. 
Our formulation aims to establish a clear foundation for the subsequent analysis.

\subsection{Test-Time Adaptation Problem}

Let $\mathcal{X} \subseteq \mathbb{R}^d$ denote the input space and $\mathcal{Y} = \{1, \dots, K\}$ the label space, where $d$ is the input dimensionality and $K$ is the number of classes.
We consider a model $f_{\theta_1}: \mathcal{X} \to \mathcal{Y}$ pre-trained on the training distribution, with $\theta_1 \in \Theta$ denoting the initial model parameters.
During the testing phase, the model is deployed to a non-stationary test data stream $\mathcal{S} = \{D_t\}_{t=1}^T$, where each batch $D_t$ contains samples drawn from an underlying joint data distribution $\mathcal{P}_t$ on $\mathcal{X}\times\mathcal{Y}$ at time $t$.
Here, we write $\mathcal{P}_{t,X}$ for the input marginal of $\mathcal{P}_t$.
At each time step $t$, the current model $f_{\theta_t}$ processes the incoming batch $D_t$ and updates its parameters from $\theta_{t}$ to $\theta_{t+1}$ using an unsupervised proxy loss $\psi$, without access to the ground-truth labels or the supervised task loss $\ell$.
The objective of TTA problem is to adjust the model parameters in response to  distribution shifts within $\mathcal{S}$ in a timely and reliable manner, thereby improving the performance on the entire test data stream. 

As discussed in previous TTA studies~\cite{iclr21tent}, the effectiveness of TTA relies heavily on the alignment between the proxy loss $\psi$ and the task loss $\ell$. Therefore, we formalize this requirement by introducing the following assumption on alignment and gradient bias locally.

\begin{assumption}[$(\alpha,\zeta)$-Alignment between Proxy and Task Losses]
\label{ass:alignment}
There exists a radius $r>0$ such that within the local neighborhood
$\mathcal N_r(\theta_1)=\{\theta:\|\theta-\theta_1\|\le r\}$,
the expected proxy loss $\psi_t(\theta) = \mathbb{E}_{x\sim\mathcal{P}_{t,X}}[\psi(\theta; x)]$
and the expected task loss $\ell_t(\theta) = \mathbb{E}_{(x,y)\sim\mathcal{P}_t}[\ell(\theta; x, y)]$ satisfy
\begin{equation}
    \langle \nabla \psi_t(\theta), \nabla \ell_t(\theta) \rangle
    \;\ge\;
    \alpha \|\nabla \ell_t(\theta)\|^2 - \zeta,
\end{equation}
for any $\theta\in\mathcal N_r(\theta_1)$ and some constants $\alpha>0$ and $\zeta\ge 0$.
\end{assumption}

\begin{assumption}[Regularity of the Loss]
\label{ass:loss}
There exists a radius \(r>0\) such that within
\(\mathcal N_r(\theta_1)=\{\theta:\|\theta-\theta_1\|\le r\}\), the task loss \(\ell\) is \(L\)-smooth and satisfies the Polyak--\L{}ojasiewicz condition with parameter \(\mu>0\), while the proxy loss \(\psi\) is \(L\)-smooth and \(\mu\)-strongly convex. Both \(\ell(\theta;\cdot)\) and \(\psi(\theta;\cdot)\) are \(L_x\)-Lipschitz with respect to the data, and \(\nabla\psi(\theta;\cdot)\) is \(L_{\nabla\psi}\)-Lipschitz with respect to the data, i.e.,
\begin{equation}
\|\nabla\psi(\theta;x)-\nabla\psi(\theta;x')\|
\le L_{\nabla\psi}\|x-x'\|,
\end{equation}
for all \(\theta\in \mathcal N_r(\theta_1)\) and all \(x,x'\).
The stochastic gradients of both \(\ell\) and \(\psi\) have variance bounded by \(\sigma^2\) and norm bounded by \(G\), and the iterates \(\{\theta_t\}_{t\ge1}\) produced by the TTA algorithm under analysis remain in \(\mathcal N_r(\theta_1)\).
\end{assumption}

\begin{remark}
In \cref{ass:alignment}, \(\alpha>0\) ensures that the proxy gradient is sufficiently aligned with a descent direction for the task loss, while \(\zeta\) accounts for imperfections of the proxy loss. This assumption guarantees that the proxy gradients provide a constructive signal for reducing the task loss, which is the foundation for TTA to be feasible. 
\cref{ass:loss} imposes standard local assumptions on the losses and gradients, which make the subsequent recovery and learnability analysis tractable. Both assumptions are stated on the same neighborhood \(\mathcal N_r(\theta_1)\), consistent with the local proxy-optimal competitor in \cref{def:competitive-target}.
\end{remark}

However, even with the local Assumptions~\ref{ass:alignment} and \ref{ass:loss} on $\mathcal N_r(\theta_1)$, establishing a theoretical foundation for TTA problem remains challenging due to two fundamental issues: 

\begin{enumerate}
  \setlength{\itemsep}{0pt}
  \setlength{\parsep}{0pt}
  \setlength{\topsep}{0pt}
  \setlength{\partopsep}{0pt}
  \item[(a)] \textbf{Limited Modeling of Test Streams}: Data streams in the real world exhibit complex forms of non-stationarity~\cite{zhao23pitfalls}. However, existing formalizations of stream dynamics~\cite{garg20ls, sugi08cov, zhou23ods} either lack sufficient expressiveness to capture the full structure of continuously evolving and temporally dependent streams, or lead to models that are analytically intractable for deriving sharp theoretical guarantees. 
  \item[(b)] \textbf{Vagueness in Target Objective}: TTA algorithms adapt to a non-stationary test stream, requiring high accuracy at each time step~\cite{wang22cotta}. This blind pursuit of performance has led to the neglect of whether the TTA task is well-defined and learnable.
\end{enumerate}
Therefore, to build theoretical foundations for TTA problem, 
we first formalize the test stream by measuring its underlying distribution shifts in a global view and capturing dependency structure locally to take various types of non-stationarity into account in \cref{sec:stream-formula}, 
and then we define an achievable yet strong competitive target in \cref{sec:competitor} for the following theoretical analysis. 

\subsection{Formalization of Test Stream}
\label{sec:stream-formula}

Test data streams in the real world often encounter complex non-stationarity. 
On the global view, the environment may change over time, leading to shifts in the underlying data distribution, such as gradually changing lighting or rapidly fluctuating weather conditions~\cite{iclr19corruptions}. 
Meanwhile, consecutive data points may be highly correlated on a local view, leading to strong temporal dependencies in the data stream, such as objects appearing in consecutive video frames~\cite{geiger12kitti}. 

Existing TTA studies consider different types of test streams, including target distribution shifts in Tent~\cite{iclr21tent}, sequential distribution shifts in CoTTA~\cite{wang22cotta}, temporally correlated non-i.i.d. streams in NOTE~\cite{nips22note}, and open-world data shifts involving both covariate and label distribution shifts in ODS~\cite{zhou23ods}.
To account for the diversity of test streams, we model the dynamics of the test stream through two complementary dimensions: \emph{Distribution Shift} and \emph{Temporal Correlation}, which respectively capture global changes and local dependencies in the data stream. 
Specifically, we assume that each batch $D_t$ at time step $t$ is sampled from the underlying data distribution $\mathcal{P}_t$ at time $t$, influenced by the temporal correlation.

\subsubsection{Distribution Shift}
\label{sec:ds}
To unify the analysis of both gradual and abrupt distribution shifts, and to enable principled information-theoretic lower bounds, we adopt the Wasserstein-1 distance~\cite{panaretos2019wasserstein} on the joint space $\mathcal{X}\times\mathcal{Y}$ to discretize the distributional trajectory $\{\mathcal P_t\}_{t=1}^T$ into a bounded piecewise-constant surrogate that captures continuous and complex distribution shifts.

\begin{assumption}[$\Delta_W$-Quantized Distribution Shift]
\label{ass:distribution-shift}
The underlying data distributions $\{\mathcal P_t\}_{t=1}^T$ in the test stream satisfy bounded path variation
\begin{equation}
V_T := \sum_{t=1}^{T-1} W_1(\mathcal P_t,\mathcal P_{t+1}) < \infty,
\end{equation}
together with the single-step regularity condition
\begin{equation}
\label{eq:single-step-bound}
W_1(\mathcal P_t,\mathcal P_{t+1}) \le \Delta_W/2
\end{equation}
for any $t$, which ensures that the resolution $\Delta_W$ is no smaller than the largest single-step drift in the original stream.
Fix a resolution parameter $\Delta_W>0$. We assume there exists a piecewise-constant approximation $\{\tilde{\mathcal P}_t\}_{t=1}^T$ of underlying data distributions and shift indicators $\{\tilde S_t\}_{t=2}^T$ such that, for each $t=2,\dots,T$,
\begin{equation}
\begin{cases}
W_1(\tilde{\mathcal P}_{t-1},\tilde{\mathcal P}_{t}) = 0, & \tilde S_t=0,\\
0 \le W_1(\tilde{\mathcal P}_{t-1},\tilde{\mathcal P}_{t}) \le \Delta_W, & \tilde S_t=1,
\end{cases}
\end{equation}
and the approximation error is uniformly bounded:
\begin{equation}
\sup_{t\in[T]} W_1(\mathcal P_t,\tilde{\mathcal P}_t)\le \Delta_W/2. \label{eq:wasserstein-uniform}
\end{equation}
where $W_1(\cdot, \cdot)$ measures the Wasserstein-1 distance.
\end{assumption}

\begin{algorithm}[t]
\caption{Greedy Quantized Approximation Method}
\label{alg:greedy-quantization}
\begin{algorithmic}[1]
\REQUIRE Distribution $\{\mathcal P_t\}_{t=1}^T$, resolution $\Delta_W>0$
\ENSURE  Approximation $\{\tilde{\mathcal P}_t\}_{t=1}^T$,
shift indicators $\{\tilde S_t\}_{t=2}^T$
\STATE Set anchor index $a \leftarrow 1$
\STATE Set $\tilde{\mathcal P}_1 \leftarrow \mathcal P_1$
\FOR{$t=2$ to $T$}
    \IF{$W_1(\mathcal P_t,\mathcal P_a)\le \Delta_W/2$}
        \STATE $\tilde{\mathcal P}_t \leftarrow \mathcal P_a$
        \STATE $\tilde S_t \leftarrow 0$
    \ELSE
        \STATE $\tilde S_t \leftarrow 1$ \COMMENT{Declare a shift}
        \STATE $a \leftarrow t$ \COMMENT{Start a new sub-stream}
        \STATE $\tilde{\mathcal P}_t \leftarrow \mathcal P_t$
    \ENDIF
\ENDFOR
\end{algorithmic}
\end{algorithm}

\begin{proposition}[Greedy Approximation and Shift-Count Bound]
\label{prop:greedy-shiftcount}
Under \cref{ass:distribution-shift}, for any $\Delta_W>0$, \cref{alg:greedy-quantization} produces a piecewise-constant approximation $\{\tilde{\mathcal P}_t\}_{t=1}^T$ that satisfies the uniform discrepancy bound \cref{eq:wasserstein-uniform}.
Moreover, letting $\tilde K_S(T):=\sum_{t=2}^{T}\mathbb I[\tilde S_t=1]$ be the number of shifts,
we have the shift-count upper bound
\begin{equation}
\label{eq:shiftcount-bound}
\tilde K_S(T)\;\le\;\left\lceil \frac{2V_T}{\Delta_W}\right\rceil.
\end{equation}
\end{proposition}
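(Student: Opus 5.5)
We read off both claims from the invariant maintained by \cref{alg:greedy-quantization}. At every time $t$ the current anchor $a=a(t)\le t$ satisfies $\tilde{\mathcal P}_t=\mathcal P_{a(t)}$. Either $a(t)=t$, in which case $\tilde{\mathcal P}_t=\mathcal P_t$, or $\tilde S_t=0$, in which case the test on line~4 gives $W_1(\mathcal P_t,\mathcal P_{a(t)})\le\Delta_W/2$. In both cases $W_1(\mathcal P_t,\tilde{\mathcal P}_t)\le\Delta_W/2$, which is \cref{eq:wasserstein-uniform}.

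We also check the structural conditions of \cref{ass:distribution-shift}. When $\tilde S_t=0$ the anchor is unchanged, so $\tilde{\mathcal P}_{t-1}=\tilde{\mathcal P}_t$ and their $W_1$ distance is zero. When $\tilde S_t=1$ we have $\tilde{\mathcal P}_t=\mathcal P_t$ and $\tilde{\mathcal P}_{t-1}=\mathcal P_{a}$ with $W_1(\mathcal P_{t-1},\mathcal P_a)\le\Delta_W/2$. The triangle inequality together with the single-step bound \cref{eq:single-step-bound} then gives $W_1(\tilde{\mathcal P}_{t-1},\tilde{\mathcal P}_t)\le \Delta_W/2+\Delta_W/2=\Delta_W$. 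So the output is a valid piecewise-constant surrogate.

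For the shift count, let $1=a_0<a_1<\dots<a_K\le T$ be the anchors, with $K=\tilde K_S(T)$. For each $k\ge1$, a shift is declared at $a_k$ exactly because $W_1(\mathcal P_{a_k},\mathcal P_{a_{k-1}})>\Delta_W/2$. The triangle inequality along the path $a_{k-1},a_{k-1}+1,\dots,a_k$ gives
\begin{equation*}
\frac{\Delta_W}{2}< W_1(\mathcal P_{a_{k-1}},\mathcal P_{a_k})\le \sum_{s=a_{k-1}}^{a_k-1} W_1(\mathcal P_s,\mathcal P_{s+1}).
\end{equation*}
The index blocks $[a_{k-1},a_k-1]$ are disjoint subsets of $\{1,\dots,T-1\}$. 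Summing over $k=1,\dots,K$ therefore yields $K\Delta_W/2< V_T$, hence $K<2V_T/\Delta_W\le\lceil 2V_T/\Delta_W\rceil$. If $K=0$ the bound is trivial.

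The only step needing care is the shift-count bound. One must compare each new anchor with the previous anchor, not with the immediately preceding time step, and then check that the telescoping blocks do not overlap. This is what makes the per-shift drift charge of $\Delta_W/2$ sum to at most $V_T$.
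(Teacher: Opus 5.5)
Your proof is correct and follows essentially the same route as the paper. The anchor invariant gives the uniform $\Delta_W/2$ discrepancy, and the triangle inequality with the single-step bound gives the $\Delta_W$ jump at each shift; charging each trigger $W_1(\mathcal P_{a_k},\mathcal P_{a_{k-1}})>\Delta_W/2$ to the disjoint path blocks $[a_{k-1},a_k-1]$ then yields $\tilde K_S(T)\,\Delta_W/2<V_T$, exactly as in the paper's argument.
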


\begin{remark}
Proposition~\ref{prop:greedy-shiftcount} shows that \cref{alg:greedy-quantization} always produces a piecewise-constant approximation valid under \cref{ass:distribution-shift}, and links the total path variation $V_T$ of the original stream to the shift count of the approximation, allowing us to quantify the discrepancy between the stream and its approximation.
\end{remark}

\subsubsection{Temporal Correlation}
\label{sec:tc}

The temporal correlation in the test data stream~\cite{nips22note} is difficult to formalize as it may arise from diverse and unknown dependency
structures across time. 
Rather than directly modeling such correlations, we adopt a tractable surrogate and measure the strength of temporal dependence by assuming that the test stream is generated by a stochastic process satisfying the $\phi$-mixing condition~\cite{bradley1986} with a mixing coefficient $\phi(i)$.


\begin{assumption}[$\phi$-Mixing Temporal Dependency]
\label{ass:phi-mixing}
The test stream $\{D_t\}_{t=1}^T$ is generated by a stochastic process
whose temporal dependence is controlled by uniformly decaying $\phi$-mixing coefficients. Let $\mathcal{H}_t=\{D_1,\dots,D_t\}$ be the past history up to time $t$,
and let $\mathcal{H}^{T}_{t+i}=\{D_{t+i},\dots,D_T\}$ be the future from time $t+i$ to $T$.
Define the $\phi$-mixing coefficient
\begin{equation}
\phi(i)
=
\sup_{t}
\sup_{\substack{
P \in \sigma(\mathcal{H}_t), \\
Q \in \sigma(\mathcal{H}_{t+i}^{T}),\;
\mathbb{P}(P)>0
}}
\bigl| \mathbb{P}(Q \mid P) - \mathbb{P}(Q) \bigr|.
\end{equation}
Moreover, we assume a geometric decay condition of $\phi(i)$, i.e., there exists $\varrho\in[0,1)$ such that
\begin{equation}
\phi(i)\;\le\;\varrho^{\,i}, \forall i \geq 1. \label{eq:phi-geo}
\end{equation}
Here, $\varrho \in [0,1)$ is the geometric decay parameter of the mixing coefficients. 
\end{assumption}

\begin{remark}
Assumption~\ref{ass:phi-mixing} characterizes temporal dependence through mixing coefficients.
Smaller $\phi(i)$ indicates weaker temporal correlations between events separated by lag $i$.
The geometric decay condition~\cref{eq:phi-geo} is a standard strengthening that enables
closed-form constants depending only on $\varrho$ for simplicity.
\end{remark}

With \cref{ass:phi-mixing}, the following \cref{prop:B-eff} shows how temporal correlation affects one of the key factors of TTA, i.e., the batch size $B$. 

\begin{proposition}[Effective Batch Size Reduction]
\label{prop:B-eff}
Consider a batch of samples collected from a single time window of a $\phi$-mixing process satisfying \cref{ass:phi-mixing}.
We define $B_\text{eff}$ as the size of an i.i.d.\ batch whose batch-mean gradient matches the resulting variance upper bound $\sigma^2/B_\text{eff}$ of the temporally correlated batch of size $B$. Then,
\begin{equation}
B_{\text{eff}}
:= \frac{B}{C_\phi}
\;\le\; B,
\end{equation}
where $C_\phi := 1 + \dfrac{4\,\varrho^{1/2}}{1-\varrho^{1/2}}$ and $\varrho$ is the geometric decay parameter from \cref{eq:phi-geo}.
\end{proposition}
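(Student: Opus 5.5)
The plan is to bound the variance of the batch-mean gradient directly and then read off $B_{\text{eff}}$ from its definition. Fix $\theta$ and a window of $B$ consecutive samples $x_1,\dots,x_B$ from the $\phi$-mixing process of \cref{ass:phi-mixing}. Write $g_i=\nabla\psi(\theta;x_i)-\mathbb E[\nabla\psi(\theta;x_i)]$, so that $\bar g=\frac1B\sum_i g_i$. By \cref{ass:loss}, $\mathbb E\|g_i\|^2\le\sigma^2$. Expanding the square gives
\begin{equation*}
\mathbb E\|\bar g\|^2=\frac{1}{B^2}\Bigl(\sum_{i}\mathbb E\|g_i\|^2+2\sum_{i<j}\mathbb E\langle g_i,g_j\rangle\Bigr).
\end{equation*}
The diagonal terms contribute at most $\sigma^2/B$, so everything reduces to bounding the cross-covariances in terms of the lag $j-i$.

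The key tool is the classical covariance inequality for $\phi$-mixing sequences (Ibragimov; see \cite{bradley1986}). If $U$ is measurable with respect to the past, $V$ with respect to the future at lag $k$, and both have finite second moments, then
\begin{equation*}
|\mathrm{Cov}(U,V)|\le 2\,\phi(k)^{1/2}\,\|U\|_2\|V\|_2 .
\end{equation*}
Apply it coordinatewise to $\langle g_i,g_j\rangle=\sum_m g_i^{(m)}g_j^{(m)}$. Then use Cauchy--Schwarz across coordinates, $\sum_m\|g_i^{(m)}\|_2\|g_j^{(m)}\|_2\le(\mathbb E\|g_i\|^2)^{1/2}(\mathbb E\|g_j\|^2)^{1/2}\le\sigma^2$. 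Together these give $|\mathbb E\langle g_i,g_j\rangle|\le 2\phi(j-i)^{1/2}\sigma^2\le 2\varrho^{(j-i)/2}\sigma^2$, using \cref{eq:phi-geo}.

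Summing over pairs, each lag $k\ge1$ occurs $B-k\le B$ times. Hence
\begin{equation*}
2\sum_{i<j}|\mathbb E\langle g_i,g_j\rangle|\le 4\sigma^2 B\sum_{k\ge1}\varrho^{k/2}=4\sigma^2B\,\frac{\varrho^{1/2}}{1-\varrho^{1/2}}.
\end{equation*}
Combining with the diagonal terms,
\begin{equation*}
\mathbb E\|\bar g\|^2\le\frac{\sigma^2}{B}\Bigl(1+\frac{4\varrho^{1/2}}{1-\varrho^{1/2}}\Bigr)=\frac{\sigma^2 C_\phi}{B}.
\end{equation*}
Matching this to $\sigma^2/B_{\text{eff}}$ gives $B_{\text{eff}}=B/C_\phi$. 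Since $C_\phi\ge1$ for $\varrho\in[0,1)$, it follows that $B_{\text{eff}}\le B$. In the case $\varrho=0$ the cross terms vanish and we recover $B_{\text{eff}}=B$.

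The main obstacle is the covariance step. The $\phi$-mixing coefficient in \cref{ass:phi-mixing} is defined over batches $D_t$, while the statement concerns individual samples within a single time window. I would handle this by taking the natural reading that samples within the window are indexed by the same mixing process, so the lag between $x_i$ and $x_j$ is $j-i$. I would also need the exact constant in the mixing covariance inequality: the bound $2\phi^{1/2}$ for $L^2$ variables is what yields the factor $4$ in $C_\phi$ after doubling over ordered pairs. If only a cruder inequality is available, for example $4\phi\,G^2$ using bounded norms, the constant would change. The stated $C_\phi$ indicates that the $L^2$ version, $2\phi^{1/2}$, is the intended one, so I would cite it in that form.
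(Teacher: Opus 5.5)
Your proposal is correct and matches the paper's proof step for step. Both expand the second moment of the centered batch mean, bound each cross term by applying the $\phi$-mixing covariance inequality $|\mathrm{Cov}(U,V)|\le 2\phi^{1/2}\|U\|_2\|V\|_2$ coordinatewise followed by Cauchy--Schwarz over coordinates, then bound the lag multiplicity $B-k$ by $B$ and sum the geometric series to obtain $C_\phi$. Your sample-level reading of the mixing within a window is also the one the paper adopts implicitly, since it treats the window $\{Z_1,\dots,Z_B\}$ itself as the $\phi$-mixing sequence.
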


\begin{remark}
This proposition shows that temporal correlation affects the batch size in the gradient-based model update. 
Specifically, when batches in $\mathcal{S}$ are temporally independent, then we have $C_\phi=1$ and $B_{\text{eff}}=B$. Otherwise, as temporal correlation increases $C_\phi>1$, the effective batch size $B_{\text{eff}}$ reduces. 
This highlights the key challenge in TTA with strong temporal correlation, i.e., the effective batch size is significantly reduced.
\end{remark}

\subsection{Formalization of Competitive Target}
\label{sec:competitor}

In prior TTA studies~\cite{iclr21tent,zhou23ods}, algorithms are typically evaluated by their absolute risk on the test stream, implicitly assuming that perfect adaptation is achievable. However, in such an unsupervised problem~\cite{jia2026quantitave}, the proxy loss and task loss are generally misaligned, which fundamentally limits the attainable performance. 

To make the TTA objective well-defined, we introduce a proxy-optimal competitor and its associated competitive target risk, which provide a realistic oracle.

\begin{definition}[Proxy-Optimal Competitor and Competitive Target]
\label{def:competitive-target}
For each time step $t$, define the proxy-optimal parameter within a local neighborhood of the initial parameters $\theta_1$ as
\begin{equation}
    \theta_t^{\star}
    \in
    \arg\min_{\theta \in \mathcal N_r(\theta_1)}
    \psi_t(\theta),
\end{equation}
where $\mathcal N_r(\theta_1)=\{\theta\in\Theta:\|\theta-\theta_1\|\le r\}$ denotes an $r$-radius neighborhood of $\theta_1$ and $\psi_t(\theta) = \mathbb{E}_{x\sim\mathcal{P}_{t,X}}[\psi(\theta; x)]$, with $r$ chosen so that $\theta_t^{\star}$ is interior.
The corresponding competitive target risk is defined as
\begin{equation}
    R_t := \ell_t(\theta_t^{\star}).
\end{equation}
where $\ell_t(\theta) = \mathbb{E}_{(x,y)\sim\mathcal{P}_t}[\ell(\theta; x, y)]$.
\end{definition}

\begin{remark}
This local competitor reflects the best proxy-optimal model reachable within an $r$-radius neighborhood of the initialization $\theta_1$, reflecting the constrained adaptation of practical TTA algorithms (e.g., Tent~\cite{iclr21tent} updates only batch-normalization statistics, and CoTTA~\cite{wang22cotta} employs parameter restoration). 
This is also consistent with Assumptions~\ref{ass:alignment} and \ref{ass:loss}, which are unlikely to hold over the entire parameter space.
Moreover, the competitive target is evaluated using the supervised task loss, providing a principled upper bound on the achievable performance of TTA problem.
\end{remark}

%% file: Complexity.tex
\section{$(\epsilon, \delta)$-Recovery Complexity}
\label{sec:complexity}

As specified in Assumption~\ref{ass:distribution-shift}, we adopt a unified modeling framework for a broad class of distribution shifts in the test stream $\mathcal{S}$.
This framework enables us to analyze a non-stationary test stream in a piecewise-stationary manner, by studying the constructed approximation $\{\tilde{\mathcal{P}}_t\}_{t=1}^T$ instead of the underlying data-generating distributions $\{\mathcal{P}_t\}_{t=1}^T$.

With the help of this framework, we first investigate the theoretical properties of TTA under a single stationary sub-stream induced by a distribution shift in this section, and subsequently extend the analysis to the entire non-stationary stream in \cref{sec:learnability}.

Specifically, we introduce the notion of $(\epsilon,\delta)$-recovery complexity to measure the minimum number of time steps required for a TTA algorithm to reduce its excess risk below $\epsilon$ with probability at least $1-\delta$ after a distribution shift, assuming the post-shift distribution is stationary.

For simplicity, we focus on a single distribution shift occurring at time $t=1$, where the TTA algorithm is initialized with a parameter $\theta_1$ well optimized under some pre-shift distribution. The following definition formalizes the $(\epsilon,\delta)$-recovery complexity.

\begin{definition}[$(\epsilon, \delta)$-Recovery Complexity]
\label{def:recovery-complexity}
Consider a test-time adaptation problem in which an algorithm $\mathcal{A}$ starts from an initial parameter $\theta_1$.
A distribution shift occurs at time $t=1$, and the distribution remains stationary thereafter.

For any target excess error $\epsilon>0$ and a constant failure probability budget $\delta$,
the \emph{$(\epsilon,\delta)$-Recovery Complexity} of algorithm $\mathcal{A}$,
denoted by $\tau(\epsilon, \delta)$, is defined as
\begin{equation}
\tau(\epsilon, \delta)
~:=~
\inf\Bigl\{\,t\in\mathbb{N}:\ \sup_{u\ge t}\mathbb{P}\!\left(\mathcal{E}_u>\epsilon\right)\le \delta\,\Bigr\},
\end{equation}
where $\mathcal{E}_t := \ell_t(\theta_t) - R_t$ denotes the excess risk at time $t$. That is, $\tau$ is the smallest time after which the marginal failure probability stays uniformly $\le\delta$.
\end{definition}

\begin{remark}
The $(\epsilon,\delta)$-recovery complexity is a meaningful performance metric for test-time adaptation problems for two main reasons.
First, unlike standard online learning settings~\cite{shai12online} that focus on average performance, TTA requires the model to maintain satisfactory performance throughout the test stream. 
An algorithm that achieves low average error but exhibits long periods of poor performance is undesirable in practice. 
The recovery complexity explicitly quantifies the duration during which the model’s excess risk exceeds a pre-defined threshold, thereby measuring and controlling the length of unsatisfactory periods in the test stream $\mathcal{S}$.
Second, in realistic test-time scenarios, the data stream is generally non-stationary and may undergo multiple distribution shifts.
Since long-term stationarity cannot be assumed, recovery-based metrics, such as $(\epsilon,\delta)$-recovery complexity, provide a more appropriate metric for evaluating the effectiveness of TTA algorithms.
\end{remark}

\subsection{Problem-Level Minimax Lower Bound}
\label{sec:minimax-lower-bound}

In this section, we explore the fundamental limits of test-time adaptation problem. Specifically, we investigate the conditions under which recovery from a distribution shift is possible for any TTA algorithm operating in the stochastic proxy-gradient oracle model, and we derive the corresponding minimax lower bounds on recovery complexity.
Here, the oracle model provides only unlabeled batch-level proxy-gradient estimates, not labels or task-loss values.

To this end, we adopt the classical two-point testing method \cite{le2012asymptotic} to derive minimax lower bounds on the $(\epsilon,\delta)$-recovery complexity from an information-theoretic perspective. The following theorem delineates the regimes in which recovery from distribution shift is provably impossible, and illustrates how key factors govern the intrinsic difficulty of recovery in test-time adaptation. 

\begin{theorem}[Minimax Lower Bound on $(\epsilon,\delta)$-Recovery Complexity]
\label{thm:minimax-lower-bound}
Suppose that Assumptions~\ref{ass:alignment}, \ref{ass:loss}, \ref{ass:distribution-shift}, and \ref{ass:phi-mixing} hold for the test-time adaptation problem, all stated locally over $\mathcal N_r(\theta_1)$. 

For any target excess error $\epsilon>0$, failure probability budget $\delta \in (0, 0.5)$, and shift budget $\Delta_W \ge 2\sqrt{\zeta/\alpha+2\epsilon}+2\sqrt{\zeta/\alpha}$,
the $(\epsilon,\delta)$-Recovery Complexity $\tau(\epsilon, \delta)$ for any TTA algorithm operating in the stochastic proxy-gradient oracle model must satisfy
\begin{equation}
    \tau \;\ge\;
    \Omega\!\left(
        \frac{C_{\phi}}{B} \cdot 
        \frac{1}
        {\alpha \bigl(\sqrt{\zeta + 2 \alpha \epsilon} + \sqrt{\zeta}\bigr)^2}
    \right)
\end{equation}
where the hidden constant may depend on $\delta$ and $\sigma$.
\end{theorem}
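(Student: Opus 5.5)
We argue by Le Cam's two-point method. The plan is to build two post-shift environments $P^{(0)}$ and $P^{(1)}$ that satisfy Assumptions~\ref{ass:alignment}--\ref{ass:phi-mixing} on $\mathcal N_r(\theta_1)$ and have two properties. First, their proxy-optimal competitors $\theta^\star_{(0)}$ and $\theta^\star_{(1)}$ are separated enough that no single parameter has excess risk $\le\epsilon$ under both. Second, the proxy-gradient oracle observations they induce are statistically close over $t$ steps, unless $t$ exceeds the claimed bound.

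\textbf{Construction.} Take a scalar (one-dimensional) quadratic instance. The proxy loss is $\psi(\theta;x)=\tfrac{\mu}{2}(\theta-x)^2$, which is strongly convex, smooth and Lipschitz in $x$ on the ball. The task loss is $\ell(\theta;x,y)=\tfrac{1}{2}(\theta-m)^2$, where $m$ encodes the label-dependent optimum. We deliberately misalign the two losses. Under $P^{(j)}$ the input mean is $c_j$, so the proxy optimum is $\theta^\star_{(j)}=c_j$. The task optimum $m_j$ is offset from $c_j$ by the largest amount the alignment inequality $\langle\nabla\psi_t,\nabla\ell_t\rangle\ge\alpha\|\nabla\ell_t\|^2-\zeta$ allows.

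A direct computation (normalising $\mu$ into $\alpha$) shows the worst admissible offset gives $\ell_t(\theta^\star)-\ell_t(\theta_{\rm task}^\star)$ of order $\zeta/\alpha$. The set of parameters with excess risk $\le\epsilon$ is then a ball of radius $\sqrt{\zeta/\alpha+2\epsilon}$ around the task optimum, which sits at distance $\sqrt{\zeta/\alpha}$ from $c_j$. So if the two proxy means satisfy
\[
|c_0-c_1|\;\ge\;\gamma:=\sqrt{\zeta/\alpha+2\epsilon}+\sqrt{\zeta/\alpha},
\]
the two $\epsilon$-good sets are disjoint. Since $W_1$ between the two environments is at most $|c_0-c_1|$ plus the label part, the hypothesis $\Delta_W\ge 2\gamma$ ensures both environments are legitimate post-shift distributions reachable from the same pre-shift state under Assumption~\ref{ass:distribution-shift}. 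This explains the threshold in the statement.

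\textbf{Information bound.} The oracle returns batch-mean proxy gradients $\theta-\bar x_t$. We take $x$ Gaussian with variance $\sigma^2$, generated as a $\phi$-mixing sequence (an AR-type process with geometric decay $\varrho$) whose batch-mean variance attains $\sigma^2 C_\phi/B=\sigma^2/B_{\rm eff}$. This matches Proposition~\ref{prop:B-eff} with equality up to constants, so the correlation factor is tight. The per-step KL divergence between the two oracle laws is then $O\bigl(B(c_0-c_1)^2/(C_\phi\sigma^2)\bigr)$, once the gradient scale from the alignment normalisation is included. Summed over $t$ steps via the chain rule, the total KL is $O\bigl(tB\alpha\gamma^2/(C_\phi\sigma^2)\bigr)$. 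Note that $\alpha\gamma^2=(\sqrt{\zeta+2\alpha\epsilon}+\sqrt\zeta)^2/\alpha$, so the $\alpha$ factor in the claimed bound has to emerge from this gradient-scale bookkeeping. By Le Cam and Pinsker (or Bretagnolle--Huber), any estimator $\theta_t$ fails the $\epsilon$-test with probability $>\delta$ under one of the two hypotheses unless the total KL exceeds a constant $c(\delta)$; this constant is where $\delta<1/2$ is used. Inverting gives $t\ge\Omega\bigl(C_\phi/(B\,\alpha(\sqrt{\zeta+2\alpha\epsilon}+\sqrt\zeta)^2)\bigr)$ after absorbing $\sigma^2$ and $\mu$. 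Because $\tau(\epsilon,\delta)$ requires failure probability $\le\delta$ at every $u\ge\tau$, in particular at $u=\tau$, this yields the bound.

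\textbf{Main obstacle.} The delicate step is the calibration of the misaligned construction: choosing the offset between proxy and task optima so that Assumption~\ref{ass:alignment} holds with exactly $(\alpha,\zeta)$ throughout $\mathcal N_r(\theta_1)$, while producing the precise scaling $\alpha(\sqrt{\zeta+2\alpha\epsilon}+\sqrt\zeta)^2$ in the KL. A secondary check is that the correlated Gaussian process actually attains the $C_\phi$ inflation rather than merely being bounded by it. I would first try the scalar quadratic family with the task gradient rescaled by $1/\alpha$, and then verify the alignment inequality by completing the square.
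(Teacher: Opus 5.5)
\textbf{Gap: the $\alpha$-dependence of the KL.} Your skeleton is the paper's: Le Cam two-point on a scalar quadratic, disjoint $\epsilon$-good sets, a Gaussian oracle with variance $\sigma^2C_\phi/B$, then Pinsker. The step that should produce the $\alpha$-dependence fails, though.

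With the oracle you actually specify, $\theta-\bar x_t$ with $\mathrm{Var}(x)=\sigma^2$, the gradient means differ by $|c_0-c_1|$. The per-step KL is then $B(c_0-c_1)^2/(2C_\phi\sigma^2)$, with no $\alpha$ anywhere.

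Unit proxy curvature also breaks your offset calibration. The alignment condition reads $(1-\alpha)u^2+du+\zeta\ge 0$ with $u=\theta-m$, $d=m-c$. Once the neighbourhood contains the minimising $u$, this forces $d^2\le 4(1-\alpha)\zeta$, not $d^2=\zeta/\alpha$. At $\zeta=0$ your instance gives only $\tau\gtrsim C_\phi\sigma^2/(B\epsilon)$, with no $1/\alpha^2$ at all. That is unavoidable: a proxy of unit curvature is as informative as the task loss.

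The extra factor $\alpha$ you attribute to ``normalisation'' is never derived. Even granting your total KL of order $tB\alpha\gamma^2/(C_\phi\sigma^2)$, the identity you invoke is wrong. In fact $\alpha\gamma^2=(\sqrt{\zeta+2\alpha\epsilon}+\sqrt\zeta)^2$, not that quantity divided by $\alpha$. Inverting therefore yields only $\tau\gtrsim C_\phi\sigma^2/(B(\sqrt{\zeta+2\alpha\epsilon}+\sqrt\zeta)^2)$, a factor $1/\alpha$ short of the claim. The claim needs a KL of order $tB\alpha^2\gamma^2/(C_\phi\sigma^2)$.

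\textbf{The missing idea: decouple signal from noise.} The hard proxy must have curvature $\Theta(\alpha)$, so its gradient mean moves only $\Theta(\alpha)$ per unit of parameter separation. Its per-sample gradient noise must stay at the full budget $\sigma^2$ of Assumption~\ref{ass:loss}.

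The paper takes $\nabla\psi(\theta;x)=\alpha(\theta-x)+\xi$ with $x=m_i+U$, $\mathrm{Var}(U)=\sigma^2/\alpha^2$, and a constant bias $\xi=\zeta/(2\Delta)$. Alignment then reduces to $\xi(\theta-m)\ge-\zeta$ on $\mathcal N_r(\theta_1)$. The oracle means differ by $\alpha\Delta$, so the KL is $\tau B\alpha^2\Delta^2/(2\sigma^2C_\phi)$. Taking $\Delta=2\gamma$ gives $\tau\ge\sigma^2C_\phi(1-2\delta)^2/(B\alpha^2\gamma^2)=\sigma^2C_\phi(1-2\delta)^2/(B\alpha(\sqrt{\zeta+2\alpha\epsilon}+\sqrt\zeta)^2)$.

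Your completing-the-square route also works once the scale is fixed. With proxy curvature exactly $2\alpha$, alignment becomes $\alpha(u+d)^2+\zeta-\alpha d^2\ge 0$, so $d=\sqrt{\zeta/\alpha}$ is admissible on all of $\mathbb R$. With data variance $\sigma^2/(4\alpha^2)$ the mean gap is $2\alpha|c_0-c_1|$, which gives the same order.

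\textbf{Two smaller fixes.}
\begin{enumerate}
\item Each $\epsilon$-good set lies within $\gamma$ of its own $c_j$, so disjointness needs $|c_0-c_1|\ge 2\gamma$, not $\gamma$. At $\zeta=0$ the good sets are intervals of radius $\gamma$ about $c_j$, which overlap at separation $\gamma$. The separation $2\gamma$ is exactly what the hypothesis $\Delta_W\ge 2\gamma$ pays for.
\item A Gaussian AR sequence is not $\phi$-mixing: conditioning on $\{X_0>M\}$ with $M\to\infty$ forces $\phi(n)=1$. Either use a bounded or finite-state chain, or, as the paper does, posit the Gaussian oracle channel with variance $\sigma^2C_\phi/B$ directly.
\end{enumerate}
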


\begin{remark}[Error Floor raised by $\zeta$]
The lower bound exhibits an explicit dependence on the target excess error $\epsilon$ and the misalignment term $\zeta$ through the term
$\alpha \bigl(\sqrt{\zeta + 2 \alpha \epsilon} + \sqrt{\zeta}\bigr)^2$. 
When $\zeta = 0$, corresponding to a perfectly aligned proxy loss, this term reduces to $\Theta(\alpha^2 \epsilon)$, yielding $\tau \geq \Omega \left( \frac{C_{\phi}}{B \alpha^2 \epsilon} \right)$, which matches the order of standard stochastic optimization lower bounds~\cite{Agarwal12sgd}. 
In contrast, when $\zeta > 0$, the lower bound retains a non-vanishing misalignment-induced component as $\epsilon \to 0$, reflecting an error floor induced by an imperfect proxy. 
This highlights that TTA is constrained by the misalignment. 
\end{remark}

\begin{remark}[Joint Effects of $B$ and $C_{\phi}$]
The linear dependence on the inverse batch size $1/B$, highlighting that the batch size is a key factor in TTA problem. 
Moreover, the appearance of the $C_{\phi}$ indicates that temporal correlation in the test stream effectively reduces informative samples in each batch. 
This shows that TTA is constrained jointly by batch size and temporal correlation. 
\end{remark}

\begin{remark}[Quadratic Dependence on $\alpha$]
The alignment strength $\alpha$ controls how strongly the proxy gradient reflects a descent signal for the task loss.
Since the information carried by noisy proxy-gradient observations scales quadratically with this signal strength, the recovery complexity scales as $1/\alpha^2$ in the well-aligned case.
Together with the matching upper bound in \cref{thm:upper-bound}, stronger proxy-task alignment thus quadratically reduces the recovery time.
\end{remark}

\begin{remark}[No Explicit Dependence on $\Delta_W$]
The minimax lower bound does not explicitly depend on the distribution shift magnitude $\Delta_W$. 
This is because the lower bound is established via a constructed hard instance that already satisfies the assumed constraint on $\Delta_W$. 
Once the distribution shift is sufficient to move the model away from the current optimum, the intrinsic difficulty of TTA is governed by the above-discussed factors rather than the magnitude of the shift. 
Our \cref{thm:minimax-lower-bound} is non-vacuous only when the distribution shift is large enough that adaptation is required to achieve the target excess error $\epsilon$. 
If $\Delta_W$ is sufficiently small, the target accuracy may be satisfied without adaptation, in which case any lower bound becomes vacuous. 
\end{remark}

\subsection{Algorithm-Level Upper Bound}
\label{sec:upper-bound}

In the previous section, we established a minimax lower bound on the $(\epsilon,\delta)$-recovery complexity for any TTA algorithm in the stochastic proxy-gradient oracle model in \cref{thm:minimax-lower-bound}, illustrating the fundamental difficulty of TTA.
To complement this problem-level lower bound and assess its tightness, we analyze a simple TTA baseline.

Specifically, we derive an upper bound on the $(\epsilon,\delta)$-recovery complexity for a simple yet representative TTA baseline in \cref{def:tta-baseline}. 
The theoretical results in \cref{thm:upper-bound} serve two purposes: (a) it quantifies the achievable recovery complexity of practical algorithms; (b) it reveals the remaining gap between the minimax lower bound and actual algorithmic performance, thereby highlighting potential directions for future TTA studies.

\begin{definition}[TTA Baseline]
\label{def:tta-baseline}
The \emph{TTA baseline} operates on a test data stream of batches $\{D_t\}_{t=1}^T$ as follows. 
At each time step $t$, given the current model parameters $\theta_t$, the algorithm:
(1) provides predictions on the batch $D_t$ based on $\theta_t$;
(2) updates the parameters $\theta_t$ via a single stochastic gradient step on the proxy loss $\theta_{t+1} \leftarrow \theta_t - \eta \nabla \hat{\psi}_t(\theta_t)$, 
where $\nabla \hat{\psi}_t(\theta_t)$ denotes a stochastic mini-batch estimator of $\nabla\psi_t(\theta_t)$ computed from $D_t$, and $\eta>0$ is a fixed learning rate. As is standard in stochastic-gradient analyses, we treat $\nabla \hat{\psi}_t(\theta_t)$ as conditionally unbiased given the past, with effective variance bounded as in \cref{prop:B-eff}.
\end{definition}

\begin{theorem}[Upper Bound on $(\epsilon,\delta)$-Recovery Complexity]
\label{thm:upper-bound}
Suppose that Assumptions~\ref{ass:alignment}, \ref{ass:loss}, \ref{ass:distribution-shift}, and \ref{ass:phi-mixing} hold for the test-time adaptation problem, all stated locally over $\mathcal N_r(\theta_1)$. 
The initial excess error of $\theta_1$ on the pre-shift distribution $\mathcal P^-$ is bounded by $\epsilon$.

For any target excess error $\epsilon>0$ and a constant failure probability budget $\delta \in (0, 0.5)$,
if the alignment bias satisfies $\frac{\zeta}{\alpha \mu} \le \frac{\epsilon \delta}{2}$ and the canonical regime $B G^2 \le \sigma^2 C_\phi$ holds,
then the $(\epsilon,\delta)$-Recovery Complexity $\tau(\epsilon, \delta)$ for the TTA baseline in \cref{def:tta-baseline} with an appropriately chosen learning rate $\eta$ satisfies

\begin{equation}
\tau \leq \mathcal{O}\left( \frac{C_\phi}{B \alpha^2 \epsilon} \log \left( \frac{\Delta_W + \epsilon}{\epsilon} \right) \right).
\end{equation}

Here, the constants $L$, $\mu$, $L_x$, $\sigma$, and the failure probability budget $\delta$ are treated as problem-dependent constants and are omitted in the Big-O notation. The precise dependence on $\delta$ is detailed in \cref{eq:upper-bound-with-delta} of the proof.
\end{theorem}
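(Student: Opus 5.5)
The plan is to track the task excess risk $\mathcal{E}_t=\ell_t(\theta_t)-R_t$ along the SGD iterates of the baseline through a one-step descent inequality. We then unroll it into a geometric contraction plus a noise floor, and convert the resulting bound on $\mathbb{E}[\mathcal{E}_t]$ into the high-probability statement required by \cref{def:recovery-complexity} via Markov's inequality. After the shift at $t=1$ the distribution is stationary, so $\ell_t\equiv\ell$, $\psi_t\equiv\psi$ and $R_t\equiv R$ on the sub-stream.

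\textbf{One-step descent.} By $L$-smoothness of $\ell$ (\cref{ass:loss}),
\[
\ell(\theta_{t+1})\le \ell(\theta_t)-\eta\langle\nabla\ell(\theta_t),\nabla\hat\psi_t(\theta_t)\rangle+\tfrac{L\eta^2}{2}\|\nabla\hat\psi_t(\theta_t)\|^2 .
\]
We take the conditional expectation given the past, using the conditional unbiasedness from \cref{def:tta-baseline}. \cref{ass:alignment} lower bounds the inner product by $\alpha\|\nabla\ell\|^2-\zeta$. \cref{prop:B-eff} bounds the second moment by $\|\nabla\psi(\theta_t)\|^2+\sigma^2C_\phi/B$. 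The term $\|\nabla\psi\|^2$ is handled by $G^2$; in the canonical regime $BG^2\le\sigma^2C_\phi$ it is dominated by $\sigma^2C_\phi/B$, so the quadratic term is $\lesssim L\eta^2\sigma^2C_\phi/B$.

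Since $\theta^\star$ is only proxy-optimal, PL does not apply directly to $\mathcal{E}_t$. We instead use PL relative to the task minimum, $\|\nabla\ell\|^2\ge 2\mu(\ell(\theta)-\ell^{\min})$, and note $\ell^{\min}\le R$, so $\|\nabla\ell\|^2\ge2\mu\,\mathcal{E}_t$ whenever $\mathcal{E}_t\ge0$. Otherwise we work with the nonnegative quantity $(\mathcal{E}_t)_+$. This yields
\[
\mathbb{E}\,\mathcal{E}_{t+1}\le(1-2\alpha\mu\eta)\,\mathbb{E}\,\mathcal{E}_t+\eta\zeta+c\,L\eta^2\sigma^2C_\phi/B .
\]

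\textbf{Unrolling and tuning.} Unrolling gives
\[
\mathbb{E}\,\mathcal{E}_t\le(1-2\alpha\mu\eta)^{t-1}\mathcal{E}_1+\frac{\zeta}{2\alpha\mu}+\frac{cL\eta\sigma^2C_\phi}{2\alpha\mu B}.
\]
For the initial error we combine the pre-shift excess $\le\epsilon$ with the shift magnitude. By \cref{ass:distribution-shift} the post-shift distribution is within $W_1$-distance $O(\Delta_W)$, and $\ell$, $\psi$ are $L_x$-Lipschitz in data. Hence both $\ell(\theta_1)$ and $R$ move by $O(L_x\Delta_W)$, giving $\mathcal{E}_1\lesssim\epsilon+\Delta_W$.

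Markov's inequality gives $\mathbb{P}(\mathcal{E}_u>\epsilon)\le\mathbb{E}\,\mathcal{E}_u/\epsilon$, so it suffices to make $\mathbb{E}\,\mathcal{E}_u\le\epsilon\delta$ for all $u\ge t$. The bound is monotone in $u$, so the supremum in the definition is automatic. The bias term is $\le\epsilon\delta/4$ by the hypothesis $\zeta/(\alpha\mu)\le\epsilon\delta/2$. We choose $\eta\asymp\alpha\mu B\epsilon\delta/(L\sigma^2C_\phi)$ so that the noise term is $\le\epsilon\delta/4$, and we check $\eta\le1/(2\alpha\mu)$ for small $\epsilon$. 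Then $(1-2\alpha\mu\eta)^{t-1}(\epsilon+\Delta_W)\le\epsilon\delta/2$ holds once
\[
t\gtrsim\frac{1}{\alpha\mu\eta}\log\frac{\Delta_W+\epsilon}{\epsilon\delta}\asymp\frac{C_\phi}{B\alpha^2\epsilon}\log\frac{\Delta_W+\epsilon}{\epsilon},
\]
with $\delta$, $\mu$, $L$, $\sigma$ absorbed into constants. Recording these constants explicitly gives the $\delta$-dependent version of the bound.

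\textbf{Main obstacle.} The delicate step is the descent inequality. First, the excess risk is measured against the proxy-optimal competitor $R$ rather than the task minimum, so PL must be used through $\ell^{\min}\le R$ and $(\mathcal{E}_t)_+$, and this needs care when $\mathcal{E}_t$ becomes negative. Second, the variance must be controlled under $\phi$-mixing, which requires conditional unbiasedness plus \cref{prop:B-eff} and not independence. Third, we rely on the assumption that iterates stay in $\mathcal{N}_r(\theta_1)$ so that all the local assumptions apply.
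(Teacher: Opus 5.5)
Your route is the paper's route: smoothness descent, alignment, \cref{prop:B-eff}, PL through $\ell^{\inf}\le R$, unrolling, $\eta\asymp B\alpha\mu\epsilon\delta/(L\sigma^2C_\phi)$, Markov, and monotonicity in $u$. However, the Markov step fails as written.

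\textbf{The Markov step.} Because $R=\ell(\theta^\star)$ is the task risk at the \emph{proxy} optimum, $\mathcal{E}_u$ can be negative. The inequality $\mathbb{P}(\mathcal{E}_u>\epsilon)\le\mathbb{E}\,\mathcal{E}_u/\epsilon$ requires $\mathcal{E}_u\ge0$. Without that, a bound $\mathbb{E}\,\mathcal{E}_u\le\epsilon\delta$ is compatible with $\mathcal{E}_u>\epsilon$ on half the probability mass, provided the other half sits far below zero.

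Your $(\mathcal{E}_t)_+$ device does not rescue this. In the descent step it is unnecessary: $\|\nabla\ell\|^2\ge2\mu\,\mathcal{E}_t$ holds trivially when $\mathcal{E}_t<0$, so your recursion for $\mathbb{E}\,\mathcal{E}_t$ is fine. But that recursion controls $\mathbb{E}\,\mathcal{E}_u$, not $\mathbb{E}(\mathcal{E}_u)_+$. Jensen runs the wrong way for the convex map $x\mapsto x_+$, so the positive part cannot be pushed through the smoothness bound.

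The missing ingredient is a deterministic floor, which is part~(b) of \cref{lem:pl-relative-to-proxy}:
\begin{enumerate}
\item Interiority gives $\nabla\psi(\theta^\star)=0$, so \cref{ass:alignment} at $\theta^\star$ yields $\|\nabla\ell(\theta^\star)\|^2\le\zeta/\alpha$.
\item PL then gives $R-\ell^{\inf}\le\zeta/(2\alpha\mu)$, i.e.\ $\mathcal{E}_u\ge-a$ with $a:=\zeta/(2\alpha\mu)$.
\item Markov applied to $\mathcal{E}_u+a\ge0$ gives $\mathbb{P}(\mathcal{E}_u>\epsilon)\le(\mathbb{E}\,\mathcal{E}_u+a)/(\epsilon+a)$, so you need $\mathbb{E}\,\mathcal{E}_u\le\epsilon\delta-(1-\delta)a$.
\item The hypothesis $\zeta/(\alpha\mu)\le\epsilon\delta/2$ gives $a\le\epsilon\delta/4$, so $\mathbb{E}\,\mathcal{E}_u\le3\epsilon\delta/4$ suffices.
\end{enumerate}
This only tightens your transient budget to $\epsilon\delta/4$. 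Note that the feasibility hypothesis is therefore used twice: for the bias floor, and to pay for this shift in Markov.

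\textbf{The initial error.} A second, smaller gap is your claim that $R$ moves by $O(L_x\Delta_W)$. Data-Lipschitzness bounds $|\ell^+(\theta)-\ell^-(\theta)|\le L_xW_1$ only at a \emph{fixed} $\theta$. That handles $\ell(\theta_1)$, but $R=\ell(\theta^\star)$ also moves because the proxy optimum itself moves with the distribution. Value-Lipschitzness of $\psi$ plus strong convexity only yields $\|\theta^{\star+}-\theta^{\star-}\|\le2\sqrt{L_xW_1/\mu}$, not a linear rate.

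The paper closes this with \cref{lem:proxy-stability}:
\begin{enumerate}
\item Strong monotonicity of $\nabla\psi^+$ together with the $L_{\nabla\psi}$-Lipschitzness of $\nabla\psi$ in the data gives $\|\theta^{\star+}-\theta^{\star-}\|\le L_{\nabla\psi}W_1/\mu$.
\item $G$-Lipschitzness of $\ell$ in $\theta$ then gives $|R^+-R^-|\le\Lambda W_1$ with $\Lambda=L_x+GL_{\nabla\psi}/\mu$.
\item Hence $\mathcal{E}_1\le2\Lambda\Delta_W+\epsilon$.
\end{enumerate}
Alternatively, the sandwich $\ell^{\inf}\le R\le\ell^{\inf}+\zeta/(2\alpha\mu)$ from the fix above also works here. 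It gives $R^--R^+\le L_xW_1+\zeta/(2\alpha\mu)$ directly. Either way the logarithmic factor in the final bound is unchanged.
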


\begin{remark}[Feasibility Condition for $(\epsilon,\delta)$-Recovery]
The condition $\zeta / (\alpha \; \mu) \le \epsilon \; \delta / 2$ reveals a minimal feasibility condition for high-probability recovery of TTA problem.
It ensures that the systematic bias induced by misalignment of proxy loss is dominated by the target accuracy and probability requirements.
When this condition is violated, the proxy loss fails to provide reliable guidance toward the true objective, and no algorithm can guarantee $(\epsilon,\delta)$-recovery regardless of the recovery time.
\end{remark}

\begin{remark}[Logarithmic Dependence on $\Delta_W$]
The logarithmic factor $\log((\Delta_W+\epsilon)/\epsilon)$ arises from eliminating the initial excess error induced by the distribution shift. 
Under smoothness assumptions, the error decreases geometrically, requiring a logarithmic number of iterations to reduce the error from scale $\Delta_W$ to the target accuracy $\epsilon$.
\end{remark}

\begin{remark}[Near-Minimax Optimality of Baseline]
Up to the logarithmic factor, the upper bound matches the minimax lower bound in its dependence on the alignment strength $\alpha$, target excess error $\epsilon$, and batch size $B$, indicating that the baseline achieves the optimal recovery complexity.
The remaining logarithmic gap arises from initialization effects rather than suboptimal adaptation dynamics.
This sharp matching holds in the canonical TTA regime characterized by non-negligible temporal correlation, formally $C_\phi/B \ge G^2/\sigma^2$, which is the parameter range targeted by our analysis. In the i.i.d.-like opposite limit, where $C_\phi$ approaches its minimum and $B$ is large, the proxy-gradient norm contributes an additive constant warm-up factor that we absorb into the Big-O notation; this regime corresponds to the classical i.i.d.\ setting and is not the focus of this work.
\end{remark}

To summarize, the matching upper and lower bounds characterize the fundamental limits of test-time adaptation using proxy loss and gradient updates under the considered assumptions, \textbf{leaving little room for algorithmic improvement for test-time adaptation without strengthening the feedback or adding structural assumptions.}

%% file: Learnability.tex
\section{Learnability of Test-Time Adaptation}
\label{sec:learnability} 

In the previous section, we have established the $(\epsilon,\delta)$-recovery complexity of test-time adaptation by analyzing the minimal recovery time within each stationary sub-stream induced by a distribution shift, under our approximation framework in \cref{ass:distribution-shift}. 
These results provide a precise understanding of how TTA algorithms can recover to a target accuracy after an individual distribution shift.

Building upon the local recovery analysis, we now study the global behavior of test-time adaptation algorithms over the entire test stream.
To this end, we introduce a novel \emph{$(\epsilon,\rho)$-TTA learnability} framework that formalizes long-term reliability for TTA problem.
Our analysis connects piecewise recovery guarantees to cumulative performance over time and quantifies the effects introduced by the approximation framework used to model distributional shifts. In this way, recovery complexity is lifted to a principled notion of global learnability.
Finally, we relate our learnability framework to dynamic regret, clarifying both the connections and the fundamental distinctions between the TTA problem and standard learning problem settings. 

\begin{definition}[$(\epsilon, \rho)$-TTA Learnability]
\label{def:rho-learnability}
Consider a test-time adaptation problem defined over a test stream with underlying data distributions $\{\mathcal{P}_t\}_{t=1}^T$.
For a target excess error $\epsilon>0$ and a tolerance parameter $\rho \in (0,1)$, the problem is said to be $(\epsilon,\rho)$-learnable if there exists a TTA algorithm such that
\begin{equation}
    \frac{1}{T} \sum_{t=1}^T 
    \mathbb{P}\!\left( \ell_t(\theta_t) - R_t > \epsilon \right)
    \;\le\; \rho,
\end{equation}
where the probability is taken with respect to the algorithmic randomness and data sampling, and $\rho$ represents the maximum allowable fraction of time steps at which the target excess error $\epsilon$ is violated.
\end{definition}

\begin{remark}
The notion of $(\epsilon, \rho)$-TTA learnability quantifies the long-term reliability of TTA algorithms by controlling the fraction of time steps at which the target excess error $\epsilon$ is exceeded. 
Specifically, it ensures that the algorithm maintains acceptable performance for all but a $\rho$ fraction of the test stream. 
This formulation aligns closely with practical deployment scenarios, such as autonomous driving~\cite{nips22note} and financial trading~\cite{zhou25ftta}, where TTA algorithms operate in dynamical environments and must ensure low risk for the majority of the time.
\end{remark}

\subsection{From Recovery Complexity to Learnability}

In this section, we aim to bridge the gap between the local notion of $(\epsilon,\delta)$-recovery complexity in \cref{def:recovery-complexity} and the global notion of $(\epsilon,\rho)$-TTA learnability in \cref{def:rho-learnability}.

While recovery complexity provides an accurate bound on the adaptation time within a stationary sub-stream, extending these guarantees to the entire test stream presents the following two nontrivial challenges:
(a) The theoretical results on recovery complexity are built on a quantized approximation, assuming that the distribution shift is discrete, which may not hold for real-world data streams. How to connect the quantized approximation with the original stream?
(b) The quantized approximation inevitably introduces a mismatch between the approximated and true data distribution. How to quantify this discrepancy?

To address these challenges, we first connect the original and quantized streams via \cref{prop:greedy-shiftcount}, which establishes a relationship between the path variation $V_T$ of the original stream and the number of shifts in the quantized stream.
We then derive a pointwise discrepancy between the original and the quantized stream at each time step to accurately quantify the approximation error. Finally, we present \cref{thm:recovery-to-learnability}, which converts the $(\epsilon,\delta)$-recovery complexity upper bound of a TTA algorithm on the quantized stream into a learnability for this algorithm on the original stream. 

\begin{theorem}[From Recovery to Learnability]
\label{thm:recovery-to-learnability}
Suppose Assumptions~\ref{ass:alignment}, \ref{ass:loss}, \ref{ass:distribution-shift}, and \ref{ass:phi-mixing} hold, and define the bridging constant $\Lambda := L_x + GL_{\nabla\psi}/\mu$.
Fix a target excess error $\epsilon > \Lambda\Delta_W$ and a failure probability budget $\delta$, and set $\epsilon' := \epsilon - \Lambda\Delta_W$.
Let $\{\theta_t\}_{t=1}^T$ be the iterates of the TTA algorithm on the test stream $\{\mathcal P_t\}_{t=1}^T$, and suppose the algorithm attains an $(\epsilon',\delta)$-recovery complexity $\tau(\epsilon',\delta)$ on each stationary sub-stream of the quantized approximation.
Then the algorithm is $(\epsilon,\rho)$-learnable with respect to the competitive target $R_t=\ell_t(\theta_t^\star)$, where
\begin{equation}
\rho
\;\le\;
\delta
+
\frac{(\tilde K_S(T)+1)\,\tau(\epsilon',\delta)}{T},
\end{equation}
and $\tilde K_S(T)$ denotes the number of shifts in \cref{eq:shiftcount-bound}.
\end{theorem}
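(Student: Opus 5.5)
The plan is to split the horizon into the $\tilde K_S(T)+1$ stationary sub-streams of the quantized approximation $\{\tilde{\mathcal P}_t\}$ produced by \cref{alg:greedy-quantization}. We then bound the per-step excess risk on the original stream by the excess risk on the quantized stream plus a deterministic bridging error $\Lambda\Delta_W$. Finally we count the violating time steps.

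\textbf{Step 1 (pointwise bridging).} For each $t$, write $\tilde\ell_t,\tilde\psi_t$ for the expected losses under $\tilde{\mathcal P}_t$, $\tilde\theta_t^\star$ for the proxy-optimal competitor under $\tilde{\mathcal P}_t$, and $\tilde R_t=\tilde\ell_t(\tilde\theta_t^\star)$. Decompose
\begin{equation*}
\ell_t(\theta_t)-R_t=\bigl[\ell_t(\theta_t)-\tilde\ell_t(\theta_t)\bigr]+\bigl[\tilde\ell_t(\theta_t)-\tilde R_t\bigr]+\bigl[\tilde\ell_t(\tilde\theta_t^\star)-\ell_t(\tilde\theta_t^\star)\bigr]+\bigl[\ell_t(\tilde\theta_t^\star)-\ell_t(\theta_t^\star)\bigr].
\end{equation*}
By Kantorovich--Rubinstein duality and the $L_x$-Lipschitzness of $\ell(\theta;\cdot)$, the first and third brackets together are bounded by $L_x W_1(\mathcal P_t,\tilde{\mathcal P}_t)$. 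Careful bookkeeping is needed to keep the total at $L_x\Delta_W$ rather than $2L_x\Delta_W$; alternatively one can evaluate the competitor difference directly. The last bracket is a competitor-stability term. The gradient $\nabla\psi(\theta;\cdot)$ is $L_{\nabla\psi}$-Lipschitz in the data, so $\|\nabla\psi_t(\theta)-\nabla\tilde\psi_t(\theta)\|\le L_{\nabla\psi}W_1(\mathcal P_t,\tilde{\mathcal P}_t)$ uniformly on $\mathcal N_r(\theta_1)$. Both minimizers are interior and $\psi$ is $\mu$-strongly convex, so $\|\theta_t^\star-\tilde\theta_t^\star\|\le L_{\nabla\psi}W_1/\mu$. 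Multiplying by the gradient bound $G$ (mean-value theorem) turns this into a loss gap of $GL_{\nabla\psi}W_1/\mu$. Combining with \cref{eq:wasserstein-uniform} should give $\ell_t(\theta_t)-R_t\le \tilde{\mathcal E}_t+\Lambda\Delta_W$. Hence $\{\ell_t(\theta_t)-R_t>\epsilon\}\subseteq\{\tilde{\mathcal E}_t>\epsilon'\}$. The constant bookkeeping, i.e.\ matching $\Lambda$ exactly with the factor $1/2$ in \cref{eq:wasserstein-uniform} absorbing the doubled Lipschitz term, is the step I expect to be the main obstacle. Should $2L_x$ appear, I would use $W_1\le\Delta_W/2$ on both transport terms to recover $L_x\Delta_W$.

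\textbf{Step 2 (per-segment recovery).} Let the segments be $[s_k,s_{k+1})$, $k=0,\dots,\tilde K_S(T)$, each with constant $\tilde{\mathcal P}$. On each segment the algorithm is restarted (conceptually) at $\theta_{s_k}$, and the assumed $(\epsilon',\delta)$-recovery complexity applies. By \cref{def:recovery-complexity}, $\mathbb P(\tilde{\mathcal E}_u>\epsilon')\le\delta$ for all $u\ge s_k+\tau(\epsilon',\delta)$ within the segment. For the first $\tau$ steps of each segment we bound the probability trivially by $1$.

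\textbf{Step 3 (counting).} Summing over $t$,
\begin{equation*}
\sum_{t=1}^T\mathbb P(\ell_t(\theta_t)-R_t>\epsilon)\le\sum_{k=0}^{\tilde K_S(T)}\bigl(\tau(\epsilon',\delta)+\delta\,|{\rm segment}_k|\bigr)\le(\tilde K_S(T)+1)\tau(\epsilon',\delta)+\delta T.
\end{equation*}
Dividing by $T$ yields the stated $\rho$, with $\tilde K_S(T)$ controlled via \cref{prop:greedy-shiftcount}.
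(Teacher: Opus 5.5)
Your proposal is correct and follows essentially the paper's route: the same bridge through the intermediate point $\ell_t(\tilde\theta_t^\star)$ yields $(L_x+GL_{\nabla\psi}/(2\mu))\Delta_W\le\Lambda\Delta_W$, followed by the same recovery-window count (note the first and third brackets are \emph{each} bounded by $L_xW_1$, so together they give $2L_xW_1\le L_x\Delta_W$, not $L_xW_1$). The one point needing care is Step 2: as the paper states explicitly in its surrogate form of the hypothesis, the recovery guarantee must be posited for the actual original-stream iterates measured against $\tilde\ell_t(\tilde\theta_t^\star)$, rather than obtained from a conceptual restart, because within a segment the data come from $\mathcal P_t$ and not from the constant $\tilde{\mathcal P}_t$.
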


\begin{remark}[Feasibility Conditions]
The learnability guarantee in Theorem~\ref{thm:recovery-to-learnability} relies on two complementary feasibility conditions.
First, the condition $\epsilon > \Lambda\Delta_W$ with $\Lambda = L_x + GL_{\nabla\psi}/\mu$ ensures that the target excess error on the original stream is achievable under the approximation framework, accounting for the discrepancy between the original and quantized streams.
Second, the alignment condition
$\frac{\zeta}{\alpha \mu} \le \frac{(\epsilon - \Lambda\Delta_W)\delta}{2}$
ensures that the systematic error caused by misalignment between the proxy loss and target loss does not dominate the recovery process.
When this condition is violated, the proxy loss becomes insufficiently informative to guarantee recovery to accuracy $\epsilon$ with probability at least $1-\delta$, even on the quantized stream.
\end{remark}

\begin{remark}[Strong Transfer from Recovery to Learnability]
Theorem~\ref{thm:recovery-to-learnability} establishes a strong transfer principle from local recovery guarantees to global learnability. 
Specifically, any improvement in recovery complexity, such as faster recovery or fewer shifts in the quantized stream, directly translates into a smaller violation rate $\rho$. 
This result shows that recovery complexity is a fundamental building block governing the long-term reliability of TTA.
This result also provides a principled way to design TTA algorithms with global learnability guarantees by focusing on improving local recovery complexity, which is more tractable to analyze.
\end{remark}

\subsection{From Learnability to Dynamic Regret}

Having established a connection between $(\epsilon,\delta)$-recovery complexity and $(\epsilon,\rho)$-learnability, we have shown that the long-term reliability of a TTA algorithm can be guaranteed by its local recovery complexity on each stationary sub-stream of a quantized approximation. 

We now further connect our learnability framework with existing learning paradigms by relating it to dynamic regret, a central notion in online learning for non-stationary environments~\cite{shai12online, zhang18dol, zhao2024adaptivity}. This result clarifies both the connections and the fundamental differences between test-time adaptation problem and online learning problem.

\begin{theorem}[From Learnability to Dynamic Regret]
\label{thm:learnability-to-dynamic-regret}
Assume that the excess risk is uniformly bounded. There exists a constant $M>0$ such that
\begin{equation}
0 \le \ell_t(\theta_t) - R_t \le M
\quad \text{almost surely for all } t.
\end{equation}
If the problem is $(\epsilon,\rho)$-learnable in \cref{def:rho-learnability}, that is,
\begin{equation}
\frac{1}{T}\sum_{t=1}^T 
\mathbb P\!\left(\ell_t(\theta_t)-R_t>\epsilon\right)\le \rho,
\end{equation}
then the expected cumulative dynamic regret satisfies
\begin{equation}
\mathrm{Reg}(T)
:=\sum_{t=1}^T \mathbb E\!\left[\ell_t(\theta_t)-R_t\right]
\le T(\epsilon + M\rho),
\end{equation}
where $R_t$ denotes the comparator in \cref{def:competitive-target}.
\end{theorem}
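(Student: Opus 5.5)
The plan is a pointwise decomposition of the excess risk on the event where it exceeds $\epsilon$, followed by taking expectations and summing over time. Write $X_t := \ell_t(\theta_t) - R_t$. By hypothesis $0 \le X_t \le M$ almost surely.

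Fix $t$ and split according to the indicator $\mathbb I[X_t>\epsilon]$:
\begin{equation}
X_t = X_t\,\mathbb I[X_t\le\epsilon] + X_t\,\mathbb I[X_t>\epsilon] \le \epsilon + M\,\mathbb I[X_t>\epsilon].
\end{equation}
The first term is bounded by $\epsilon$, since $X_t \le \epsilon$ on that event and $X_t \ge 0$ (nonnegativity is needed so that dropping the indicator is valid). The second term is bounded using $X_t \le M$ almost surely. Both terms are measurable, so taking expectations over the algorithmic randomness and the data sampling gives
\begin{equation}
\mathbb E[X_t] \le \epsilon + M\,\mathbb P(X_t>\epsilon).
\end{equation}

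Next, sum over $t=1,\dots,T$ and use linearity of expectation:
\begin{equation}
\mathrm{Reg}(T) = \sum_{t=1}^T \mathbb E[X_t] \le T\epsilon + M\sum_{t=1}^T \mathbb P(X_t>\epsilon).
\end{equation}
By the $(\epsilon,\rho)$-learnability hypothesis in \cref{def:rho-learnability}, the last sum is at most $T\rho$. Substituting gives $\mathrm{Reg}(T)\le T(\epsilon+M\rho)$, which is the claim.

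There is no real obstacle here. The one point to check is that the probability in \cref{def:rho-learnability} and the expectation in $\mathrm{Reg}(T)$ are taken over the same joint law of the data stream and the algorithm's randomness. They are, because both are defined on the iterates $\theta_t$ of the single algorithm witnessing learnability, so the bound transfers directly. We do not need the assumptions on temporal dependence or the quantization, since the argument is a Markov-type truncation applied at each time step separately.
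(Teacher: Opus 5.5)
Your proof is correct and is essentially the paper's argument: split $X_t$ on the event $\{X_t>\epsilon\}$, bound the two pieces by $\epsilon$ and $M\,\mathbb I[X_t>\epsilon]$, take expectations, sum over $t$, and apply the learnability bound. One small correction to your aside: nonnegativity of $X_t$ is not needed for $X_t\,\mathbb I[X_t\le\epsilon]\le\epsilon$, because that term equals either $X_t\le\epsilon$ or $0<\epsilon$, so the upper bound holds without it.
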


\begin{remark}[Difference \#1 from Online Learning]
\label{rem:persistent-shifts}
\cref{thm:recovery-to-learnability} requires $\epsilon>\Lambda\Delta_W$, where
$\Lambda=L_x+GL_{\nabla\psi}/\mu$.
Thus, when the approximation discrepancy does not vanish, i.e., $\Delta_W=\Omega(1)$,
the theorem only certifies recovery at a constant excess-risk level.
Consequently, the regret conversion in \cref{thm:learnability-to-dynamic-regret}
yields at best a linear bound $\mathrm{Reg}(T)=\mathcal O(T)$ through this framework.
This reflects a key distinction from standard dynamic-regret analyses, which typically
obtain sublinear guarantees only under additional regularity of the comparator or
environment variation.
\end{remark}

\begin{remark}[Difference \#2 from Online Learning]
\label{rem:unlabeled-supervision}
TTA also differs from online learning in its supervision. The learner observes
unlabeled data and updates through a proxy loss rather than the task loss itself.
The alignment parameters $\alpha$ and $\zeta$ in \cref{ass:alignment} quantify this
information constraint. In particular, imperfect alignment introduces a systematic
bias that cannot be removed by increasing the recovery time alone.
When shifts occur frequently, e.g., $\tilde K_S(T)=\mathcal O(T)$, the accumulated
recovery cost $\tilde K_S(T)\tau(\epsilon',\delta)$ therefore reflects an intrinsic
cost of the unsupervised adaptation.
\end{remark}

\begin{remark}[Connection to Online Learning]
\label{rem:tta-to-ol}
The proposed framework recovers an online-learning-like regime when recovery effects
are negligible. If the stream has vanishing shifts and a bounded recovery time, i.e., $\Delta_W\to0$ and
$\tilde K_S(T)=o(T)$ with $\tau(\epsilon',\delta)=\mathcal O(1)$, then
\cref{thm:recovery-to-learnability} gives $\rho=o(1)$ for suitable $\delta=o(1)$.
With \cref{thm:learnability-to-dynamic-regret},
\[
\mathrm{Reg}(T)\le T(\epsilon+M\rho),
\]
so choosing $\epsilon=o(1)$ yields $\mathrm{Reg}(T)=o(T)$.
Thus, our framework connects to classical dynamic-regret analysis in benign regimes,
while exposing additional recovery and proxy-supervision costs in TTA.
\end{remark}

%% file: Experiments.tex
\section{Empirical Validation}
\label{sec:experiments}

We empirically validate our theoretical results in both controlled synthetic environments and real-world benchmarks. We summarize the main conclusions below and defer the experimental setup and detailed results to \cref{app:experiments}.

\subsection{Controlled Synthetic Environments}

We conduct controlled experiments on a one-dimensional recovery problem, with the setup detailed in \cref{app:syn-setup}.
The results in \cref{app:syn-exps} show that the measured recovery time $\tau$ follows the predicted scaling $\tau=\mathcal O(1/\alpha^2)$ and $\tau=\mathcal O(1/B)$, and 
stays above the minimax lower-bound in \cref{thm:minimax-lower-bound} and follows the same scaling as the algorithmic upper-bound in \cref{thm:upper-bound}, supporting the near-tightness.

\subsection{Real-world Benchmarks}

We also conduct experiments on the real-world benchmarks CIFAR-10-C, CIFAR-100-C, and ImageNet-C, with the setup detailed in \cref{app:real-setup}.
As reported in \cref{app:real-exps}, the empirical proxy-task alignment $\tilde\alpha$ is positive on every benchmark, and Tent~\cite{iclr21tent} accordingly improves accuracy on all $15$ corruptions of each dataset, confirming that the alignment assumption (\cref{ass:alignment}) underlying our analysis holds for a standard TTA algorithm.
Conversely, under strong temporal correlation the empirical alignment turns negative and Tent collapses, providing empirical evidence that the alignment is the core factor governing whether TTA succeeds in practice.

%% file: proof/Lemma-B-eff.tex
\section{Proof of Proposition~\ref{prop:greedy-shiftcount}}

\begin{proof}

Within any sub-stream anchored at time $a_k$, the algorithm sets
$\tilde{\mathcal P}_t=\mathcal P_{a_k}$ for all times $t$ in that sub-stream.
The sub-stream is maintained exactly while $W_1(\mathcal P_t,\mathcal P_{a_k})\le \Delta_W/2$.
Therefore, for all $t$,
\begin{equation}
W_1(\mathcal P_t,\tilde{\mathcal P}_t)=W_1(\mathcal P_t,\mathcal P_{a_k})\le \Delta_W/2,
\end{equation}
which proves \cref{eq:wasserstein-uniform}. Moreover, at any shift time $t_i$, denoting the previous anchor by $t_{i-1}$ (with $t_0:=1$ for the first sub-stream), the previous-step bound combined with the single-step regularity \cref{eq:single-step-bound} gives
\begin{equation}
W_1(\tilde{\mathcal P}_{t_i-1},\tilde{\mathcal P}_{t_i})
= W_1(\mathcal P_{t_{i-1}},\mathcal P_{t_i})
\le W_1(\mathcal P_{t_{i-1}},\mathcal P_{t_i-1}) + W_1(\mathcal P_{t_i-1},\mathcal P_{t_i})
\le \tfrac{\Delta_W}{2}+\tfrac{\Delta_W}{2}=\Delta_W,
\end{equation}
matching the shift-magnitude bound in \cref{ass:distribution-shift}.

Let the shift times be $2\le t_1 < t_2 < \cdots < t_m \le T$, i.e., $\tilde S_{t_i}=1$ for each $i\in[m]$, and thus $\tilde K_S(T)=m$.
Define also $t_0:=1$.
By the greedy rule, right before the shift at time $t_i$, the current anchor equals $t_{i-1}$,
and the shift is triggered because
\begin{equation}
\label{eq:trigger}
W_1(\mathcal P_{t_i},\mathcal P_{t_{(i-1)}})>\Delta_W/2.
\end{equation}
On the other hand, by the triangle inequality along the original path,
\begin{equation}
\label{eq:path-triangle}
W_1(\mathcal P_{t_{(i-1)}},\mathcal P_{t_i})
\;\le\;\sum_{t=t_{(i-1)}}^{t_i-1} W_1(\mathcal P_t,\mathcal P_{t+1}).
\end{equation}
Combining \eqref{eq:trigger} and \eqref{eq:path-triangle} yields, for each $i\in[m]$,
\begin{equation}
\label{eq:charge}
\sum_{t=t_{i-1}}^{t_i-1} W_1(\mathcal P_t,\mathcal P_{t+1})
\;>\;\Delta_W/2.
\end{equation}

The intervals $[t_{i-1},t_i)$ are disjoint and contained in $\{1,2,\dots,T-1\}$.
Summing \eqref{eq:charge} over $i=1,\dots,m$ gives
\[
m\cdot \frac{\Delta_W}{2}
\;<\;\sum_{i=1}^m \sum_{t=t_{i-1}}^{t_i-1} W_1(\mathcal P_t,\mathcal P_{t+1})
\;\le\;\sum_{t=1}^{T-1} W_1(\mathcal P_t,\mathcal P_{t+1})
\;=\;V_T.
\]
Hence $m < 2V_T/\Delta_W$, i.e.,
\[
\tilde K_S(T)=m \;\le\; \left\lceil \frac{2V_T}{\Delta_W}\right\rceil,
\]
which proves \eqref{eq:shiftcount-bound}.
\end{proof}

\section{Proof of Proposition~\ref{prop:B-eff}}
\label{proof:B-eff}

We first give the following \cref{lem:phi-cov} as the key lemma for proving \cref{prop:B-eff}.
\begin{lemma}[$\phi$-Mixing Covariance Control,~\citealp{bradley1986}]
\label{lem:phi-cov}
Let $\{Z_t\}_{t\in\mathbb{Z}}$ be a $\phi$-mixing process with coefficients $\phi(\cdot)$.
For any $t<s$ and any square-integrable real-valued, centered functions $f,g$
(i.e., $\mathbb{E}[f(Z_t)]=\mathbb{E}[g(Z_s)]=0$), it holds that
\begin{equation}
\label{eq:phi-cov-ineq}
\bigl|\mathrm{Cov}(f(Z_t),g(Z_s))\bigr|
\;\le\;
2\sqrt{\mathrm{Var}(f(Z_t))\,\mathrm{Var}(g(Z_s))}\;\phi(s-t)^{1/2}.
\end{equation}
\end{lemma}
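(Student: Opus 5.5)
The plan is the classical Ibragimov argument. First prove the inequality for simple random variables measurable with respect to the past and future $\sigma$-fields, then extend it to all of $L^2$ by approximation.

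Fix $t<s$ and write $i=s-t$. Let $X=f(Z_t)$ and $Y=g(Z_s)$. Then $X$ is $\sigma(\mathcal H_t)$-measurable and $Y$ is $\sigma(\mathcal H^T_{s})$-measurable. By the definition in \cref{ass:phi-mixing},
\[
|\mathbb P(Q\mid P)-\mathbb P(Q)|\le\phi(i)
\]
for all such events $P,Q$ with $\mathbb P(P)>0$. Since $X,Y$ are centered, $\|X\|_2^2=\mathrm{Var}(X)$ and $\|Y\|_2^2=\mathrm{Var}(Y)$. It therefore suffices to show, for square-integrable $X,Y$ of this type,
\[
|\mathbb E[XY]-\mathbb E X\,\mathbb E Y|\le 2\phi(i)^{1/2}\|X\|_2\|Y\|_2 .
\]

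\textbf{Step 1 (simple functions).} Take $X=\sum_k a_k\mathbb I_{A_k}$ and $Y=\sum_j b_j\mathbb I_{B_j}$, where $\{A_k\}$ is a finite partition in the past $\sigma$-field and $\{B_j\}$ is a finite partition in the future $\sigma$-field. Drop any $A_k$ with zero probability. Then
\[
\mathrm{Cov}(X,Y)=\sum_k a_k\,\mathbb P(A_k)\sum_j b_j\bigl(\mathbb P(B_j\mid A_k)-\mathbb P(B_j)\bigr).
\]
For each fixed $k$, split each term $|\mathbb P(B_j\mid A_k)-\mathbb P(B_j)|$ as a product of two square roots and apply Cauchy--Schwarz in $j$. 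This bounds the inner sum by
\[
\Bigl(\sum_j b_j^2\bigl(\mathbb P(B_j\mid A_k)+\mathbb P(B_j)\bigr)\Bigr)^{1/2}\Bigl(\sum_j|\mathbb P(B_j\mid A_k)-\mathbb P(B_j)|\Bigr)^{1/2}.
\]
Let $J^{\pm}$ be the indices where the difference is positive or negative, and take $Q=\bigcup_{j\in J^\pm}B_j$ in the mixing definition. This gives $\sum_j|\mathbb P(B_j\mid A_k)-\mathbb P(B_j)|\le 2\phi(i)$.

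Next apply Cauchy--Schwarz over $k$ with weights $\mathbb P(A_k)$. The first factor gives $\bigl(\sum_k a_k^2\mathbb P(A_k)\bigr)^{1/2}=\|X\|_2$. The second factor collapses, using $\sum_k\mathbb P(A_k)\mathbb P(B_j\mid A_k)=\mathbb P(B_j)$, to $\bigl(2\sum_j b_j^2\mathbb P(B_j)\bigr)^{1/2}=\sqrt2\,\|Y\|_2$. Altogether,
\[
|\mathrm{Cov}(X,Y)|\le\sqrt{2\phi(i)}\cdot\sqrt2\,\|X\|_2\|Y\|_2=2\phi(i)^{1/2}\|X\|_2\|Y\|_2 .
\]

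\textbf{Step 2 (approximation).} For general square-integrable $X,Y$, choose simple $X_n\to X$ and $Y_n\to Y$ in $L^2$ that are measurable with respect to the same $\sigma$-fields, for instance by truncating and discretizing. Covariance is continuous in $L^2\times L^2$, and so are the norms, so the inequality passes to the limit. The centering in the statement is used only to replace $\|\cdot\|_2$ by $\sqrt{\mathrm{Var}(\cdot)}$ in the final bound. Alternatively, apply the argument to $X-\mathbb EX$ and $Y-\mathbb EY$, which leaves the covariance unchanged.

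\textbf{Main obstacle.} The delicate step is the inner Cauchy--Schwarz split in Step 1. It must produce both the total-variation quantity bounded by $2\phi$ and a second factor that averages to $2\mathbb E Y^2$; that pairing is what yields the sharp constant $2$. I expect this bookkeeping to be the main difficulty, whereas the measurability and approximation in Step 2 are routine.
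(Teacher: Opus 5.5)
Your argument is correct. The paper gives no proof of this lemma. It imports the inequality from Bradley (1986) as a black box and uses it only inside the proof of \cref{prop:B-eff}, so your proposal differs from the paper mainly by being self-contained.

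What you wrote is the classical Ibragimov argument behind that citation, and each step checks out. Write $d_j:=\mathbb P(B_j\mid A_k)-\mathbb P(B_j)$. The partition identity for $\mathrm{Cov}(X,Y)$ is right, and so is the split $|d_j|=|d_j|^{1/2}|d_j|^{1/2}$ with $|d_j|\le \mathbb P(B_j\mid A_k)+\mathbb P(B_j)$. The bound $\sum_j|d_j|\le 2\phi(s-t)$ via the two unions $Q^{\pm}$ holds. The weighted Cauchy--Schwarz over $k$ does collapse to $2\,\mathbb E[Y^2]$, and the $L^2$ approximation is indeed routine.

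Writing the proof out makes explicit a few things the citation leaves implicit:
\begin{itemize}
\item It uses only a pair of $\sigma$-fields and the single bound $\sup|\mathbb P(Q\mid P)-\mathbb P(Q)|\le\phi(s-t)$. So it applies verbatim to the paper's finite-horizon definition in \cref{ass:phi-mixing}, not only to a $\mathbb Z$-indexed process.
\item It confirms the conditioning direction matches that definition: the past event $A_k$ is conditioned on, and the future union $Q^{\pm}$ is measured.
\item It shows the centering hypothesis is inessential beyond rewriting $\|\cdot\|_2$ as $\sqrt{\mathrm{Var}(\cdot)}$.
\end{itemize}

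The citation, in turn, buys brevity and points to the general H\"older-type form $|\mathrm{Cov}(X,Y)|\le 2\phi^{1/p}\|X\|_p\|Y\|_q$ with $1/p+1/q=1$. The paper needs only the case $p=q=2$. Running your same two steps with H\"older's inequality in place of Cauchy--Schwarz recovers the general form as well.
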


Then, we give the formal proof of \cref{prop:B-eff} as follows. 
\begin{proof}
Consider a mini-batch of size $B$ collected from a single time window
$\{Z_1,\dots,Z_B\}$ of a $\phi$-mixing process satisfying
Assumption~\ref{ass:phi-mixing}.
Let $\hat g_t\in\mathbb{R}^d$ be the per-sample (or per-instance) stochastic gradient,
and assume the centered gradient has bounded second moment:
\begin{equation}
\label{eq:grad-moment}
\mathbb{E}[\hat g_t]=\nabla\psi(\theta),\qquad
\mathbb{E}\bigl[\|\hat g_t-\nabla\psi(\theta)\|_2^2\bigr]\le \sigma^2.
\end{equation}
Define the batch-mean gradient estimator
$\bar g := \frac{1}{B}\sum_{t=1}^B \hat g_t$.
Then its mean-squared error satisfies
\begin{equation}
\label{eq:batch-var-Cphi}
\mathbb{E}\bigl[\|\bar g-\nabla\psi(\theta)\|_2^2\bigr]
\;\le\;
\frac{\sigma^2}{B}\,C_\phi,
\qquad
C_\phi := 1 + \frac{4\,\varrho^{1/2}}{1-\varrho^{1/2}}.
\end{equation}

Let $x_t := \hat g_t-\nabla\psi(\theta)$ so that
$\mathbb{E}[x_t]=0$ and $\mathbb{E}\|x_t\|_2^2\le \sigma^2$.
We bound the second moment of the average:
\begin{align}
\mathbb{E}\Bigl\|\frac{1}{B}\sum_{t=1}^B x_t\Bigr\|_2^2
&=
\frac{1}{B^2}\sum_{t=1}^B\sum_{s=1}^B \mathbb{E}\langle x_t,x_s\rangle
=
\frac{1}{B^2}\Bigl(
\sum_{t=1}^B \mathbb{E}\|x_t\|_2^2
+ 2\sum_{1\le t < s \le B}\mathbb{E}\langle x_t,x_s\rangle
\Bigr).
\label{eq:var-expand-vector}
\end{align}
By the uniform variance bound, $\mathbb{E}\|x_t\|_2^2\le\sigma^2$ for all $t$.
For the cross terms, fix $t<s$ and decompose into coordinates,
$\mathbb{E}\langle x_t,x_s\rangle = \sum_{i=1}^d \mathbb{E}[x_t^{(i)} x_s^{(i)}]$.
Applying Lemma~\ref{lem:phi-cov} to each scalar pair $(x_t^{(i)}, x_s^{(i)})$, summing over $i$, and using Cauchy--Schwarz on $\mathbb{R}^d$ together with $\sum_i \mathrm{Var}(x_t^{(i)}) = \mathbb{E}\|x_t\|_2^2$ (since $x_t$ is centered), we obtain
\begin{equation}
\label{eq:cross-bound}
\bigl|\mathbb{E}\langle x_t,x_s\rangle\bigr|
\;\le\;
2\sqrt{\mathbb{E}\|x_t\|_2^2\;\mathbb{E}\|x_s\|_2^2}\;\phi(s-t)^{1/2}
\;\le\;
2\sigma^2\,\phi(s-t)^{1/2}.
\end{equation}
Plugging \eqref{eq:cross-bound} into \eqref{eq:var-expand-vector} gives
\[
\mathbb{E}\Bigl\|\frac{1}{B}\sum_{t=1}^B x_t\Bigr\|_2^2
\le
\frac{1}{B^2}\Bigl(
B\sigma^2 + 4\sigma^2 \sum_{1\le t<s\le B}\phi(s-t)^{1/2}
\Bigr).
\]
Re-index by lag $\tau=s-t$ and use the geometric decay \eqref{eq:phi-geo}, so that $\phi(\tau)^{1/2}\le \varrho^{\tau/2}$:
\[
\sum_{1\le t<s\le B}\phi(s-t)^{1/2}
=
\sum_{\tau=1}^{B-1}(B-\tau)\phi(\tau)^{1/2}
\le
B\sum_{\tau=1}^{\infty}\varrho^{\tau/2}
=
B\cdot \frac{\varrho^{1/2}}{1-\varrho^{1/2}}.
\]
Therefore,
\[
\mathbb{E}\Bigl\|\frac{1}{B}\sum_{t=1}^B x_t\Bigr\|_2^2
\le
\frac{\sigma^2}{B^2}\Bigl(
B + 4B\cdot\frac{\varrho^{1/2}}{1-\varrho^{1/2}}
\Bigr)
=
\frac{\sigma^2}{B}\Bigl(1+\frac{4\varrho^{1/2}}{1-\varrho^{1/2}}\Bigr).
\]
Finally, since $\bar g-\nabla\psi(\theta)=\frac{1}{B}\sum_{t=1}^B x_t$, we obtain
\eqref{eq:batch-var-Cphi}, and hence $B_{\mathrm{eff}}=B/C_\phi$.
\end{proof}

%% file: proof/Theorem-Minimax-LB.tex
\section{Proof of Theorem~\ref{thm:minimax-lower-bound}}

\begin{proof}
We reduce the problem to a binary hypothesis testing framework and apply Le Cam's method~\cite{le2012asymptotic} to derive a minimax lower bound on the recovery time $\tau$. Specifically,
\begin{equation}
\inf_{\mathcal{A}} \max_{ i \in \{ 0 , 1 \} }
\mathbb{P}_{X^{(i)} \sim \mathcal{P}^{(i)}} \left [ L^{(i)} \left(\mathcal{A}(X^{(i)}) \right) > \epsilon \right ]
\ge \frac{1}{2} \bigl ( 1 - \mathrm{TV}(X^{(0)}, X^{(1)}) \bigr ).
\label{eq:lecam}
\end{equation}
Here, $X^{(i)}$ denotes the observation generated under the $i$-th hypothesis with underlying distribution $\mathcal{P}^{(i)}$, and $L^{(i)}$ is the corresponding loss function. We assume the success regions of the two hypotheses are disjoint.

In the following proof, we omit the time index $t$ for simplicity, assuming a shift occurs at time $t$ and considering the stationary distribution thereafter to compute the lower bound for $\tau$.

\subsubsection{Constructing the Binary Hypothesis Testing Problem}

We construct a hard instance over a simplified 1-D parameter space $\Theta\subseteq\mathbb R$ contained in $\mathcal N_r(\theta_1)$, on which we then verify that all assumptions of \cref{thm:minimax-lower-bound} hold; the lower bound therefore holds against \emph{any} algorithm that operates under those assumptions.
After a shift, the underlying distribution is either $\mathcal{P}^{(0)}$ or $\mathcal{P}^{(1)}$, and the hypothesis testing problem requires determining which distribution is active.

We first construct the hard instance. 
We \emph{define} the supervised loss as $\ell(\theta) = \frac{1}{2}(\theta - m)^2$, where $m$ is the location parameter of the underlying distribution (renamed from the conventional $\mu$ to avoid clash with the Polyak--\L{}ojasiewicz constant in \cref{ass:loss}):
\begin{itemize}
    \item Under $\mathcal{P}^{(0)}$, $m = 0$;
    \item Under $\mathcal{P}^{(1)}$, $m = \Delta$, where $\Delta := 2\rho_{\mathrm{sat}}$ with $\rho_{\mathrm{sat}} := \sqrt{\zeta/\alpha + 2\epsilon} + \sqrt{\zeta/\alpha}$.
\end{itemize}
We \emph{define} the gradient of the proxy loss as the additive form
\begin{equation}
    \nabla_\theta \psi(\theta) := \alpha \nabla_\theta \ell(\theta) + \xi = \alpha(\theta - m) + \xi, \label{eq:ht-proxy}
\end{equation}
with the explicit bias $\xi := \zeta/(2\Delta)$ for $\zeta > 0$ and $\xi := 0$ for $\zeta = 0$.
\Cref{eq:ht-proxy} is a \emph{design choice} for the hard instance, not a consequence of \cref{ass:alignment}.

To realize the two hypotheses within the TTA stream class of \cref{ass:distribution-shift}, fix any bounded zero-mean distribution $\nu$ on $\mathbb R$ with variance $\sigma^2/\alpha^2$, a deterministic label $y_0 \in \mathcal Y$, and set $\mathcal P^{(i)} := \mathrm{Law}(m_i + U,\, y_0)$ on $\mathcal X \times \mathcal Y$ with $U \sim \nu$ and $m_i \in \{0, \Delta\}$ (the label is constant so the loss reduces to a function of $x$ alone). The translation coupling $((U, y_0),\, (\Delta + U, y_0))$ in product metric yields
\begin{equation}
W_1(\mathcal P^{(0)}, \mathcal P^{(1)}) \;\le\; \mathbb E\bigl|U - (\Delta + U)\bigr| \;=\; \Delta \;=\; 2\rho_{\mathrm{sat}} \;\le\; \Delta_W
\end{equation}
by the theorem's shift-budget hypothesis, so the hard instance lies in the assumed stream class. With per-sample supervised loss $\ell(\theta; x) := \frac12(\theta - x)^2$ and per-sample proxy gradient $\nabla\psi(\theta; x) := \alpha(\theta - x) + \xi$, the population expectations under $X \sim \mathcal P^{(i)}$ are $\mathbb E[\ell(\theta; X) \mid m_i] = \frac12(\theta - m_i)^2 + \frac12\mathrm{Var}(U)$ and $\mathbb E[\nabla\psi(\theta; X) \mid m_i] = \alpha(\theta - m_i) + \xi$, matching the population-level objects in \cref{eq:ht-proxy} (the loss constant is irrelevant for gradients and excess risks). The per-sample gradient variance is $\alpha^2 \mathrm{Var}(U) = \sigma^2$, consistent with the bounded-variance condition in \cref{ass:loss}; choosing $\nu$ bounded (e.g., a truncated centered Gaussian) also yields the bounded-norm condition $\|\nabla\psi(\theta; x)\| \le G$ on $\mathcal N_r(\theta_1)$ for sufficiently large $G$.

By Definition~\ref{def:competitive-target}, the proxy-optimal competitors are obtained from $\nabla_\theta \psi(\theta^\star) = 0$:
under $\mathcal{P}^{(0)}$, $\theta^{\star(0)} = -\xi/\alpha$, and under $\mathcal{P}^{(1)}$, $\theta^{\star(1)} = \Delta - \xi/\alpha$.
We take $\theta_1$ at the midpoint, $\theta_1 := \Delta/2 - \xi/\alpha$, and $r := \Delta/2 + \gamma$ for any $\gamma \in (0, \Delta/2]$ (e.g., $\gamma = \Delta/100$), so that $\mathcal N_r(\theta_1) = [-\xi/\alpha - \gamma,\, \Delta - \xi/\alpha + \gamma]$ strictly contains both proxy optima in its interior as required by \cref{def:competitive-target}.

We then verify that the constructed hard instance satisfies our assumptions. 
We verify that the construction satisfies \cref{ass:alignment} and \cref{ass:loss}. Substituting \cref{eq:ht-proxy} into the alignment inequality,
\begin{align}
    \langle \nabla_\theta \psi(\theta), \nabla_\theta \ell(\theta) \rangle & \geq \alpha \| \nabla_\theta \ell(\theta) \|^2 - \zeta \\
    \langle \alpha \nabla_\theta \ell(\theta) + \xi, \nabla_\theta \ell(\theta) \rangle & \geq \alpha \| \nabla_\theta \ell(\theta) \|^2 - \zeta \\
    \langle \xi, \nabla_\theta \ell(\theta) \rangle & \geq -\zeta, \label{eq:xi-1}
\end{align}
which reduces to $\xi(\theta - m) \ge -\zeta$ for all $\theta \in \mathcal N_r(\theta_1)$ under each hypothesis $m \in \{0, \Delta\}$. With $\xi = \zeta/(2\Delta) > 0$ for $\zeta > 0$, the worst-case values of $\xi(\theta - m)$ over $\mathcal N_r(\theta_1) = [-\xi/\alpha - \gamma,\, \Delta - \xi/\alpha + \gamma]$ are:
\begin{itemize}
    \item Under $m = 0$: $\xi\cdot\theta$ is minimized at $\theta = -\xi/\alpha - \gamma$, giving $-\xi^2/\alpha - \xi\gamma$;
    \item Under $m = \Delta$: $\xi(\theta - \Delta)$ is minimized at $\theta = -\xi/\alpha - \gamma$, giving $-\xi^2/\alpha - \xi\gamma - \xi\Delta$.
\end{itemize}
Hence \cref{eq:xi-1} holds throughout $\mathcal N_r(\theta_1)$ iff $\xi\Delta + \xi^2/\alpha + \xi\gamma \le \zeta$. By the explicit choice $\xi = \zeta/(2\Delta)$ we have $\xi\Delta = \zeta/2$, $\xi^2/\alpha = \zeta^2/(4\alpha\Delta^2) \le \zeta/64$ since $\Delta = 2\rho_{\mathrm{sat}} \ge 4\sqrt{\zeta/\alpha}$, and $\xi\gamma \le \xi\cdot\Delta/2 = \zeta/4$ from $\gamma \le \Delta/2$; therefore
\begin{equation}
\xi\Delta + \xi^2/\alpha + \xi\gamma \;\le\; \zeta/2 + \zeta/64 + \zeta/4 \;=\; 49\zeta/64 \;<\; \zeta,
\end{equation}
with a strict slack of $15\zeta/64$. (For $\zeta = 0$, $\xi = 0$ and \cref{eq:xi-1} holds trivially.) The construction therefore satisfies \cref{ass:alignment} throughout $\mathcal N_r(\theta_1)$.

The construction is also consistent with \cref{ass:loss} under the instantiation $L=\mu=\alpha$ (and $L_x=1$): $\psi$ is $\mu$-strongly convex with $L$-Lipschitz gradient in $\theta$, and both $\ell$ and $\psi$ are $1$-Lipschitz in the data through the location parameter $m$, so the smoothness, strong convexity, and data-Lipschitz conditions in \cref{ass:loss} hold automatically; iterates remain in $\mathcal N_r(\theta_1)$ as required by the iterate-stability condition in \cref{ass:loss}. This particular choice $L=\mu=\alpha$ only fixes problem-dependent constants inside the construction and does not affect how the lower bound depends on $\alpha$, $\zeta$, $\epsilon$, $B$, or $C_\phi$.

\subsubsection{Computing the Separation Condition}

We now make explicit the constraint on $\xi$ implied by \cref{eq:xi-1} and use it to derive the separation condition. At the proxy optimum $\theta^\star$, $\nabla_\theta\psi(\theta^\star)=0$, so by \cref{eq:ht-proxy},
\begin{align}
    \alpha \nabla_\theta \ell(\theta^\star) + \xi &= 0
    \;\Longrightarrow\;
    \nabla_\theta \ell(\theta^\star) = -\frac{\xi}{\alpha}. \label{eq:xi-2}
\end{align}
Plugging \cref{eq:xi-2} into the alignment-derived inequality \cref{eq:xi-1} (evaluated at $\theta^\star$) gives the explicit constraint
\begin{align}
    \Big\langle \xi, -\frac{\xi}{\alpha} \Big\rangle &\geq -\zeta
    \;\Longleftrightarrow\;
    \frac{\xi^2}{\alpha} \leq \zeta, \label{eq:xi-3}
\end{align}
which our explicit $\xi = \zeta/(2\Delta)$ satisfies with strict slack ($\xi^2/\alpha \le \zeta/64$, cf.\ verification above). The calculation below uses \cref{eq:xi-3} as a clean upper bound on $|\xi|/\alpha$ (which holds for our $\xi$) to derive the saturated separation half-width $\rho_{\mathrm{sat}}$; with $\Delta = 2\rho_{\mathrm{sat}}$ as defined in the construction, \cref{eq:Delta-lb} below is saturated by construction.

Let $\theta$ denote the parameter estimated by the TTA algorithm $\mathcal{A}$. 
The excess risk (corresponding to $L$ in \cref{eq:lecam}) relative to the competitive target $R = \ell(\theta^\star)$ can be expressed using a Taylor expansion around $\theta^\star$. Since the loss function is quadratic, this expansion is exact:
\begin{align}
    \ell(\theta) - R & = \langle \nabla_\theta \ell(\theta^\star), \theta - \theta^\star \rangle + \frac{1}{2} \| \theta - \theta^\star \|^2 \\
    & = \langle -\frac{\xi}{\alpha}, \theta - \theta^\star \rangle + \frac{1}{2} \| \theta - \theta^\star \|^2,
\end{align}
where the second equality follows from substituting \cref{eq:xi-2}.

Let $\rho = \| \theta - \theta^\star \|$ denote the estimation error. We can lower bound the excess risk as:
\begin{align}
    \ell(\theta) - R & = \langle -\frac{\xi}{\alpha}, \theta - \theta^\star \rangle + \frac{1}{2} \| \theta - \theta^\star \|^2 \\
    & \geq - \frac{\|\xi\|}{\alpha} \rho + \frac{1}{2} \rho^2 \\
    & \geq - \sqrt{\frac{\zeta}{\alpha}} \rho + \frac{1}{2} \rho^2,
\end{align}
where the third inequality uses \cref{eq:xi-3}.

To ensure the excess risk satisfies $\ell(\theta) - R \geq \epsilon$, it suffices to constrain $\rho$ such that
\begin{equation}
    \frac{1}{2} \rho^2 - \sqrt{\frac{\zeta}{\alpha}} \rho - \epsilon \geq 0.
\end{equation}
Solving this quadratic inequality for $\rho \ge 0$, we find that the excess risk exceeds $\epsilon$ if:
\begin{equation}
    \rho \geq \sqrt{\frac{\zeta}{\alpha} + 2\epsilon} + \sqrt{\frac{\zeta}{\alpha}}.
\end{equation}

Thus, if the distance between the estimated parameter and the optimal parameter $\theta^{\star(i)}$ exceeds $\rho$, the excess error is larger than $\epsilon$ for the $i$-th hypothesis. 

To ensure that for any parameter $\theta$, the excess error is larger than $\epsilon$ for at least one hypothesis (either $\mathcal{P}^{(0)}$ or $\mathcal{P}^{(1)}$), we require the distance between $\theta^{\star(0)}$ and $\theta^{\star(1)}$ to be greater than $2\rho$. Since $\theta^{\star(0)}-\theta^{\star(1)}=-\Delta$, this is equivalent to
\begin{align}
    | \theta^{\star(0)} - \theta^{\star(1)} | & \geq 2 \rho \\
    \Delta & \geq 2 \sqrt{\frac{\zeta}{\alpha} + 2\epsilon} + 2 \sqrt{\frac{\zeta}{\alpha}}. \label{eq:Delta-lb}
\end{align}
By the construction, $\Delta = 2\rho_{\mathrm{sat}} = 2\sqrt{\zeta/\alpha+2\epsilon}+2\sqrt{\zeta/\alpha}$ saturates \cref{eq:Delta-lb}, yielding the hardest admissible instance.

\subsubsection{Effects of Proxy Loss and Temporal Correlation}

Let $D^{(0)}$ and $D^{(1)}$ denote the data sampled during the recovery phase under $\mathcal{P}^{(0)}$ and $\mathcal{P}^{(1)}$, respectively.
Under the stochastic proxy-gradient oracle model, the algorithm's data over the recovery phase consists of the sequence of batch-aggregated proxy gradients $\hat g_t = \tfrac1B\sum_{b=1}^B \nabla\psi(\theta_t; z_{t,b})$ for $t=1,\ldots,\tau$. Let $G^{(i)}_\psi$ (corresponding to $X^{(i)}$ in \cref{eq:lecam}) denote the joint distribution of $(\hat g_1,\ldots,\hat g_\tau)$ under hypothesis $i$.

The data-level distribution $\nu$ above pins down the population risks and the $W_1$ shift, while the lower bound is in the stochastic proxy-gradient oracle model; for the KL calculation, we therefore additionally specify the per-round oracle observation $\hat g_t$ as a Gaussian channel $\hat g\sim\mathcal N(\mu_\psi^{(i)},\sigma^2)$ (the limiting least-favorable case) matching the same population mean and admissible variance, decoupling the KL analysis from the specific shape of $\nu$; on the compact neighborhood $\mathcal N_r(\theta_1)$, with $G$ taken large enough that $|\mu_\psi^{(i)}|+c\sigma\le G$ for some fixed $c>0$, this Gaussian channel can be replaced by its symmetric truncation onto $\|\hat g\|\le G$, which satisfies the bounded-norm condition in \cref{ass:loss} and differs in KL from the Gaussian by at most $\mathcal O(e^{-c^2/2})$, a universal constant absorbed into $\Omega(\cdot)$ below. The two means are
\begin{equation}
    \begin{cases}
        \mu^{(0)}_\psi = \alpha \theta + \xi, \\
        \mu^{(1)}_\psi = \alpha (\theta - \Delta) + \xi,
    \end{cases}
    \qquad
    \mu^{(0)}_\psi - \mu^{(1)}_\psi = \alpha \Delta.
    \label{eq:gradient-mean}
\end{equation}

At each recovery step $t$, by \cref{prop:B-eff} the batch-aggregated proxy gradient $\hat g_t$ has variance at most $\sigma_\psi^2 := \sigma^2 C_\phi / B$, where $C_\phi$ is the correlation-inflation factor introduced in \cref{prop:B-eff}. We model each $\hat g_t \sim \mathcal{N}(\mu_\psi^{(i)}, \sigma_\psi^2)$ and treat the $\tau$ batches as independent across the recovery phase—an assumption that only weakens the lower bound by giving the algorithm more information. Saturating the per-round variance budget yields the hardest admissible instance.

By the chain rule for KL on the product distribution and the closed form for two Gaussians with common variance,
\begin{align}
    \text{KL}(G^{(0)}_{\psi}\,\|\,G^{(1)}_{\psi})
    = \tau \cdot \frac{(\mu^{(0)}_\psi - \mu^{(1)}_\psi)^2}{2 \sigma_\psi^2}
    = \frac{\tau B (\alpha \Delta)^2}{2 \sigma^2\, C_\phi}, \label{eq:kl-lb}
\end{align}
which is the smallest KL within the constructed family and hence the hardest instance for Le Cam.

\subsubsection{Computing the Minimax Lower Bound for $\tau$} 

Using Le Cam's method~\cite{le2012asymptotic}, the minimax error probability $\mathbb P_{\text{err}}$ is lower bounded by:
\begin{align}
    \mathbb P_{\text{err}} & \geq \frac{1}{2} \big (1 - \text{TV}(G^{(0)}_{\psi}, G^{(1)}_{\psi}) \big ) \\
    & \geq \frac{1}{2} \big (1 -  \sqrt{\frac{1}{2} \text{KL}(G^{(0)}_{\psi}, G^{(1)}_{\psi}) }\big ),
\end{align}
where the second inequality follows from Pinsker's inequality. 

For $(\epsilon, \delta, \tau)$-recovery learnability, we require $\mathbb P_{\text{err}} \le \delta$ with $\delta < 1/2$. Thus:
\begin{align}
     \delta & \geq \frac{1}{2} \big (1 -  \sqrt{\frac{1}{2} \text{KL}(G^{(0)}_{\psi}, G^{(1)}_{\psi}) }\big ) \\
     \sqrt{\frac{1}{2} \text{KL}(G^{(0)}_{\psi}, G^{(1)}_{\psi}) } & \geq 1 - 2 \delta \\
     \text{KL}(G^{(0)}_{\psi}, G^{(1)}_{\psi}) & \geq 2 (1 - 2 \delta)^2. \label{eq:kl-delta}
\end{align}

Combining \cref{eq:kl-lb}, \cref{eq:kl-delta}, and \cref{eq:Delta-lb}:
\begin{align}
    \frac{\tau B (\alpha \Delta)^2}{2 \sigma^2\, C_\phi} & \geq 2 (1 - 2 \delta)^2 \\
    \tau & \geq \frac{4 \sigma^2 C_\phi (1 - 2\delta)^2}{B \alpha^2 \Delta^2} \\
    \tau & \geq \frac{\sigma^2 C_\phi (1 - 2\delta)^2}{B \alpha^2 \big (\sqrt{\frac{\zeta}{\alpha} + 2\epsilon} + \sqrt{\frac{\zeta} {\alpha}} \big )^2} \\
    \tau & \geq \frac{\sigma^2 C_\phi (1 - 2\delta)^2}{B \alpha \big (\sqrt{\zeta + 2 \alpha \epsilon} + \sqrt{\zeta} \big )^2}.
\end{align}

This establishes the minimax lower bound:
\begin{equation}
    \tau \geq \Omega
    \left (
        \frac{\sigma^2\, C_\phi\, (1 - 2\delta)^2}{B \alpha \big (\sqrt{\zeta + 2 \alpha \epsilon} + \sqrt{\zeta} \big )^2}
    \right ).
\end{equation}

\end{proof}

%% file: proof/Theorem-Upper-Bound.tex
\section{Proof of Theorem \ref{thm:upper-bound}}

\paragraph{Key Lemma.}
The recovery analysis tracks the excess risk $\mathcal E_t=\ell_t(\theta_t)-R_t$ measured against the competitive target $R_t$, which is the task risk at the \emph{proxy} optimum $\theta_t^\star$ rather than at the true minimizer of $\ell_t$. The following lemma adapts the Polyak--\L{}ojasiewicz inequality to this competitive target: part~(a) converts a proxy-gradient descent step into a geometric contraction of the excess risk, while part~(b) supplies the negative error floor that makes the excess risk amenable to a Markov-type high-probability bound.

\begin{lemma}[PL relative to the proxy optimum]
\label{lem:pl-relative-to-proxy}
Suppose \cref{ass:alignment} and \cref{ass:loss} hold on $\mathcal N_r(\theta_1)$, and let $R_t := \ell_t(\theta_t^\star)$ with $\theta_t^\star \in \arg\min_{\theta\in\mathcal N_r(\theta_1)} \psi_t(\theta)$. Then for all $\theta \in \mathcal N_r(\theta_1)$:
\begin{enumerate}
\item[(a)] $\|\nabla\ell_t(\theta)\|^2 \ge 2\mu\,(\ell_t(\theta) - R_t)$;
\item[(b)] $\ell_t(\theta) - R_t \ge -\zeta/(2\alpha\mu)$.
\end{enumerate}
\end{lemma}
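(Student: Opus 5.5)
Both parts follow from the Polyak--\L{}ojasiewicz condition in \cref{ass:loss}, with \cref{ass:alignment} used once at the proxy optimum. I read the PL condition as the local statement
\begin{equation*}
\|\nabla\ell_t(\theta)\|^2 \ge 2\mu\bigl(\ell_t(\theta)-\ell_t^{\mathrm{loc}}\bigr)
\quad\text{for all }\theta\in\mathcal N_r(\theta_1),
\qquad
\ell_t^{\mathrm{loc}} := \inf_{\theta'\in\mathcal N_r(\theta_1)} \ell_t(\theta').
\end{equation*}
The plan is to compare the competitive target $R_t=\ell_t(\theta_t^\star)$ with this local minimum value from both sides.

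For part (a), I only need $R_t \ge \ell_t^{\mathrm{loc}}$. This holds because $\theta_t^\star \in \mathcal N_r(\theta_1)$ by \cref{def:competitive-target}. It gives $\ell_t(\theta)-R_t \le \ell_t(\theta)-\ell_t^{\mathrm{loc}}$, and the PL inequality then yields $\|\nabla\ell_t(\theta)\|^2 \ge 2\mu(\ell_t(\theta)-R_t)$. When $\ell_t(\theta)<R_t$ the right-hand side is negative and the claim holds trivially, so no case split is needed.

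For part (b), I bound $R_t$ from above by $\ell_t^{\mathrm{loc}}$ plus a small gap. Since $\theta_t^\star$ is an interior minimizer of the differentiable (strongly convex) function $\psi_t$ over $\mathcal N_r(\theta_1)$, we have $\nabla\psi_t(\theta_t^\star)=0$. Evaluating \cref{ass:alignment} at $\theta_t^\star$ gives $0 \ge \alpha\|\nabla\ell_t(\theta_t^\star)\|^2-\zeta$, so $\|\nabla\ell_t(\theta_t^\star)\|^2 \le \zeta/\alpha$. Applying the PL inequality at the point $\theta_t^\star$ then gives
\begin{equation*}
R_t-\ell_t^{\mathrm{loc}} \le \frac{\|\nabla\ell_t(\theta_t^\star)\|^2}{2\mu} \le \frac{\zeta}{2\alpha\mu}.
\end{equation*}
Finally, for any $\theta\in\mathcal N_r(\theta_1)$ we have $\ell_t(\theta)-R_t \ge \ell_t^{\mathrm{loc}}-R_t \ge -\zeta/(2\alpha\mu)$.

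The main obstacle is interpretive rather than computational: the PL condition in \cref{ass:loss} must be read with the local minimum value $\ell_t^{\mathrm{loc}}$ over $\mathcal N_r(\theta_1)$, not a global infimum. Part (a) depends on this because $R_t\ge\ell_t^{\mathrm{loc}}$ uses $\theta_t^\star\in\mathcal N_r(\theta_1)$. Part (b) depends on the matching lower bound $\ell_t(\theta)\ge\ell_t^{\mathrm{loc}}$ on the same neighborhood. If the assumption instead referred to a global minimum $\ell_t^{\mathrm{glob}}$, part (a) would still go through, since $R_t \ge \ell_t^{\mathrm{glob}}$. Part (b) would then require the step $\ell_t(\theta)\ge\ell_t^{\mathrm{glob}}$, which is also true. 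So either reading works, and I would state the local reading explicitly. The other point to state is interiority of $\theta_t^\star$, which \cref{def:competitive-target} guarantees and which is needed for $\nabla\psi_t(\theta_t^\star)=0$.
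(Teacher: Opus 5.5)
Your proposal is correct and matches the paper's proof. It compares $R_t$ with the local infimum of $\ell_t$ over $\mathcal N_r(\theta_1)$ and uses $\theta_t^\star\in\mathcal N_r(\theta_1)$ for (a); for (b) it evaluates the alignment inequality at the interior proxy optimum to get $\|\nabla\ell_t(\theta_t^\star)\|^2\le\zeta/\alpha$ and then applies PL there. The only difference is cosmetic: the paper splits (a) into the cases $\ell_t(\theta)<R_t$ and $\ell_t(\theta)\ge R_t$, which your direct chain of inequalities shows is unnecessary.
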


\begin{proof}
Let $\ell_t^{\inf} := \inf_{\theta\in\mathcal N_r(\theta_1)} \ell_t(\theta)$; since $\theta_t^\star \in \mathcal N_r(\theta_1)$, $R_t = \ell_t(\theta_t^\star) \ge \ell_t^{\inf}$.

\emph{Part (b).} At $\theta_t^\star$, $\nabla\psi_t(\theta_t^\star) = 0$, so \cref{ass:alignment} gives $0 \ge \alpha\|\nabla\ell_t(\theta_t^\star)\|^2 - \zeta$, hence $\|\nabla\ell_t(\theta_t^\star)\|^2 \le \zeta/\alpha$. The $\mu$-PL inequality on $\ell$ from \cref{ass:loss} at $\theta_t^\star$ then yields $R_t - \ell_t^{\inf} \le \|\nabla\ell_t(\theta_t^\star)\|^2/(2\mu) \le \zeta/(2\alpha\mu)$. For any $\theta \in \mathcal N_r(\theta_1)$, $\ell_t(\theta) - R_t \ge \ell_t^{\inf} - R_t \ge -\zeta/(2\alpha\mu)$.

\emph{Part (a).} If $\ell_t(\theta) < R_t$, the RHS is negative and the inequality is trivial. Otherwise, the $\mu$-PL inequality gives $\|\nabla\ell_t(\theta)\|^2 \ge 2\mu(\ell_t(\theta) - \ell_t^{\inf}) \ge 2\mu(\ell_t(\theta) - R_t)$, where the second inequality uses $\ell_t^{\inf} \le R_t$.
\end{proof}

\begin{proof}[Proof of \cref{thm:upper-bound}]
Let $\mathcal P^-$ and $\mathcal P^+$ denote the pre-shift and post-shift distributions, with $W_1(\mathcal P^-,\mathcal P^+)\le \Delta_W$, and write $\ell^-, R^-$ and $\ell^+, R^+$ for the corresponding task risks and competitive targets. Recovery starts at time $t=1$ from the initial parameter $\theta_1$, assumed to satisfy $\ell^-(\theta_1)-R^-\le\epsilon$. Since the post-shift distribution is stationary throughout recovery, we drop the time subscript and write $\ell\equiv\ell^+$, $\psi\equiv\psi^+$, $R\equiv R^+$ in what follows.
By $L$-smoothness of $\ell$ and the update $\theta_t=\theta_{t-1}-\eta \hat{\nabla}\psi(\theta_{t-1})$,
\begin{align}
\ell(\theta_t)
&\le
\ell(\theta_{t-1})
-\eta\left\langle \nabla \ell(\theta_{t-1}),\,\hat{\nabla}\psi(\theta_{t-1})\right\rangle
+\frac{L\eta^2}{2}\left\|\hat{\nabla}\psi(\theta_{t-1})\right\|^2.
\end{align}
Taking conditional expectation $\mathbb{E}_{t-1}[\cdot]$ over the batch randomness and using the conditional unbiasedness from \cref{def:tta-baseline}, we have $\mathbb{E}_{t-1}[\hat{\nabla}\psi]=\nabla\psi$ and hence
\begin{align}
\mathbb{E}_{t-1}[\ell(\theta_t)]
&\le
\ell(\theta_{t-1})
-\eta\left\langle \nabla \ell(\theta_{t-1}),\,\nabla\psi(\theta_{t-1})\right\rangle
+\frac{L\eta^2}{2}\mathbb{E}_{t-1}\!\left[\left\|\hat{\nabla}\psi(\theta_{t-1})\right\|^2\right].
\end{align}
By \cref{ass:alignment} and \cref{prop:B-eff}, we have
\begin{align}
\mathbb{E}_{t-1}[\ell(\theta_t)]
&\le
\ell(\theta_{t-1})
-\eta\bigl(\alpha\|\nabla \ell(\theta_{t-1})\|^2-\zeta\bigr)
+\frac{L\eta^2}{2}\left(\|\nabla\psi(\theta_{t-1})\|^2+\frac{\sigma^2 C_\phi}{B}\right).
\end{align}
Using the proxy-gradient bound in \cref{ass:loss} gives $\|\nabla\psi(\theta_{t-1})\|^2\le G^2$.
Subtracting the same competitive risk $R_t$ and taking total expectation,
\begin{align}
\mathbb{E}[\mathcal{E}_t]
&\le
\mathbb{E}[\mathcal{E}_{t-1}]
-\eta\alpha\,\mathbb{E}\|\nabla \ell(\theta_{t-1})\|^2
+\eta\zeta
+\frac{L\eta^2}{2}\left(G^2+\frac{\sigma^2 C_\phi}{B}\right).
\end{align}
By \cref{lem:pl-relative-to-proxy}(a), we have
$\|\nabla \ell(\theta_{t-1})\|^2 \ge 2\mu(\ell(\theta_{t-1})-R_{t-1}) = 2\mu\mathcal{E}_{t-1}$. Therefore, 
\begin{align}
\mathbb{E}[\mathcal{E}_t]
\le
(1-2\eta\alpha\mu)\mathbb{E}[\mathcal{E}_{t-1}]
+\eta\zeta
+\frac{L\eta^2}{2}\left(G^2+\frac{\sigma^2 C_\phi}{B}\right).
\end{align}
Unrolling yields
\begin{align}
\mathbb{E}[\mathcal{E}_t]
\le
(1-2\eta\alpha\mu)^{t-1} \big ( \ell(\theta_1) - R \big )
+\frac{\zeta}{2\alpha\mu}
+\frac{L\eta}{4\alpha\mu}\left(G^2+\frac{\sigma^2 C_\phi}{B}\right).
\end{align}

For the initial post-shift excess at $\theta_1$, decompose
\begin{align}
    \ell^+(\theta_{1}) - R^+
& = \bigl(\ell^+(\theta_{1}) - \ell^-(\theta_{1})\bigr) + \bigl(\ell^-(\theta_{1}) - R^-\bigr) + \bigl(R^- - R^+\bigr) \\
& \leq  | \ell^+(\theta_{1}) - \ell^-(\theta_{1}) | + \bigl(\ell^-(\theta_{1}) - R^-\bigr) + | R^- - R^+ | \\
& \leq L_x W_1(\mathcal{P}^-,\mathcal{P}^+) + \epsilon
   + \Big(L_x + \tfrac{G L_{\nabla\psi}}{\mu}\Big)\, W_1(\mathcal{P}^-,\mathcal{P}^+) \\
& \leq 2\Lambda\,\Delta_W + \epsilon, \label{eq:ee-to-ds}
\end{align}
where the first term uses \cref{lem:w1-to-risk}, the second uses the initial-error assumption, and the third uses \cref{lem:proxy-stability}. The constant $\Lambda = L_x + GL_{\nabla\psi}/\mu$ is the same bridging constant introduced in \cref{lem:true-bridge}, and we used $L_x\le \Lambda$ to combine the two Wasserstein terms.

Choose the learning rate $\eta = c\cdot\frac{B\alpha\mu\epsilon\delta}{L\sigma^2 C_\phi}$ for a fixed constant $c>0$ small enough that $\eta \le \frac{1}{4\alpha\mu}$ (ensuring $\tfrac12 \le 1-2\eta\alpha\mu < 1$). Then the noise term satisfies
\begin{equation}
    \frac{L\eta}{4\alpha\mu}\cdot\frac{\sigma^2 C_\phi}{B}
\le
\frac{\epsilon\delta}{8},
\end{equation}
and by our assumption that $\frac{\zeta}{\alpha \mu} \le \frac{\epsilon \delta}{2}$, the bias term is also $\frac{\zeta}{2\alpha\mu} \le \epsilon\delta/4$.

We now address the proxy-gradient term $\frac{L\eta}{4\alpha\mu}G^2$ separately. Since the temporally correlated TTA setting we analyze is characterized by non-negligible mixing, we restrict attention to the canonical regime $B\,G^2 \le \sigma^2 C_\phi$, under which the chosen learning rate satisfies the auxiliary bound $\eta\le \alpha\mu\epsilon\delta/(2LG^2)$, and consequently
\begin{equation}
    \frac{L\eta}{4\alpha\mu}\,G^2 \;\le\; \frac{\epsilon\delta}{8}.
\end{equation}
This regime, equivalent to $C_\phi/B \ge G^2/\sigma^2$, is precisely the TTA-relevant regime where temporal correlation effectively reduces the per-batch information; outside this regime (the i.i.d.-like large-$B$/weak-mixing limit) the additive $G^2$ term contributes a constant warm-up factor that we absorb into the overall Big-O constant.

To satisfy the recovery condition $\mathbb{P}(\mathcal{E}_\tau > \epsilon) \le \delta$, let $a := \zeta/(2\alpha\mu)$. By \cref{lem:pl-relative-to-proxy}(b), $\mathcal{E}_\tau \ge -a$, so $\mathcal{E}_\tau + a$ is nonnegative and Markov's inequality yields
\begin{equation}
    \mathbb P(\mathcal E_\tau>\epsilon) \le \frac{\mathbb E[\mathcal E_\tau]+a}{\epsilon+a}.
\end{equation}
Under the regime condition $\zeta/(\alpha\mu)\le \epsilon\delta/2$ we have $a\le \epsilon\delta/4$, so it suffices to ensure $\mathbb{E}[\mathcal{E}_\tau] \le 3\epsilon\delta/4$. The bias, noise, and proxy-gradient contributions account for at most $\epsilon\delta/4 + \epsilon\delta/8 + \epsilon\delta/8 = \epsilon\delta/2$, so the transient term must satisfy:
\begin{equation}
    (1-2\eta\alpha\mu)^{\tau-1} \big ( \ell(\theta_{1}) - R \big ) \;\le\; \tfrac{3\epsilon\delta}{4} - \tfrac{\epsilon\delta}{2} \;=\; \tfrac{\epsilon\delta}{4}.
\end{equation}

Therefore, it suffices to take
\begin{equation}
     \tau \;\geq\; 1 +
\frac{1}{2\eta\alpha\mu}\log\!\left(\frac{4 \big ( \ell(\theta_{1}) - R \big )}{\epsilon\delta}\right).
\end{equation}
By the above choice of $\eta$, $1/2\le 1-2\eta\alpha\mu<1$, so the transient term $(1-2\eta\alpha\mu)^{t-1}(\ell(\theta_1)-R)$ is non-increasing in $t$ while the bias and noise floors are $t$-independent; hence the bound $\mathbb E[\mathcal E_t]\le 3\epsilon\delta/4$ extends to all $t\ge\tau$. Markov's inequality applied at every such $t$ then yields $\sup_{t\ge\tau}\mathbb P(\mathcal E_t>\epsilon)\le\delta$, matching the uniform-after-$\tau$ form of \cref{def:recovery-complexity}. Consequently the $(\epsilon,\delta)$-recovery complexity satisfies
\begin{equation}
     \tau  = \mathcal{O}\left(
\frac{L \sigma^2 C_\phi}{2 \alpha^2 B \mu^2 \epsilon \delta}\log\!\left(\frac{4 \big ( \ell(\theta_{1}) - R \big )}{\epsilon\delta}\right)
\right)
\end{equation}

With \cref{eq:ee-to-ds}, we have $\ell(\theta_{1}) - R \leq 2\Lambda\Delta_W + \epsilon$. Then, the upper bound of $\tau$ is given by
\begin{equation}
     \tau  = \mathcal{O}\left(
\frac{L \sigma^2 C_\phi}{2 \alpha^2 B \mu^2 \epsilon \delta}\log\!\left(\frac{4(2\Lambda\Delta_W + \epsilon)}{\epsilon\delta}\right)
\right)
\end{equation}

We remove the constant factors from the Big-O notation to obtain the explicit-$\delta$ form
\begin{equation}
\label{eq:upper-bound-with-delta}
\tau = \mathcal{O}\left( \frac{C_\phi}{B \alpha^2 \epsilon \delta} \log \left( \frac{\Delta_W + \epsilon}{\epsilon \delta} \right) \right),
\end{equation}
which reduces to the form stated in \cref{thm:upper-bound} once $\delta$ is treated as a problem-dependent constant.
\end{proof}

%% file: proof/Theorem-Learnability.tex
\section{Proof of \cref{thm:recovery-to-learnability}}

\paragraph{The Recovery-Complexity Hypothesis.}
\cref{thm:recovery-to-learnability} assumes the TTA algorithm attains an $(\epsilon',\delta)$-recovery complexity $\tau(\epsilon',\delta)$ on each stationary sub-stream of the quantized approximation, in the sense of \cref{def:recovery-complexity}. The recovery complexity $\tau(\epsilon',\delta)$ is the quantity characterized for the TTA baseline of \cref{def:tta-baseline} by the upper bound of \cref{thm:upper-bound}. Since the algorithm runs on the original non-stationary stream $\{\mathcal P_t\}_{t=1}^T$ whereas the stationary sub-streams belong to the quantized approximation, this per-segment recovery is posited for the original-stream iterates measured against a quantized surrogate target, rather than derived from \cref{thm:upper-bound}; the precise surrogate formulation is given in \cref{lem:surrogate-recovery} below.

\paragraph{Surrogate Quantities.}
For each $t$, let $\tilde{\mathcal P}_t$ be the piecewise-constant quantized distribution of \cref{ass:distribution-shift}, and define the surrogate task and proxy losses
\begin{equation}
\tilde\ell_t(\theta):=\mathbb E_{(x,y)\sim\tilde{\mathcal P}_t}[\ell(\theta;x,y)],
\qquad
\tilde\psi_t(\theta):=\mathbb E_{x\sim\tilde{\mathcal P}_{t,X}}[\psi(\theta;x)],
\end{equation}
together with the surrogate proxy optimum
$\tilde\theta_t^\star\in\arg\min_{\theta\in\mathcal N_r(\theta_1)}\tilde\psi_t(\theta)$.
These quantities mirror the true losses $\ell_t,\psi_t$ and the proxy optimum $\theta_t^\star$ of \cref{def:competitive-target}, with the underlying distribution $\mathcal P_t$ replaced by its quantized approximation $\tilde{\mathcal P}_t$.
As for $\theta_t^\star$ in \cref{def:competitive-target}, we take the radius $r$ large enough that $\tilde\theta_t^\star$ is interior to $\mathcal N_r(\theta_1)$, so that the first-order condition $\nabla\tilde\psi_t(\tilde\theta_t^\star)=0$ holds; this interiority of both $\theta_t^\star$ and $\tilde\theta_t^\star$ is what \cref{lem:proxy-stability} invokes when applied to the pair $(\mathcal P_t,\tilde{\mathcal P}_t)$ within \cref{lem:true-bridge}.

\begin{lemma}[Surrogate Form of the Recovery Assumption]
\label{lem:surrogate-recovery}
Suppose the TTA algorithm satisfies the $(\epsilon',\delta)$-recovery complexity assumption of \cref{thm:recovery-to-learnability}, posited for the original-stream iterates $\{\theta_t\}_{t=1}^T$ measured against the quantized surrogate target. Then, on each stationary segment of the quantized approximation starting at time $s$ (with $s=1$ for the initial segment), this assumption takes the explicit surrogate form
\begin{align}
\label{eq:surrogate-recovery}
&\mathbb P\!\left(
\tilde\ell_t(\theta_t)-\tilde\ell_t(\tilde\theta_t^\star)>\epsilon'
\right)
\le \delta, \notag\\
&\quad\forall\,t\in[s+\tau(\epsilon',\delta),\, s_{\mathrm{next}}-1],
\end{align}
where $s_{\mathrm{next}}$ denotes the start of the next segment (or $T+1$ for the last segment).
\end{lemma}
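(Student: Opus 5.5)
This lemma is essentially an unpacking of \cref{def:recovery-complexity}, applied segment by segment to the quantized stream of \cref{ass:distribution-shift}. The plan is therefore to fix the objects and then match the definition term by term.

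First, fix a stationary segment of the quantized approximation. It starts at a shift time $s$ (with $\tilde S_s=1$, or $s=1$) and runs up to $s_{\mathrm{next}}-1$. On it, $\tilde{\mathcal P}_t\equiv\tilde{\mathcal P}_s$, so $\tilde\ell_t$, $\tilde\psi_t$ and $\tilde\theta_t^\star$ are all constant in $t$. This is exactly the setting of \cref{def:recovery-complexity}: a single shift at the segment's start, followed by a stationary distribution. The competitive target there becomes $\tilde R:=\tilde\ell_s(\tilde\theta_s^\star)$. It is well defined because interiority of $\tilde\theta_t^\star$ in $\mathcal N_r(\theta_1)$ was imposed when the surrogate quantities were defined.

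Second, re-index time by $u=t-s+1$, so that the segment starts at local time $1$ with initial parameter $\theta_s$. Under this shift the surrogate excess risk $\tilde{\mathcal E}_u:=\tilde\ell_t(\theta_t)-\tilde\ell_t(\tilde\theta_t^\star)$ plays the role of $\mathcal E_u$ in \cref{def:recovery-complexity}. The hypothesis of \cref{thm:recovery-to-learnability} says that the original-stream iterates, measured against this surrogate target, have $(\epsilon',\delta)$-recovery complexity at most $\tau(\epsilon',\delta)$. By the $\inf/\sup$ form of the definition, this gives $\sup_{u\ge\tau}\mathbb P(\tilde{\mathcal E}_u>\epsilon')\le\delta$. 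Translating back, $u\ge\tau$ corresponds to $t\ge s+\tau-1$. That is a weaker requirement than $t\ge s+\tau$, so the stated range $[s+\tau(\epsilon',\delta),\,s_{\mathrm{next}}-1]$ is covered. The argument only ever uses the times in the segment, so it does not need the supremum in the definition to range past $s_{\mathrm{next}}-1$.

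The main obstacle is interpretive rather than technical. The definition of recovery complexity is stated for a stream that stays stationary forever, whereas the segment is finite and the iterates are driven by the true $\mathcal P_t$ rather than $\tilde{\mathcal P}_t$. I would resolve this the way the statement itself does: read the posited hypothesis as holding for the actual iterates against the frozen surrogate target. Then the sup over $u\ge\tau$ applies at least to the times that lie inside the segment. If $s_{\mathrm{next}}-1<s+\tau$, the range is empty and the claim holds vacuously. Taking the union over all segments gives \cref{eq:surrogate-recovery}.
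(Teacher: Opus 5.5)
Your proposal is correct and follows the same route as the paper: instantiate the recovery-complexity definition on each block where $\tilde{\mathcal P}_t$ is constant, with the surrogate excess risk $\tilde\ell_t(\theta_t)-\tilde\ell_t(\tilde\theta_t^\star)$, and read the hypothesis as posited for the original-stream iterates. Your treatment of the re-indexing $u=t-s+1$ (which shows the stated range is conservative by one step), of finite segment length, and of the vacuous case makes precise details that the paper's short proof leaves implicit.
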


\begin{proof}
By \cref{ass:distribution-shift}, each maximal block of consecutive time steps on which $\tilde S_t=0$ is a segment over which $\tilde{\mathcal P}_t$ is constant, i.e.\ an exactly stationary post-shift distribution. Instantiating \cref{def:recovery-complexity} on such a segment with the surrogate task loss $\tilde\ell_t$ and the surrogate competitive target $\tilde\ell_t(\tilde\theta_t^\star)$, the $(\epsilon',\delta)$-recovery complexity $\tau(\epsilon',\delta)$ is precisely the smallest offset after which the surrogate excess error stays below $\epsilon'$ with probability at least $1-\delta$, which is \cref{eq:surrogate-recovery}. The iterates $\theta_t$ are generated by running the algorithm on the original stream $\{\mathcal P_t\}_{t=1}^T$; the quantized stream enters only through the surrogate losses $\tilde\ell_t$ and proxy-optimal targets $\tilde\theta_t^\star$ used for analysis.
\end{proof}

\paragraph{Key Lemmas.}
The bridge from the quantized surrogate to the true competitive target, established in \cref{lem:true-bridge} below, rests on two perturbation estimates under a $W_1$ shift of the underlying distribution: how the task and proxy risk \emph{values} respond to such a shift (\cref{lem:w1-to-risk}), and how the proxy optimum --- and hence the competitive target $R_t$ --- is displaced by it (\cref{lem:proxy-stability}). We establish these two estimates first.

\begin{lemma}[Risk Lipschitzness w.r.t. Data]
\label{lem:w1-to-risk}
For any $\theta\in\mathcal N_r(\theta_1)$, both $\ell(\theta;\cdot)$ and $\psi(\theta;\cdot)$ are $L_x$-Lipschitz w.r.t.\ data by \cref{ass:loss}. Therefore, by Kantorovich--Rubinstein duality,
\begin{equation}
    \big|\ell^{(0)}(\theta) - \ell^{(1)}(\theta)\big|
\;\le\;
L_x\, W_1(\mathcal P^{(0)},\mathcal P^{(1)}),
\qquad
\big|\psi^{(0)}(\theta) - \psi^{(1)}(\theta)\big|
\;\le\;
L_x\, W_1(\mathcal P^{(0)},\mathcal P^{(1)}),
\end{equation}
where $\ell^{(i)}(\theta) := \mathbb E_{(x,y)\sim \mathcal P^{(i)}}[\ell(\theta;x,y)]$ and $\psi^{(i)}(\theta) := \mathbb E_{x\sim\mathcal P^{(i)}_X}[\psi(\theta;x)]$.
\end{lemma}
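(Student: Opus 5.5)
The claim is a direct consequence of Kantorovich--Rubinstein duality. The plan is to reduce each difference of risks to the difference of expectations of a single Lipschitz function of the data under the two distributions, and then apply the dual characterization of $W_1$.

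Fix $\theta\in\mathcal N_r(\theta_1)$ and start with the task loss. Set $h(z):=\ell(\theta;z)$ for $z=(x,y)\in\mathcal X\times\mathcal Y$. By \cref{ass:loss}, $h$ is $L_x$-Lipschitz with respect to the metric on $\mathcal X\times\mathcal Y$ used to define $W_1$. We need $\ell(\theta;\cdot)$ to be Lipschitz jointly in $(x,y)$; the plan is to read the data-Lipschitz condition as a statement about this product metric, as the hard instance in the proof of \cref{thm:minimax-lower-bound} does. Kantorovich--Rubinstein duality gives
\[
W_1(\mathcal P^{(0)},\mathcal P^{(1)})=\sup_{\|f\|_{\mathrm{Lip}}\le 1}\Bigl(\mathbb E_{\mathcal P^{(0)}}f-\mathbb E_{\mathcal P^{(1)}}f\Bigr).
\]
Applying this to $f=\pm h/L_x$ (trivially if $L_x=0$) yields $|\ell^{(0)}(\theta)-\ell^{(1)}(\theta)|\le L_x W_1(\mathcal P^{(0)},\mathcal P^{(1)})$. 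Integrability poses no problem, since both risks are finite; for the duality one only needs finite first moments, which are implicit in $W_1<\infty$.

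For the proxy loss, $\psi(\theta;\cdot)$ depends only on $x$, so I would view it as the function $(x,y)\mapsto\psi(\theta;x)$ on the joint space. It remains $L_x$-Lipschitz there, because the product metric dominates the $x$-distance. Its expectation under $\mathcal P^{(i)}$ equals $\psi^{(i)}(\theta)$. The same duality argument then gives the bound with the joint $W_1$. Equivalently, one can use the fact that $W_1$ of the marginals is at most $W_1$ of the joints, since projecting a coupling is $1$-Lipschitz.

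There is no real obstacle here; the only point requiring care is the metric convention on $\mathcal X\times\mathcal Y$.
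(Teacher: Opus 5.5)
Your proposal is correct and follows the paper's approach. The paper justifies the lemma in one line by applying Kantorovich--Rubinstein duality to the $L_x$-Lipschitz maps $\ell(\theta;\cdot)$ and $\psi(\theta;\cdot)$; your treatment of the product metric on $\mathcal X\times\mathcal Y$, and of marginal versus joint $W_1$ for $\psi$, fills in details the paper leaves implicit. One small refinement: you only use the easy coupling direction, $|\mathbb E\, h(Z)-\mathbb E\, h(Z')|\le L_x\,\mathbb E\, d(Z,Z')$ for any coupling, so integrability of $h$ under both laws is enough, and you need not argue that finite first moments are implied by $W_1<\infty$ (which is not true in general outside the $\mathcal P_1$ convention).
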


\begin{lemma}[Stability of the Proxy Optimum under $W_1$ Perturbation]
\label{lem:proxy-stability}
Let $\mathcal P^{(0)},\mathcal P^{(1)}$ be two distributions with proxy-optimal parameters
$\theta^{\star (i)}\in\arg\min_{\theta\in\mathcal N_r(\theta_1)}\psi^{(i)}(\theta)$.
Under \cref{ass:loss},
\begin{equation}
\label{eq:proxy-stability-dist}
\|\theta^{\star (0)} - \theta^{\star (1)}\|
\;\le\;
\frac{L_{\nabla\psi}}{\mu}\, W_1(\mathcal P^{(0)},\mathcal P^{(1)}),
\end{equation}
and consequently, since $\ell$ inherits $G$-Lipschitzness in $\theta$ within $\mathcal N_r$ from the bounded-gradient condition in \cref{ass:loss},
\begin{equation}
\label{eq:proxy-stability-risk}
\big|\ell^{(0)}(\theta^{\star (0)}) - \ell^{(1)}(\theta^{\star (1)})\big|
\;\le\;
\Big(L_x + \tfrac{G L_{\nabla\psi}}{\mu}\Big)\, W_1(\mathcal P^{(0)},\mathcal P^{(1)}).
\end{equation}
\end{lemma}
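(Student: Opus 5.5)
The plan is the standard perturbation argument for minimizers of strongly convex functions, followed by a triangle inequality for the risk bound.

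\emph{Step 1: a gradient perturbation bound.} Write $W:=W_1(\mathcal P^{(0)},\mathcal P^{(1)})$. For fixed $\theta\in\mathcal N_r(\theta_1)$, every coordinate (or unit-vector projection) of $\nabla\psi(\theta;\cdot)$ is $L_{\nabla\psi}$-Lipschitz in the data by \cref{ass:loss}. Take an optimal coupling $\pi$ of the input marginals; the $W_1$ between input marginals is at most $W$, since projection to $\mathcal X$ is $1$-Lipschitz in the product metric. Then
\begin{equation*}
\|\nabla\psi^{(0)}(\theta)-\nabla\psi^{(1)}(\theta)\| = \bigl\|\mathbb E_\pi[\nabla\psi(\theta;x)-\nabla\psi(\theta;x')]\bigr\|\le \mathbb E_\pi\|\nabla\psi(\theta;x)-\nabla\psi(\theta;x')\|\le L_{\nabla\psi}W .
\end{equation*}
Using a coupling rather than Kantorovich--Rubinstein duality handles the vector-valued gradient directly. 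Interchanging gradient and expectation is justified by the bounded-gradient condition.

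\emph{Step 2: displacement of the optimum.} Both optima are interior, as in \cref{def:competitive-target}, so $\nabla\psi^{(i)}(\theta^{\star(i)})=0$. Each $\psi^{(i)}$ is $\mu$-strongly convex on the convex set $\mathcal N_r(\theta_1)$, because strong convexity of $\psi(\cdot;x)$ is preserved under expectation. Strong monotonicity of its gradient then gives
\begin{equation*}
\mu\|\theta^{\star(0)}-\theta^{\star(1)}\|^2\le\langle\nabla\psi^{(1)}(\theta^{\star(0)})-\nabla\psi^{(1)}(\theta^{\star(1)}),\theta^{\star(0)}-\theta^{\star(1)}\rangle=\langle\nabla\psi^{(1)}(\theta^{\star(0)})-\nabla\psi^{(0)}(\theta^{\star(0)}),\theta^{\star(0)}-\theta^{\star(1)}\rangle .
\end{equation*}
The equality uses both first-order conditions. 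Cauchy--Schwarz and Step 1 yield \cref{eq:proxy-stability-dist}.

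\emph{Step 3: the risk bound.} Split
\begin{equation*}
|\ell^{(0)}(\theta^{\star(0)})-\ell^{(1)}(\theta^{\star(1)})|\le|\ell^{(0)}(\theta^{\star(0)})-\ell^{(1)}(\theta^{\star(0)})|+|\ell^{(1)}(\theta^{\star(0)})-\ell^{(1)}(\theta^{\star(1)})| .
\end{equation*}
The first term is at most $L_xW$ by \cref{lem:w1-to-risk}. For the second term, the stochastic task gradients are bounded by $G$, so $\|\nabla\ell^{(1)}\|\le G$ on the convex ball $\mathcal N_r(\theta_1)$, and the mean value theorem makes $\ell^{(1)}$ $G$-Lipschitz there. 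Combined with Step 2, the second term is at most $GL_{\nabla\psi}W/\mu$, which gives \cref{eq:proxy-stability-risk}.

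\emph{Main obstacle.} The delicate point is Step 2. It requires the first-order conditions at both optima, so it relies on interiority; without it we would fall back on the variational inequality over the ball, which yields the same bound. It also requires strong convexity of the population $\psi^{(1)}$ on the ball. Both are supplied by the standing interiority convention and \cref{ass:loss}.
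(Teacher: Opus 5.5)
Your proof is correct and follows the paper's argument step for step. The paper likewise bounds the population proxy-gradient gap by $L_{\nabla\psi}W_1$, then uses strong monotonicity of $\nabla\psi^{(1)}$ with both first-order conditions and Cauchy--Schwarz, and finishes with the triangle-inequality split via \cref{lem:w1-to-risk} and $G$-Lipschitzness of $\ell^{(1)}$. Two small points in your favour: your coupling argument in Step~1 gives the Euclidean-norm constant $L_{\nabla\psi}$ directly, whereas the paper's coordinate-wise Kantorovich--Rubinstein step, read literally, would cost a factor $\sqrt{d}$ unless one uses unit-direction projections. Your remark that the variational inequality over the ball yields the same bound without interiority is also correct.
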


\begin{proof}
The interior optima satisfy $\nabla\psi^{(i)}(\theta^{\star (i)})=0$.
By Kantorovich--Rubinstein duality applied coordinate-wise to the $L_{\nabla\psi}$-Lipschitz map $x\mapsto \nabla\psi(\theta;x)$, the expected gradients satisfy
\begin{equation}
\big\|\nabla\psi^{(0)}(\theta) - \nabla\psi^{(1)}(\theta)\big\|
\le L_{\nabla\psi}\, W_1(\mathcal P^{(0)},\mathcal P^{(1)}),
\qquad \forall\,\theta\in\mathcal N_r(\theta_1).
\end{equation}
By $\mu$-strong convexity of $\psi^{(1)}$ in \cref{ass:loss}, the gradient of $\psi^{(1)}$ is $\mu$-strongly monotone: $\langle \nabla\psi^{(1)}(\theta^{\star (0)})-\nabla\psi^{(1)}(\theta^{\star (1)}),\theta^{\star (0)}-\theta^{\star (1)}\rangle \ge \mu\|\theta^{\star (0)}-\theta^{\star (1)}\|^2$. Using $\nabla\psi^{(1)}(\theta^{\star (1)})=0$ and Cauchy--Schwarz,
\begin{equation}
\mu\,\|\theta^{\star (0)}-\theta^{\star (1)}\|
\le \|\nabla\psi^{(1)}(\theta^{\star (0)})\|
=\|\nabla\psi^{(1)}(\theta^{\star (0)})-\nabla\psi^{(0)}(\theta^{\star (0)})\|
\le L_{\nabla\psi}\, W_1(\mathcal P^{(0)},\mathcal P^{(1)}),
\end{equation}
which proves \cref{eq:proxy-stability-dist}. Then
\begin{align}
|\ell^{(0)}(\theta^{\star (0)}) - \ell^{(1)}(\theta^{\star (1)})|
&\le
|\ell^{(0)}(\theta^{\star (0)}) - \ell^{(1)}(\theta^{\star (0)})|
+
|\ell^{(1)}(\theta^{\star (0)}) - \ell^{(1)}(\theta^{\star (1)})| \\
&\le L_x W_1(\mathcal P^{(0)},\mathcal P^{(1)}) + G\,\|\theta^{\star (0)} - \theta^{\star (1)}\|,
\end{align}
where the first term uses \cref{lem:w1-to-risk} and the second uses the bounded-gradient norm $G$. Substituting \cref{eq:proxy-stability-dist} yields \cref{eq:proxy-stability-risk}.
\end{proof}

\begin{lemma}[Bridge for Quantization and True Competitive Target]
\label{lem:true-bridge}
Under \cref{ass:distribution-shift} and \cref{ass:loss},
let $\theta_t^\star\in\arg\min_{\mathcal N_r(\theta_1)}\psi_t$ and
$\tilde\theta_t^\star\in\arg\min_{\mathcal N_r(\theta_1)}\tilde\psi_t$
be the proxy optima on $\mathcal P_t$ and $\tilde{\mathcal P}_t$, respectively.
Then for any (possibly random) iterate $\theta_t$ and the true competitive target
$R_t=\ell_t(\theta_t^\star)$, it holds uniformly over $t$ that
\begin{equation}
\ell_t(\theta_t)-R_t
\;\le\;
\big(\tilde\ell_t(\theta_t)-\tilde\ell_t(\tilde\theta_t^\star)\big)
\;+\;
\Lambda\,\Delta_W,
\qquad
\Lambda := L_x + \frac{G\, L_{\nabla\psi}}{\mu}.
\end{equation}
\end{lemma}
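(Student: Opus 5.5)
The plan is a deterministic, pointwise three-term telescoping argument: insert the surrogate quantities between $\ell_t(\theta_t)$ and $R_t$, then bound each perturbation term with the two $W_1$-perturbation lemmas and the uniform approximation bound \cref{eq:wasserstein-uniform}. Fix $t$ and a realization of $\theta_t$. By \cref{ass:loss} we have $\theta_t\in\mathcal N_r(\theta_1)$, so every lemma applies pathwise and the randomness of $\theta_t$ plays no role. Write
\begin{equation*}
\ell_t(\theta_t)-R_t
=\bigl(\ell_t(\theta_t)-\tilde\ell_t(\theta_t)\bigr)
+\bigl(\tilde\ell_t(\theta_t)-\tilde\ell_t(\tilde\theta_t^\star)\bigr)
+\bigl(\tilde\ell_t(\tilde\theta_t^\star)-\ell_t(\theta_t^\star)\bigr).
\end{equation*}
The middle term is exactly the surrogate excess error appearing in the statement, and it is left untouched.

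For the first term, I apply \cref{lem:w1-to-risk} to the pair $(\mathcal P_t,\tilde{\mathcal P}_t)$ at the fixed parameter $\theta_t$. Since $\ell(\theta;\cdot)$ is $L_x$-Lipschitz on the joint space $\mathcal X\times\mathcal Y$ and $W_1$ is taken on the joint space, Kantorovich--Rubinstein duality gives
\[
|\ell_t(\theta_t)-\tilde\ell_t(\theta_t)|\le L_x W_1(\mathcal P_t,\tilde{\mathcal P}_t)\le L_x\Delta_W/2,
\]
where the last step uses \cref{eq:wasserstein-uniform}.

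For the third term, I invoke \cref{lem:proxy-stability} with $\mathcal P^{(0)}=\tilde{\mathcal P}_t$ and $\mathcal P^{(1)}=\mathcal P_t$. The interiority of both $\theta_t^\star$ and $\tilde\theta_t^\star$ is guaranteed by the choice of $r$ made just before \cref{lem:surrogate-recovery}, so both first-order conditions hold. Using \cref{eq:proxy-stability-risk} and again \cref{eq:wasserstein-uniform},
\[
|\tilde\ell_t(\tilde\theta_t^\star)-\ell_t(\theta_t^\star)|\le \Lambda\, W_1(\mathcal P_t,\tilde{\mathcal P}_t)\le \Lambda\Delta_W/2.
\]

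Summing the three pieces gives the additive error $(L_x+\Lambda)\Delta_W/2$. Since $L_x\le\Lambda$, this is at most $\Lambda\Delta_W$, which is the claim. The bound holds uniformly in $t$ because \cref{eq:wasserstein-uniform} is a supremum over $t$.

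The only delicate point I anticipate is the third term. It requires both proxy optima to be interior so that the strong-monotonicity argument of \cref{lem:proxy-stability} applies; this is exactly why the interiority convention was imposed when the surrogate optimum was defined. The first term, by contrast, only needs the joint-space Lipschitz property of $\ell$ in the data.
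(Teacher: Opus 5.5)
Your proposal is correct and matches the paper's proof: you insert $\tilde\ell_t(\theta_t)$ and $\tilde\ell_t(\tilde\theta_t^\star)$, bound the two perturbation terms with \cref{lem:w1-to-risk} and \cref{lem:proxy-stability} using $W_1(\mathcal P_t,\tilde{\mathcal P}_t)\le\Delta_W/2$, and arrive at $(L_x+\tfrac{GL_{\nabla\psi}}{2\mu})\Delta_W\le\Lambda\Delta_W$. The only difference is cosmetic: you absorb the slack via $L_x\le\Lambda$, while the paper uses $GL_{\nabla\psi}/(2\mu)\le GL_{\nabla\psi}/\mu$, and these are the same inequality.
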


\begin{proof}
By \cref{lem:w1-to-risk} and
$\sup_t W_1(\mathcal P_t,\tilde{\mathcal P}_t)\le \Delta_W/2$, we have
\begin{equation}
\ell_t(\theta_t)\le \tilde\ell_t(\theta_t)+L_x\Delta_W/2.
\end{equation}
Applying \cref{lem:proxy-stability} to the pair $(\mathcal P_t,\tilde{\mathcal P}_t)$ with $W_1\le \Delta_W/2$,
\begin{equation}
\ell_t(\theta_t^\star)
\ge
\tilde\ell_t(\tilde\theta_t^\star)
- \Big(L_x + \tfrac{G L_{\nabla\psi}}{\mu}\Big)\cdot\tfrac{\Delta_W}{2}.
\end{equation}
Subtracting the two displays yields
\begin{equation}
\ell_t(\theta_t)-R_t
\le
\tilde\ell_t(\theta_t)-\tilde\ell_t(\tilde\theta_t^\star)
+ \Big(L_x + \tfrac{G L_{\nabla\psi}}{2\mu}\Big)\Delta_W
\le
\tilde\ell_t(\theta_t)-\tilde\ell_t(\tilde\theta_t^\star)
+ \Lambda\,\Delta_W,
\end{equation}
where the last step uses $G L_{\nabla\psi}/(2\mu)\le G L_{\nabla\psi}/\mu$ to obtain a single clean constant $\Lambda$.
\end{proof}

\paragraph{Proof of \cref{thm:recovery-to-learnability}.}
\begin{proof}
By \cref{lem:true-bridge},
\begin{equation}
\ell_t(\theta_t)-R_t>\epsilon
\;\Rightarrow\;
\tilde\ell_t(\theta_t)-\tilde\ell_t(\tilde\theta_t^\star)>\epsilon',
\end{equation}
with $\epsilon' = \epsilon-\Lambda\Delta_W$ and $\Lambda = L_x + GL_{\nabla\psi}/\mu$. Hence,
\begin{equation}
\mathbb P\big(\ell_t(\theta_t)-R_t>\epsilon\big)
\le
\mathbb P\big(
\tilde\ell_t(\theta_t)-\tilde\ell_t(\tilde\theta_t^\star)>\epsilon'
\big).
\end{equation}

Define $\mathcal T_{\mathrm{rec}}$ as the set of time steps that fall within
the $\tau(\epsilon',\delta)$-step recovery window following each shift, including the initial warm-up window $[1,\tau(\epsilon',\delta)]$ corresponding to the algorithm's start on the first stationary segment (since $\theta_1$ is initialized for the pre-stream distribution rather than $\tilde{\mathcal P}_1$, the first segment incurs a recovery window analogous to a shift at $t=1$).
By construction, the quantized stream has $\tilde K_S(T)+1$ stationary segments, each contributing at most $\tau(\epsilon',\delta)$ time steps to $\mathcal T_{\mathrm{rec}}$ (truncated by the segment length), hence $|\mathcal T_{\mathrm{rec}}|\le (\tilde K_S(T)+1)\,\tau(\epsilon',\delta)$.

For $t\notin\mathcal T_{\mathrm{rec}}$, the surrogate recovery guarantee \cref{eq:surrogate-recovery} of \cref{lem:surrogate-recovery} yields
$\Pr(\tilde\ell_t(\theta_t)-\tilde\ell_t(\tilde\theta_t^\star)>\epsilon')\le\delta$,
while for $t\in\mathcal T_{\mathrm{rec}}$ the probability is trivially bounded by $1$.
Averaging over $t=1,\dots,T$ yields the \emph{surrogate} learnability bound
\begin{equation}
\label{eq:surrogate-learnability}
\frac1T\sum_{t=1}^T
\mathbb P\big(\tilde\ell_t(\theta_t)-\tilde\ell_t(\tilde\theta_t^\star)>\epsilon'\big)
\le
\delta
+
\frac{(\tilde K_S(T)+1) \cdot \tau(\epsilon',\delta)}{T}.
\end{equation}
Finally, combining \cref{eq:surrogate-learnability} with the pointwise implication
$\mathbb P(\ell_t(\theta_t)-R_t>\epsilon)\le\mathbb P(\tilde\ell_t(\theta_t)-\tilde\ell_t(\tilde\theta_t^\star)>\epsilon')$
established above via \cref{lem:true-bridge} transfers the bound to the true loss:
\begin{equation}
\frac1T\sum_{t=1}^T
\mathbb P\big(\ell_t(\theta_t)-R_t>\epsilon\big)
\le
\delta
+
\frac{(\tilde K_S(T)+1) \cdot \tau(\epsilon-\Lambda\Delta_W,\delta)}{T},
\end{equation}
which proves the claim, i.e.,
\begin{equation}
    \rho \leq \delta + \frac{(\tilde K_S(T)+1) \cdot \tau(\epsilon-\Lambda\Delta_W,\delta)}{T}.
\end{equation}
\end{proof}

%% file: proof/Theorem-Regret.tex
\section{Proof of \cref{thm:learnability-to-dynamic-regret}}

\begin{proof}
Fix any $t\in\{1,\dots,T\}$ and define the nonnegative random variable
$X_t:=\ell_t(\theta_t)-R_t$.
By assumption, $0\le X_t\le M$ almost surely.

Decompose $X_t$ according to the event $\{X_t>\epsilon\}$:
\begin{equation}
X_t
=
X_t\mathbf 1\{X_t\le \epsilon\}
+
X_t\mathbf 1\{X_t>\epsilon\}.
\end{equation}
Taking expectations and using $X_t\le \epsilon$ on $\{X_t\le \epsilon\}$
and $X_t\le M$ on $\{X_t>\epsilon\}$ yields
\begin{equation}
\mathbb E[X_t]
\le
\epsilon\cdot \mathbb P(X_t\le \epsilon)
+
M\cdot \mathbb P(X_t>\epsilon)
\le
\epsilon + M\,\mathbb P(X_t>\epsilon).
\end{equation}
Summing over $t=1,\dots,T$ gives
\begin{equation}
\mathrm{Reg}(T)
=
\sum_{t=1}^T \mathbb E[X_t]
\le
T\epsilon
+
M\sum_{t=1}^T \mathbb P(X_t>\epsilon).
\end{equation}
Dividing and multiplying by $T$ and applying $(\epsilon,\rho)$-learnability,
\begin{equation}
\sum_{t=1}^T \mathbb P(X_t>\epsilon)
=
T\cdot
\frac{1}{T}\sum_{t=1}^T \mathbb P(X_t>\epsilon)
\le
T\rho.
\end{equation}
Therefore,
\begin{equation}
\mathrm{Reg}(T)\le T\epsilon + MT\rho = T(\epsilon+M\rho),
\end{equation}
which completes the proof.
\end{proof}

%% file: proof/Appendix-Experiments.tex
\section{Details of Empirical Validation}
\label{app:experiments}

This appendix provides the full experimental setups, results, and conclusion for the empirical validation summarized in \cref{sec:experiments}.

\subsection{Controlled Synthetic Experimental Setup}
\label{app:syn-setup}

We construct a one-dimensional recovery problem with task loss $\ell(\theta)=\tfrac12(\theta-\theta^\star)^2$, which is $\mu$-PL and $L$-smooth with $\mu=L=1$, and proxy gradient
$\nabla\psi(\theta)=\alpha\,\nabla\ell(\theta)+b+\mathcal N(0,\sigma^2/B)$,
where the fixed bias $b$ realizes the $(\alpha,\zeta)$-alignment of \cref{ass:alignment} through $\langle\nabla\psi,\nabla\ell\rangle\ge\alpha\|\nabla\ell\|^2-\zeta$, and $B$ acts as the batch size via the gradient-noise variance $\sigma^2/B$. A distribution shift is simulated by moving $\theta^\star$ from $0$ to $\Delta_W=3$ with $\sigma=3$. 
We set $\zeta=10^{-3}$ and $\epsilon=1$. 
We run the proxy-gradient baseline of \cref{def:tta-baseline} and report the recovery time $\tau$ averaged over $100$ runs.

\subsection{Controlled Synthetic Experiments}
\label{app:syn-exps}

\begin{table}[t]
\begin{minipage}[t]{0.38\textwidth}
\centering
\caption{Varying the alignment strength $\alpha$ at $B=16$. $\tau\cdot\alpha^2$ stays near-constant, matching the predicted $1/\alpha^2$ scaling.}
\label{tab:vary-alpha}
\begin{center}
    \begin{small}
        \begin{sc}
            \begin{tabular}{lcccc}
            \toprule
            $\alpha$ & LB & $\tau$ & UB & $\tau\cdot\alpha^2$ \\
            \midrule
            $0.05$ & $59.0$ & $322.0$ & $311.9$ & $0.81$ \\
            $0.10$ & $15.6$ & $77.0$  & $78.0$  & $0.77$ \\
            $0.20$ & $4.1$  & $19.0$  & $19.5$  & $0.76$ \\
            $0.50$ & $0.7$  & $4.0$   & $3.1$   & $1.00$ \\
            \bottomrule
            \end{tabular}
        \end{sc}
    \end{small}
\end{center}
\vskip -0.1in
\end{minipage}
\hfill
\begin{minipage}[t]{0.6\textwidth}
\centering
\caption{Varying the batch size $B$ at $\alpha=0.2$. $\tau\cdot B$ stays near-constant, matching the predicted $1/B$ scaling. The measured recovery time $\tau$ stays above the lower bound (LB) and matches the upper bound's order up to absorbed constants.}
\label{tab:varying-B}
\begin{center}
    \begin{small}
        \begin{sc}
            \begin{tabular}{lcccc}
            \toprule
            $B$ & LB & $\tau$ & UB & $\tau\cdot B$ \\
            \midrule
            $1$  & $65.2$ & $323.5$ & $311.9$ & $323.5$ \\
            $4$  & $16.3$ & $82.0$  & $78.0$  & $328.0$ \\
            $16$ & $4.1$  & $19.0$  & $19.5$  & $304.0$ \\
            $64$ & $1.0$  & $6.0$   & $4.9$   & $384.0$ \\
            \bottomrule
            \end{tabular}
        \end{sc}
    \end{small}
\end{center}
\vskip -0.1in
\end{minipage}
\end{table}

\paragraph{Recovery complexity follows the predicted laws.}
\cref{tab:vary-alpha} reports $\tau$ as we vary the alignment strength $\alpha$: the product $\tau\cdot\alpha^2$ stays near-constant, confirming the predicted $\tau=\mathcal O(1/\alpha^2)$ scaling.
\cref{tab:varying-B} reports $\tau$ as we vary the batch size $B$: the product $\tau\cdot B$ stays near-constant, confirming the predicted $\tau=\mathcal O(1/B)$ scaling.
Moreover, in both tables the measured $\tau$ stays above the lower bound (\cref{thm:minimax-lower-bound}) and matches the order of the upper bound (\cref{thm:upper-bound}) up to absorbed constants, empirically supporting the near-tightness of our matching bounds.

\subsection{Real-World Experimental Setup}
\label{app:real-setup}

We evaluate the TTA baseline Tent~\cite{iclr21tent} on the corruption benchmarks CIFAR-10-C, CIFAR-100-C, and ImageNet-C~\cite{iclr19corruptions}.
Our analysis reveals the proxy-task alignment $\alpha$ of \cref{ass:alignment} as the key quantity governing recovery. 
In the following experiments, we measure its empirical counterpart
$\tilde\alpha=\langle\nabla\ell_{\mathrm{ent}},\nabla\ell_{\mathrm{ce}}\rangle/\|\nabla\ell_{\mathrm{ce}}\|^2$ in practice, which is the observed alignment between the entropy proxy gradient and the cross-entropy task gradient. $\tilde\alpha>0$ indicates the positive supervision for TTA required by \cref{ass:alignment}.

\begin{table}[t]
\centering
\begin{minipage}[t]{0.4\textwidth}
\centering
\caption{Empirical proxy-task alignment $\tilde\alpha$ and Tent accuracy gain $\Delta$Acc., averaged over all $15$ corruption types. The alignment is positive on every benchmark, and Tent improves on all corruptions.}
\label{tab:real-align}
\begin{center}
    \begin{small}
        \begin{sc}
            \begin{tabular}{lccc}
            \toprule
            Dataset & $\tilde\alpha$ & $\Delta$Acc. & \#Improve \\
            \midrule
            CIFAR-10-C  & $+0.066$ & $+22.96\%$ & $15/15$ \\
            CIFAR-100-C & $+0.705$ & $+14.13\%$ & $15/15$ \\
            ImageNet-C  & $+0.171$ & $+14.66\%$ & $15/15$ \\
            \bottomrule
            \end{tabular}
        \end{sc}
    \end{small}
\end{center}
\vskip -0.1in
\end{minipage}
\hfill
\begin{minipage}[t]{0.55\textwidth}
\centering
\caption{ImageNet-C under Dirichlet temporal correlation with various $\beta$; smaller $\beta$ indicates stronger correlation. \#Imp. measures the number of domains where Tent improves performance.}
\label{tab:real-dirichlet}
\begin{center}
    \begin{small}
        \begin{sc}
            \begin{tabular}{lccc}
            \toprule
            $\beta$ & $\tilde\alpha$ & $\Delta$Acc. & \#Imp. \\
            \midrule
            $0.001$ & $-0.028$ & $-12.65\%$ & $0$  \\
            $0.01$  & $+0.090$ & $+3.61\%$  & $12$ \\
            $0.1$   & $+0.157$ & $+13.65\%$ & $15$ \\
            Uniform & $+0.171$ & $+14.66\%$ & $15$ \\
            \bottomrule
            \end{tabular}
        \end{sc}
    \end{small}
\end{center}
\vskip -0.1in
\end{minipage}

\end{table}

\subsection{Real-World Experiments}

\label{app:real-exps}

\cref{tab:real-align} reports $\tilde\alpha$ averaged over all $15$ corruption types. The empirical alignment is positive on every benchmark, and Tent improves accuracy on all $15/15$ corruptions of each dataset, with average gains of $+14\%$ to $+23\%$. Hence the alignment assumption underlying our analysis holds for a standard TTA algorithm across datasets of varying scale, and stronger alignment is associated with larger adaptation gains.

\cref{tab:real-dirichlet} further examines ImageNet-C streams generated by a Dirichlet distribution, where a smaller concentration $\beta$ induces stronger temporal correlation. Stronger temporal correlation lowers the empirical alignment $\tilde\alpha$: as $\beta$ decreases, $\tilde\alpha$ drops and eventually turns negative at $\beta=0.001$, at which point Tent no longer improves any of the $15$ corruptions and its accuracy degrades by $12.65\%$. The sign of $\tilde\alpha$ thus cleanly separates successful adaptation from collapse, providing a concrete empirical instance of our feasibility condition and a theoretical explanation for the model-collapse phenomenon reported in prior TTA studies.

%% file: ref.bib
@inproceedings{nips22note,
  author       = {Taesik Gong and
                  Jongheon Jeong and
                  Taewon Kim and
                  Yewon Kim and
                  Jinwoo Shin and
                  Sung{-}Ju Lee},
  title        = {{NOTE:} Robust Continual Test-time Adaptation Against Temporal Correlation},
  booktitle    = {Advances in Neural Information Processing Systems},
  year         = {2022},
  pages        = {27253--27266}
}

@inproceedings{iclr21tent,
  author       = {Dequan Wang and
                  Evan Shelhamer and
                  Shaoteng Liu and
                  Bruno A. Olshausen and
                  Trevor Darrell},
  title        = {Tent: Fully Test-Time Adaptation by Entropy Minimization},
  booktitle    = {Proceedings of the 9th International Conference on Learning Representations},
  year         = {2021}
}

@inproceedings{iclr19corruptions,
  author       = {Dan Hendrycks and
                  Thomas G. Dietterich},
  title        = {Benchmarking Neural Network Robustness to Common Corruptions and Perturbations},
  booktitle    = {Proceedings of the 7th International Conference on Learning Representations},
  year         = {2019}
}

@article{geiger12kitti,
  author       = {Andreas Geiger and
                  Philip Lenz and
                  Christoph Stiller and
                  Raquel Urtasun},
  title        = {Vision meets robotics: The {KITTI} dataset},
  journal      = {International Journal of Robotics Research},
  volume       = {32},
  number       = {11},
  pages        = {1231--1237},
  year         = {2013}
}

@Inbook{bradley1986,
  author="Bradley, Richard C.",
  title="Basic Properties of Strong Mixing Conditions",
  bookTitle="Dependence in Probability and Statistics: A Survey of Recent Results",
  year="1986",
  pages="165--192"
}

@article{panaretos2019wasserstein,
  title={Statistical aspects of {W}asserstein distances},
  author={Panaretos, Victor M and Zemel, Yoav},
  journal={Annual Review of Statistics and Its Application},
  volume={6},
  number={1},
  pages={405--431},
  year={2019}
}

@inproceedings{zhao23pitfalls,
  author       = {Hao Zhao and
                  Yuejiang Liu and
                  Alexandre Alahi and
                  Tao Lin},
  title        = {On Pitfalls of Test-Time Adaptation},
  booktitle    = {Proceedings of the 40th International Conference on Machine Learning},
  pages        = {42058--42080},
  year         = {2023},
}

@inproceedings{niu22eata,
  author       = {Shuaicheng Niu and
                  Jiaxiang Wu and
                  Yifan Zhang and
                  Yaofo Chen and
                  Shijian Zheng and
                  Peilin Zhao and
                  Mingkui Tan},
  title        = {Efficient Test-Time Model Adaptation without Forgetting},
  booktitle    = {Proceedings of the 39th International Conference on Machine Learning},
  pages        = {16888--16905},
  year         = {2022}
}

@inproceedings{zhou23ods,
  author       = {Zhi Zhou and
                  Lan{-}Zhe Guo and
                  Lin{-}Han Jia and
                  Dingchu Zhang and
                  Yufeng Li},
  title        = {{ODS:} Test-Time Adaptation in the Presence of Open-World Data Shift},
  booktitle    = {Proceedings of the 40th International Conference on Machine Learning},
  pages        = {42574--42588},
  year         = {2023},
}

@book{le2012asymptotic,
  title={Asymptotic methods in statistical decision theory},
  author={Le Cam, Lucien},
  year={2012},
  publisher={Springer Science \& Business Media}
}

@article{shai12online,
  author       = {Shai Shalev{-}Shwartz},
  title        = {Online Learning and Online Convex Optimization},
  journal      = {Foundations and Trends in Machine Learning},
  volume       = {4},
  number       = {2},
  pages        = {107--194},
  year         = {2012}
}

@inbook{bottou99online,
author = {Bottou, L\'{e}on},
title = {On-line learning and stochastic approximations},
year = {1999},
isbn = {0521652634},
booktitle = {Online Learning in Neural Networks},
pages = {9--42},
numpages = {34}
}

@article{Agarwal12sgd,
  author       = {Alekh Agarwal and
                  Peter L. Bartlett and
                  Pradeep Ravikumar and
                  Martin J. Wainwright},
  title        = {Information-Theoretic Lower Bounds on the Oracle Complexity of Stochastic
                  Convex Optimization},
  journal      = {IEEE Transactions on Information Theory},
  volume       = {58},
  number       = {5},
  pages        = {3235--3249},
  year         = {2012}
}

@inproceedings{zhou25ftta,
  author       = {Zhi Zhou and
                  Kun{-}Yang Yu and
                  Lan{-}Zhe Guo and
                  Yufeng Li},
  title        = {Fully Test-time Adaptation for Tabular Data},
  booktitle    = {Proceedings of the AAAI Conference on Artificial Intelligence},
  pages        = {23027--23035},
  year         = {2025},
}

@inproceedings{zhang18dol,
  author       = {Lijun Zhang and
                  Tianbao Yang and
                  Rong Jin and
                  Zhi{-}Hua Zhou},
  title        = {Dynamic Regret of Strongly Adaptive Methods},
  booktitle    = {Proceedings of the 35th International Conference on Machine Learning},
  volume       = {80},
  pages        = {5877--5886},
  year         = {2018}
}

@inproceedings{garg20ls,
  author       = {Saurabh Garg and
                  Yifan Wu and
                  Sivaraman Balakrishnan and
                  Zachary C. Lipton},
  title        = {A Unified View of Label Shift Estimation},
  booktitle    = {Advances in Neural Information Processing Systems},
  year         = {2020},
}

@article{sugi08cov,
  author       = {Masashi Sugiyama and
                  Matthias Krauledat and
                  Klaus{-}Robert M{\"{u}}ller},
  title        = {Covariate Shift Adaptation by Importance Weighted Cross Validation},
  journal      = {Journal of Machine Learning Research},
  volume       = {8},
  pages        = {985--1005},
  year         = {2007},
}

@inproceedings{liu2025testtime,
  title={Test-Time Adaptation by Causal Trimming},
  author={Yingnan Liu and Rui Qiao and Mong-Li Lee and Wynne Hsu},
  booktitle={Proceedings of the 39th Annual Conference on Neural Information Processing Systems},
  year={2025},
}

@inproceedings{gulrajani2021in,
  title={In Search of Lost Domain Generalization},
  author={Ishaan Gulrajani and David Lopez-Paz},
  booktitle={Proceedings of the 9th International Conference on Learning Representations},
  year={2021},
}

@inproceedings{niu2023towards,
  title={Towards Stable Test-time Adaptation in Dynamic Wild World},
  author={Shuaicheng Niu and Jiaxiang Wu and Yifan Zhang and Zhiquan Wen and Yaofo Chen and Peilin Zhao and Mingkui Tan},
  booktitle={Proceedings of the 11th International Conference on Learning Representations},
  year={2023},
}

@inproceedings{he2016deep,
  title={Deep residual learning for image recognition},
  author={He, Kaiming and Zhang, Xiangyu and Ren, Shaoqing and Sun, Jian},
  booktitle={Proceedings of the IEEE Conference on Computer Vision and Pattern Recognition},
  pages={770--778},
  year={2016}
}

@inproceedings{zhang2023adanpc,
  title={{AdaNPC}: Exploring non-parametric classifier for test-time adaptation},
  author={Zhang, Yifan and Wang, Xue and Jin, Kexin and Yuan, Kun and Zhang, Zhang and Wang, Liang and Jin, Rong and Tan, Tieniu},
  booktitle={Proceedings of the 40th International Conference on Machine Learning},
  pages={41647--41676},
  year={2023},
}

@inproceedings{boudiaf2022parameter,
  title={Parameter-free online test-time adaptation},
  author={Boudiaf, Malik and Mueller, Romain and Ben Ayed, Ismail and Bertinetto, Luca},
  booktitle={Proceedings of the IEEE/CVF Conference on Computer Vision and Pattern Recognition},
  pages={8344--8353},
  year={2022}
}

@inproceedings{wang22cotta,
  title={Continual test-time domain adaptation},
  author={Wang, Qin and Fink, Olga and Van Gool, Luc and Dai, Dengxin},
  booktitle={Proceedings of the IEEE/CVF Conference on Computer Vision and Pattern Recognition},
  pages={7191--7201},
  year={2022}
}

@inproceedings{gui2024atta,
  title={Active Test-Time Adaptation: Theoretical Analyses and An Algorithm},
  author={Shurui Gui and Xiner Li and Shuiwang Ji},
  booktitle={Proceedings of the 12th International Conference on Learning Representations},
  year={2024}
}

@inproceedings{zhang2024test,
  title={Test-time adaptation in non-stationary environments via adaptive representation alignment},
  author={Zhang, Zhen-Yu and Xie, Zhiyu and Yao, Huaxiu and Sugiyama, Masashi},
  booktitle={Advances in Neural Information Processing Systems},
  volume={37},
  pages={94607--94632},
  year={2024}
}

@inproceedings{zinkevich2003online,
  title={Online convex programming and generalized infinitesimal gradient ascent},
  author={Zinkevich, Martin},
  booktitle={Proceedings of the 20th International Conference on Machine Learning},
  pages={928--936},
  year={2003}
}

@inproceedings{zhang2018adaptive,
  title={Adaptive online learning in dynamic environments},
  author={Zhang, Lijun and Lu, Shiyin and Zhou, Zhi-Hua},
  booktitle={Advances in Neural Information Processing Systems},
  volume={31},
  year={2018}
}

@inproceedings{hazan2009efficient,
  title={Efficient learning algorithms for changing environments},
  author={Hazan, Elad and Seshadhri, Comandur},
  booktitle={Proceedings of the 26th International Conference on Machine Learning},
  pages={393--400},
  year={2009}
}

@inproceedings{foster2017parameter,
  title={Parameter-free online learning via model selection},
  author={Foster, Dylan J and Kale, Satyen and Mohri, Mehryar and Sridharan, Karthik},
  booktitle={Advances in Neural Information Processing Systems},
  year={2017}
}

@inproceedings{zhao2020dynamic,
  title={Dynamic regret of convex and smooth functions},
  author={Zhao, Peng and Zhang, Yu-Jie and Zhang, Lijun and Zhou, Zhi-Hua},
  booktitle={Advances in Neural Information Processing Systems},
  pages={12510--12520},
  year={2020}
}

@article{zhao2024adaptivity,
  title={Adaptivity and non-stationarity: Problem-dependent dynamic regret for online convex optimization},
  author={Zhao, Peng and Zhang, Yu-Jie and Zhang, Lijun and Zhou, Zhi-Hua},
  journal={Journal of Machine Learning Research},
  volume={25},
  number={98},
  pages={1--52},
  year={2024}
}

@inproceedings{zhang2025non,
  title={Non-stationary Online Learning for Curved Losses: Improved Dynamic Regret via Mixability},
  author={Zhang, Yu-Jie and Zhao, Peng and Sugiyama, Masashi},
  booktitle={Proceedings of the 42nd International Conference on Machine Learning},
  year={2025},
  pages={77044--77068}
}

@inproceedings{
    he2026priorfree,
    title={Prior-free Tabular Test-time Adaptation},
    author={Rundong He and Jieming Shi},
    booktitle={Proceedings of the 14th International Conference on Learning Representations},
    year={2026}
}

@inproceedings{
    ren2024tablog,
    title={{TabLog}: Test-Time Adaptation for Tabular Data Using Logic Rules},
    author={Weijieying Ren and Xiaoting Li and Huiyuan Chen and Vineeth Rakesh and Zhuoyi Wang and Mahashweta Das and Vasant G Honavar},
    booktitle={Proceedings of the 41st International Conference on Machine Learning},
    year={2024},
}

@inproceedings{hoang24PeTTA,
    author = {Hoang, Trung-Hieu and Vo, Duc Minh and N., Minh},
    booktitle = {Advances in Neural Information Processing Systems},
    pages = {123402--123442},
    title = {Persistent Test-time Adaptation in Recurring Testing Scenarios},
    year = {2024}
}

@inproceedings{bar24poem,
    author = {Bar, Yarin and Shaer, Shalev and Romano, Yaniv},
    booktitle = {Advances in Neural Information Processing Systems},
    pages = {85467--85499},
    title = {Protected Test-Time Adaptation via Online Entropy Matching: A Betting Approach},
    year = {2024}
}

@inproceedings{yuan2023robust,
  title={Robust test-time adaptation in dynamic scenarios},
  author={Yuan, Longhui and Xie, Binhui and Li, Shuang},
  booktitle={Proceedings of the IEEE/CVF Conference on Computer Vision and Pattern Recognition},
  pages={15922--15932},
  year={2023}
}

@inproceedings{sun2020ttt,
  author       = {Yu Sun and
                  Xiaolong Wang and
                  Zhuang Liu and
                  John Miller and
                  Alexei A. Efros and
                  Moritz Hardt},
  title        = {Test-Time Training with Self-Supervision for Generalization under
                  Distribution Shifts},
  booktitle    = {Proceedings of the 37th International Conference on Machine Learning},
  pages        = {9229--9248},
  year         = {2020}
}

@inproceedings{liu21ttt,
  author = {Liu, Yuejiang and Kothari, Parth and van Delft, Bastien and Bellot-Gurlet, Baptiste and Mordan, Taylor and Alahi, Alexandre},
  booktitle = {Advances in Neural Information Processing Systems},
  pages = {21808--21820},
  title = {{TTT++}: When Does Self-Supervised Test-Time Training Fail or Thrive?},
  year = {2021}
}

@article{liang2025survey,
  author       = {Jian Liang and
                  Ran He and
                  Tieniu Tan},
  title        = {A Comprehensive Survey on Test-Time Adaptation Under Distribution
                  Shifts},
  journal      = {International Journal of Computer Vision},
  volume       = {133},
  number       = {1},
  pages        = {31--64},
  year         = {2025}
}

@article{wang25survey,
  author       = {Zixin Wang and
                  Yadan Luo and
                  Liang Zheng and
                  Zhuoxiao Chen and
                  Sen Wang and
                  Zi Huang},
  title        = {In Search of Lost Online Test-Time Adaptation: {A} Survey},
  journal      = {International Journal of Computer Vision},
  volume       = {133},
  number       = {3},
  pages        = {1106--1139},
  year         = {2025}
}

@article{xiao24survey,
  author       = {Zehao Xiao and
                  Cees G. M. Snoek},
  title        = {Beyond Model Adaptation at Test Time: {A} Survey},
  journal      = {CoRR},
  volume       = {abs/2411.03687},
  year         = {2024}
}

@inproceedings{jia2026quantitave,
  title     = {Quantitative Estimation of Target Task Performance from Unsupervised Pretext Task in Semi/Self-Supervised Learning},
  author    = {Jia, Lin-Han and Han, Si-Yu and Hu, Wen-Chao and Shao, Jie-Jing and Wei, Wen-Da and Zhou, Zhi and Guo, Lan-Zhe and Li, Yu-Feng},
  booktitle = {Proceedings of the 43rd International Conference on Machine Learning},
  year      = {2026}
}

@inproceedings{you2026pianist,
  title     = {Pianist Transformer: Towards Expressive Piano Performance Rendering via Scalable Self-Supervised Pre-Training},
  author    = {You, Hong-Jie and Shao, Jie-Jing and Yang, Xiao-Wen and Jia, Lin-Han and Guo, Lan-Zhe and Li, Yu-Feng},
  booktitle = {Proceedings of the 43rd International Conference on Machine Learning},
  year      = {2026}
}

@inproceedings{jia2026vtbench,
  title     = {{VT-Bench}: A Unified Benchmark for Visual-Tabular Multi-Modal Learning},
  author    = {Jia, Zi-Yi and Cheng, Zi-Jian and Zhang, Xin-Yue and Yu, Kun-Yang and Zhou, Zhi and Li, Yu-Feng and Guo, Lan-Zhe},
  booktitle = {Proceedings of the 43rd International Conference on Machine Learning},
  year      = {2026}
}

@inproceedings{shao2026lifting,
  title     = {Lifting Traces to Logic: Programmatic Skill Induction with Neuro-Symbolic Learning for Long-Horizon Agentic Tasks},
  author    = {Shao, Jie-Jing and Yin, Haiyan and Lyu, Yueming and Yu, Xingrui and Guo, Lan-Zhe and Tsang, Ivor and Kwok, James and Li, Yu-Feng},
  booktitle = {Proceedings of the 43rd International Conference on Machine Learning},
  year      = {2026}
}

@inproceedings{zhang24adaprompt,
    author    = {Zhang, Ding-Chu and Zhou, Zhi and Li, Yu-Feng},
    title     = {Robust Test-Time Adaptation for Zero-Shot Prompt Tuning},
    booktitle = {Proceedings of the 38th AAAI Conference on Artificial Intelligence},
    pages     = {16714--16722},
    year      = {2024}
}

@inproceedings{shu22tpt,
 author = {Shu, Manli and Nie, Weili and Huang, De-An and Yu, Zhiding and Goldstein, Tom and Anandkumar, Anima and Xiao, Chaowei},
 booktitle = {Advances in Neural Information Processing Systems},
 pages = {14274--14289},
 title = {Test-Time Prompt Tuning for Zero-Shot Generalization in Vision-Language Models},
 year = {2022}
}

@inproceedings{tta25llm,
  title = 	 {Test-Time Learning for Large Language Models},
  author =       {Hu, Jinwu and Zhang, Zitian and Chen, Guohao and Wen, Xutao and Shuai, Chao and Luo, Wei and Xiao, Bin and Li, Yuanqing and Tan, Mingkui},
  booktitle = 	 {Proceedings of the 42nd International Conference on Machine Learning},
  pages = 	 {24823--24849},
  year = 	 {2025},
}

@inproceedings{zhou25rpc,
 author = {Zhou, Zhi and Yuhao, Tan and Li, Zenan and Yao, Yuan and Guo, Lan-Zhe and Li, Yu-Feng and Ma, Xiaoxing},
 booktitle = {Advances in Neural Information Processing Systems},
 pages = {87380--87413},
 title = {A Theoretical Study on Bridging Internal Probability and Self-Consistency for LLM Reasoning},
 year = {2025}
}
